\newcommand{\Learner}{\mathsf{Lrn}}
 \newcommand{\To}{\mapsto}
\newcommand{\bits}{\{0,1\}}
 \newcommand{\Exp}{\operatorname*{\mathbb{E}}}
\newcommand{\Ex}{\Exp}
\newcommand{\ol}{\overline}
\newcommand{\ERR}{\mathsf{ERR}}
\newcommand{\set}[1]{\left\{ #1 \right\}}
\newcommand{\remove}[1]{}
\newcommand{\cH}{{\mathcal H}}
\newcommand{\cS}{{\mathcal S}}
\newcommand{\cX}{{\mathcal X}}
\newcommand{\cY}{{\mathcal Y}}
\newcommand{\ifcomments}{\iftrue}
\newcommand{\Mnote}[1]{{\color{teal} [{\bf  Mohammad:}  #1]}}
\newcommand{\Inote}[1]{{\color{brown} [{\bf  Idan:}  #1]}}
\newcommand{\Mnote}[1]{}
\newcommand{\Inote}[1]{}
\newcommand{\SPV}{\mathsf{SPV}}
\newcommand{\PSPV}{\mathsf{PSPV}}
\newcommand{\Maj}{\mathsf{Maj}}
\newcommand{\Learn}{\mathsf{Lrn}}
\newcommand{\Adv}{\mathsf{Adv}}
\newcommand{\OPT}{\mathsf{OPT}}
\newcommand{\Agn}{\mathsf{Agn}}
\newcommand{\dist}{\mathsf{d_H}}
\newcommand{\VC}{\mathtt{VC}}
\newcommand{\eps}{\varepsilon}
\newcommand{\E}{{\mathbb{E}}}
\newcommand{\N}{{\mathbb{N}}}
\newtheorem{theorem}{Theorem}[section]
\newtheorem*{theorem*}{Theorem}
\newtheorem{proposition}[theorem]{Proposition}
\newtheorem{lemma}[theorem]{Lemma}
\newtheorem{corollary}[theorem]{Corollary}
\newtheorem{observation}[theorem]{Observation}
\newtheorem{remark}{Remark}
\theoremstyle{definition}
\newtheorem{definition}[theorem]{Definition}
\title{On Optimal Learning Under Targeted Data Poisoning}
\author{%
Steve Hanneke \\
Purdue University, USA \\
\texttt{steve.hanneke@gmail.com}
\And
Amin Karbasi \\
Yale University, USA and Google Research \\
\texttt{amin.karbasi@yale.edu }
\And
Mohammad Mahmoody \\
University of Virginia, USA \\
\texttt{mohammad@virginia.edu}
\And
Idan Mehalel \\
Technion, Israel \\
\texttt{idanmehalel@gmail.com}
\And
Shay Moran \\
Technion, Israel and Google Research \\
\texttt{shaymoran1@gmail.com}
  %David S.~Hippocampus\thanks{Use footnote for providing further information
  %  about author (webpage, alternative address)---\emph{not} for acknowledging
  %  funding agencies.} \\
  %Department of Computer Science\\
  %Cranberry-Lemon University\\
  %Pittsburgh, PA 15213 \\
  %\texttt{hippo@cs.cranberry-lemon.edu} \\
  % examples of more authors
  % \And
  % Coauthor \\
  % Affiliation \\
  % Address \\
  % \texttt{email} \\
  % \AND
  % Coauthor \\
  % Affiliation \\
  % Address \\
  % \texttt{email} \\
  % \And
  % Coauthor \\
  % Affiliation \\
  % Address \\
  % \texttt{email} \\
  % \And
  % Coauthor \\
  % Affiliation \\
  % Address \\
  % \texttt{email} \\
}
\begin{document}

\maketitle

\begin{abstract}%
Consider the task of learning a hypothesis class $\cH$ in the presence of 
    an adversary that can replace up to an $\eta$ fraction of the examples in the training set with arbitrary adversarial examples. 
    The adversary aims to fail the learner on a particular target test point $x$ which is \emph{known} to the adversary but not to the learner. In this work we aim to characterize the smallest achievable error $\eps=\eps(\eta)$ by the learner in the presence of such an adversary in both realizable and agnostic settings. %and b) explore whether such error rates can be achieved by a proper learner. 
We fully achieve this in the realizable setting, 
    proving that $\eps=\Theta(\VC(\cH)\cdot \eta)$,
    where $\VC(\cH)$ is the VC dimension of $\cH$. 
    Remarkably, we show that the upper bound can be attained by a deterministic learner.
    % Thus, in the worst-case, the adversary can only force an error proportional 
    % to the VC dimension and the corruption budget $\eta$. 
%
In the agnostic setting we reveal a more elaborate landscape:
    we devise a deterministic learner with a multiplicative regret guarantee of
    $\eps \leq  C\cdot\mathtt{OPT} + O(\VC(\cH)\cdot \eta)$,
    where $C > 1$ is a universal numerical constant.
    We complement this by showing that for any deterministic learner there is an attack which worsens its error to at least $2\cdot \mathtt{OPT}$. 
    This implies that a multiplicative deterioration in the regret is unavoidable in this case.
    % In contrast with the realizable case, we show that no deterministic algorithm can enjoy \emph{additive} regret guarantees:
    % we show that for any deterministic algorithm there is an attack 
    % which deteriorates its error at least twice the error rate of the best in class. 
    % On the positive side, we devise an deterministic learning rule
    % whose loss is at most a constant factor times the error rate of the best in class plus a small additive term
    % (thus complementing the impossibility result).
    % The situation for the agnostic case is more involved. We prove that no \textit{deterministic} algorithm can achieve an arbitrary small error with respect to the best  hypothesis in class $\cH$. In contrast, we develop a deterministic algorithm that can achieve twice the error rate of the best in class, with an arbitrary small additive error, thus proving that semi-agnostic learning is possible.  
%
Finally, the algorithms we develop for achieving the optimal rates are inherently improper. Nevertheless, we show that for a variety of natural concept classes,
    such as linear classifiers, it is possible to retain the dependence
    $\eps=\Theta_{\mathcal{H}}(\eta)$ by a proper algorithm in the realizable setting. Here $\Theta_{\cH}$ conceals a polynomial dependence on $\VC(\cH)$. 

\end{abstract}
%\tableofcontents

\section{Introduction}

A basic goal in machine learning is to develop a predicting model from labeled examples (i.e., training data) that can reliably generalize to unseen examples (i.e., test data). In its simplest form, namely, binary classification, a learner $\Learn$ is given a training set $S = \{(x_1,y_1), \dots, (x_n,y_n)\}$, usually assumed to be i.i.d. samples drawn from an unknown distribution $D$ of labeled examples where $x_i$'s are the domain instances (or data points) and $y_i\in\{0,1\}$ are the labels.  The aim is to produce  a   mapping $h = \Learn(\cS)$ that predicts the labels of fresh examples $(x,y) \sim D$ as accurately as possible, i.e., to minimize the population loss $L_D(h) = \Pr_{(x,y) \sim D}[h(x)\neq y]$. This classical setting has been extensively studied in the last half a century.
This accumulated work resulted in fundamental mathematical characterizations regarding the nature of learnability  when the training samples  are truly i.i.d without any tampering by an adversary \citep*{shalev2014understanding}. The goal of this paper is to offer a similar characterization in the presence of an adversary who can tamper with a subset of the training data. 
%It is well-known  that $o_n()$-error classification (using sufficiently large sample $S$) in this setting is possible, if and only if the function $f$ that computes the true labels $(x,f(x)) \sim D$ comes from a ``hypothesis'' set $\cH$ of small VC dimension \cite{xyz}. 
%\textcolor{red}{Shay: I would omit the last sentence as it may antagonize some readers:
%the VC dimension characterization applies to the PAC model which has been criticized for focusing on the worst-case. It is true that our results apply in the PAC model, but I suggest a different narrative which can be summarized as follow: despite an extensive line of work on learning in the presence of adversarially poisoned data, the basic problem of characterizing the optimal achievable error remains open, even in the classical and basic PAC learning model\ldots We solve this problem\ldots} \Mnote{sure; removed.}

With the emergence of sensitive machine learning applications, it is critical to ensure the trustworthy of such predictive models in the non-ideal scenarios. In this paper, we consider \emph{robust} learnability when the training examples can be altered by an adversary whose goal is to make sure that a target test point will be predicted incorrectly.
For instance, a language model trained on conversations in shopping forums can be attacked by marketing campaigns, who may want a specific product to be associated with a positive experience, instead of a bad one.
Another example is an adversary who aims to fool a self-driving car to speed up once it observes a stop sign. If such an adversary can somehow influence the training sets used for training the decision rules, she has all the reasons to strategically change them with the specific goal of misleading the self-driving car.
As another example, consider a loan applicant who wants to make sure that his loan will be granted. If he can somehow change the training set used by the bank, he might be able to make his application approved. 
%\textcolor{red}{Perhaps in one example use ``he'' rather than ``she''? some readers are sensitive and might not like the fact that both the adversary and deciever are women\ldots} \Mnote{I also have seen people using "it" for the adversary, which is fine i think.}
Note that the training set is the lens through which a learning algorithm obtains information about the underlying learning process. Therefore, once we allow the training examples to be tampered with by an adversary, even slightly, unexpected outcomes may take place. To quantify the robustness of learning algorithms, in this paper, we show how much the outcome of a learning algorithm for a particular target test can be trusted once the training set is being altered. 

\textbf{(PAC) learning under instance-targeted poisoning.} More formally, we consider an adversary  $\Adv$ that is allowed to \emph{replace} an $\eta$-fraction of the training sample $S$, resulting to a tampered training sample $S'$ given to the learning algorithm $\Learn$. Note that even though the training sample $S$ is drawn i.i.d. from a distribution $D$, the tampered training sample $S'$ does not enjoy this property anymore. 
%\Inote{I guess that it should be stressed here that our results only applicable to replacements attacks?}
Such attackers are also called  \emph{poisoning} adversaries \citep*{barreno2006can}, and variants of them are previously studied under the name of \emph{malicious} noise \citep*{valiant1985learning,kearns1993learning} or \emph{nasty} noise \citep*{bshouty2002pac}. More specifically, we study poisoning settings in which the adversarial perturbation of the original sample $S$ \emph{can also depend on the final test instance} $x$. Due to the adversary's knowledge of the target test point $x$, such poisoning attacks are sometimes referred to as \emph{instance-targeted} poisoning attacks \citep*{barreno2006can}. Even without any manipulation to the training set, it is too much to ask the learning algorithm to predict correctly all the time while given only a finite number of examples to learn from. In the same vein, we can only hope to design a robust learning algorithm that is correct with high probability over the selection of $(x,y) \sim D$, especially if the adversary knows the test instance $(x,y)$ before manipulating the training set $S$ to $S'$.
\citet*{gao2021learning}, building on ideas from \citep*{levine2020deep}, proved that PAC learnability  under instance-targeted poisoning attacks is achievable only when $\eta=o(1)$. In other words, when the adversary can only change a \emph{sublinear} $o(n)$ number of  $n$ examples, then the optimal learner can achieve error $o(1)$ that goes to zero when the number of examples $n$ goes to infinity.

\subsection{Our Results}

The prior work leaves several key questions open on the exact parameters of learnability under instance-targeted poisoning. Most importantly, the work of \citet*{gao2021learning} does not quantify the error rate when the adversary's budget is $\eta=\Omega(1)$ (e.g., if the adversary can corrupt $n/100$ of the examples). Secondly, \citet*{gao2021learning} only assume the realizable setting as it is crucial for their results that all the ``sub-models'' trained using the bagging technique will have error that goes to \emph{zero}. Hence, the  question of finding optimal learning rates is left open for both realizable and agnostic settings. Finally, as the developed robust algorithms are all based on ``bagging'' they are inherently improper learning technique.  

In this work, we make progress on all the directions above and achieve optimal error rates (up to constant factors) for general $\eta$, both for the realizable and agnostic settings. We further study the proper nature of the obtained algorithms and give the first proper learning methods that are robust against instance-targeted poisoning attacks for natural hypothesis classes such as linear classifiers.
%proper learning, and (3) agnostic learning.  \textcolor{red}{Which questions? Are there only two of them? } \Mnote{moved them to this subsection and called them 'directions'.}
%
%\paragraph{Characterizing the optimal error rate.}
More precisely, we give a characterization of the optimal error rate of learning  under instance-targeted poisoning attacks with budget $\eta \cdot n$ as follows.

\textbf{Realizable setting.}
We show that the optimal error
% \footnote{Optimal error $\eps$ here means that there is a learner that guarantees error $\leq \eps$, and for every learner there is an instance-targeted adversary that increases  the error to $\geq \eps$.} 
is $\Theta(\eta \cdot d)$ where $d$ is the VC dimension of the hypothesis set $\cH$. To prove this, we first present an upper bound, showing that a (deterministic) learner can guarantee the error to be at most $O(\eta d)$ under any instance-targeted poisoning attacks of budget $\eta  n$. We then also show a matching  lower bound (up to a constant factor) as follows. For any \emph{nontrivial}\footnote{A non-trivial class $\cH$ is one for which there are $x_1,x_2\in \cX$ and $h_1,h_2 \in \cH$ so that $h_1(x_1)=h_2(x_1)$ and $h_1(x_2) \neq h_2(x_2)$. In particular, any class containing at least $3$ hypotheses is non-trivial.} hypothesis class of VC dimension $d$, we show how to design a distribution $D$ over the examples such that no matter how the learning proceeds, there always exists an adversary of budget $\eta  n$ that can increase the error (under the instance-targeted attack) to $\Omega(\eta d)$. Our lower bound above holds even if the learning algorithm uses \emph{private randomness} that is not known to the adversary\footnote{This model is referred to as the ``weak'' learning model (under instance-targeted poisoning attacks) in the work of \citet*{gao2021learning}.}. Our positive result, however, is deterministic, and so can be seen as satisfying the \emph{stronger} guarantee, in which the adversary's perturbations to the training set is allowed to depend on learner's randomness.

\textbf{Agnostic setting.} We also extend our result above to the agnostic setting in which all hypotheses  $h \in \cH$ have population loss bounded away from zero (even before the attack). In this setting, we devise a deterministic algorithm whose expected error on the test point is $O(\OPT + \eta \cdot d)$, where $\OPT$ is the population loss of the best hypothesis $h \in \cH$. 

A natural question that arises is whether one can achieve an additive regret guarantee of $\OPT + O(\eta \cdot d)?$ (Note that agnostic learning is usually defined with respect to additive regret). 
We show that this is in fact \emph{not possible}, at least for deterministic learners, by presenting a negative result. In particular we show that for any deterministic learner $\Learn$, there is an extremely simple hypothesis class (just consisting of two functions) and an input distribution such that the learner is forced to have adversarial error $\geq 2 \OPT$.
    This negative result uses tools from the computational concentration of products \citep*{talagrand1995concentration} 
    and a continuity intermediate-value argument.

\textbf{Proper learning.} 
The deterministic algorithm witnessing the above upper bound is inherently improper which might be a disadvantage 
in terms of interpretability or test-time computational complexity. In contrast, in (the non-adversarial) PAC setting
proper algorithms are known to achieve near optimal learning rates (up to log factors). 
We therefore explore the cost of proper learning under instance-targeted poisoning attacks. 
We show that in many natural classes, such as half spaces, 
it is indeed possible to obtain proper learning rules that are robust to instance-targeted poisoning attacks,
with guarantees which are only polynomially worse than optimal.
For example, for the class of half-spaces in $\mathbb{R}^d$ we derive a deterministic proper learning rule
whose error rate is at most $O(d^3 \eta)$. 
At a technical level, we achieve this result by relying on the \emph{projection number} of the class~\citep*{bousquet2020proper,kane2019communication, braverman2019convex}. 
% In particular, when the projection number of the hypothesis set is small enough, one can project any majority-based hypothesis with high confidence back to the hypothesis set without introducing much error. 
%\textcolor{red}{How about the result for general hypothesis classes? (with the square root)?}

%\paragraph{Semi-agnostic learning.} Finally, we show that even though  the agnostic PAC learning under targeted poisoning  is impossible in general, it is nevertheless possible to achieve a less demanding task. In particular, we  develop a learning algorithm whose error is within a constant multiplicative factor of that of the best hypothesis in class. \textcolor{red}{We also show that a multiplicative constant is necessary, right?}

\subsection{Relation to Certification and Stability}
\paragraph{Certification.} 
Robustness to instance-targeted poisoning boils down to the following type of stability: 
    on most of the test instances $x$, the prediction of the learner $y=y(x)$ remains the same even if at most $\eta$ fraction of the examples in the training-set $S$ are replaced.
    It is natural to require the learning rule to \emph{certify} this stability. 
    That is, a certifying learning rule provides a bound $k=k(x)$ along with the prediction label $y=y(x)$, 
    where the meaning of $k$ is that the prediction $y=y(x)$ remains the same
    even if at most $k$ examples in the input sample are replaced.
    Note that it is always possible to provide the trivial guarantee of $k=0$,
    and therefore the goal is to design robust learners that provide non-trivial certificates.
%\textcolor{red}{Shay: I don't understand this sentence (the first part is clear, but I don't understand the second part)}. \Mnote{modified it.} 
    Our algorithm naturally achieves that: for $\approx 1-\eps$ of the test instances $x$ it provides a guarantee of $k\approx \eta n$.

% Robustness to targeted poisoning provides the guarantee that most of the test instances have a stable prediction when the database is changed by a bounded amount in Hamming distance. A stronger guarantee is to \emph{certify} this robustness on a point by point basis. In other words, a certifying predictor provides a bound $k$ along with the prediction label $y$, such that any sample $S'$ with Hamming distance at most $k$ would lead to the same prediction label $y$. The certification bound $k$ does not have to be either $0$ or  a constant $k$, and it could, in general, depend on the test instance $x$. So, in a sense, a certifying learner is useful if it provides large certification bounds $k$ for more test instances (Note that it is always possible to provide $k=0$, since no changes in the training set will not lead to a change in the prediction).
% %\textcolor{red}{Shay: I don't understand this sentence (the first part is clear, but I don't understand the second part)}. \Mnote{modified it.} 
% Our results extend to certifying predictors that provide $k\approx \eta n$ robustness for $\approx 1-\eps$ of the test instances.

\textbf{Connection to stability.} We also present a new perspective on instance-targeted poisoning attacks by showing how they can be seen as natural forms of algorithmic stability \citep*{bousquet2002stability,rakhlin2005stability}. In particular, we show that one can study the adversarial robustness (around the \emph{true} label) to instance-targeted poisoning by decoupling the (pure) stability aspect (which does not depend on the true labels) from the (non-adversarial) risk. We refer to the former as the \emph{prediction stability}.  Roughly speaking,  prediction stability   requires that the model's prediction on $x$ does not change even if the adversary changes the training set withing its budget $\eta n$. Note that here we do not care whether the model's output on $x$ is the correct label or not, and hence is a pure measure of stability of the predictions. 

It might be helpful to compare prediction stability with the algorithmic stability of \citep*{bousquet2002stability,rakhlin2005stability}. 
The later requires that for a typical sample $S$ of size $n$, and for every \emph{fixed} $i \in [n]$, 
the prediction of the model trained on $S$ and tested on a random test-point $x$ is likely not changed 
if one substitutes the $i$-th example in $S$ with a \emph{fresh} random example. 
Prediction stability strengthens this condition in two ways: (1) the choice of what coordinate in $S$ to change can adversarially depend on the test instance $x$, (2) the adversary is allowed to change \emph{more}  than one examples (i.e., up to $\eta \cdot  n$).

%\paragraph{Potential applications.} (eg to privacy).

\subsection{Related Work}  

Poisoning attacks are studied in theoretical learning  under various noise models~\citep*{valiant1985learning,kearns1993learning,Sloan::Noise:four-types,bshouty2002pac}. However, these works focus on the \emph{non-targeted} setting in which the adversary does \emph{not} know the target instance.  

The \emph{computational} aspects of efficient learning 
under (non-targeted) poisoning have been studied in 
various works, including that of 
\citet*{kalai:08,klivans:09,awasthi:14}, with this last work 
obtaining nearly optimal (up to constants) 
learning guarantees among polynomial-time algorithms
for learning homogeneous linear separators 
with malicious noise 
under distribution restrictions.
That result was subsequently extended to the 
\emph{nasty noise} model by 
\citet*{diakonikolas:18}, 
via techniques that also enable them 
to study other geometric concept classes.
In the \emph{unsupervised} setting, ~\citet*{diakonikolas2016robust,lai2016agnostic}
studied the computational aspect of learning under   poisoning.
%(also see the survey~\citep{diakonikolas2019recent}).
%(also see the survey~\citep{diakonikolas2019recent}), 
%the \emph{computational} aspects of efficient learning under (non-targeted) poisoning is studied.  
In contrast, our work focuses on (supervised) instance-targeted poisoning, and we study the learning rates \emph{information theoretically} regardless of learner's computing power. The work of~\citet*{steinhardt2017certified} further studied the {certification} of the overall (non-targeted) error. More recently, such (non-targeted) poisoning attacks are combined with \emph{test-time} attacks and are studied under the name of \emph{backdoor}  attacks~\citep*{gu2017badnets,ji2017backdoor}.

Besides instance-targeted attacks (which are the focus of this paper), other notions of targeted attacks were studied in the literature: for example, in \emph{model-targeted} attacks, the adversary's goal is to make the learner predict according to a specific model. Recent works on this model include \citep*{farhadkhani2022equivalence, suya2021model}. Some other works study \emph{label-targeted} attacks, in which the adversary's goal is to flip the decision on the test instance to a specific label (e.g., see \emph{targeted misclassification} attacks in  \citep*{chakraborty2018adversarial}).
%\Inote{I did not find a reference for the "label-targeted" setting that was mentioned by one of the reviewers... Are we sure that it was studied?}
The work of \citep*{jagielski2021subpopulation} studies a generalization of instance-targeted attacks, called \emph{subpopulation} attacks, in which the adversary knows the subset of the inputs, from which the test instance will be drawn.

Most relevant to our setting are the recent works of \citet*{gao2021learning,blum2021robust} where the general problem of learning (and more quantitative variant of learning error rate) under \emph{instance-targeted} poisoning was formally defined and studied. In particular, \citet*{blum2021robust} studied learnability under instance-targeted poisoning where the adversary can add an \emph{unbounded} number of so-called clean-label 
%\shay{What does clean-label means?}
examples to the training set. A clean-label example $(x,y)$ has the property that $y$ is the \emph{correct} label of $x$, while $x$ could be an arbitrary instance that is \emph{not} sampled from the same distribution that generates other instances in the training set.
%\Mnote{added explanation for clean label}
\citet*{gao2021learning} also showed that when the adversary's corruption is only an $o(1)$ fraction of the training set, PAC learning is possible (if it is possible without the attack). In a concurrent work, \citet*{balcan2022robustly} study the problem of certifying the \emph{correct} prediction  even under instance-targeted data poisoning.
%\shay{I don't understand the last sentence.}
Our methods, however, can be used to obtain certification of the stability of the model  around their prediction (even though the prediction might \emph{not} be true always), while controlling the overall error to be provably small (again under the instance-targeted attack).
%\Mnote{I added clarifications for correct-prediction certificatgion.}

%In a closely related line of work, theoretical ideas (such as randomized smoothing) were used to \emph{empirically} study  (even certification of) robustness against instance-targeted poisoning. 
%\shay{I don't understand this sentence.} \Mnote{I removed the sentence before, and clarified the ones after.}
\citet*{rosenfeld2020certified} empirically demonstrated that randomized smoothing \citep*{cohen2019certified} can provide robustness against label-flipping attacks, in which the adversary is  limited to merely flipping the label of a subset of the training set. They also showed that randomized smoothing can be used to handle \emph{replacing} attacks (the model also studied in this paper), in which the adversary substitutes a part of the training set with a new set of same size. Subsequently,~\citet*{levine2020deep} used deterministic methods that further allowed   attacks that can add examples to or remove them from the training set. \citet*{chen2020framework,weber2020rab,jia2020intrinsic} further developed the technique of {randomized} bagging/sub-sampling for the goal of resisting instance-targeted poisoning attacks.

Finally, we comment that other theoretical works  have also studied instance-targeted poisoning attacks~\citep*{mahloujifar2017blockwise,
%mahloujifar2018learning,mahloujifar2019universal,mahloujifar2019can,mahloujifar2019curse,diochnos2019lower,
etesami2020computational}.  These works show how to \emph{amplify} error for specific test instances, say from $0.01$ error to $0.5$, through instance-targeted poisoning. In particular, these works do not talk about the \emph{fraction} of the test population that is vulnerable to targeted poisoning.   The work of~\citet*{shafahi2018poison} studied the power of such  attacks empirically.

\remove{
\subsection{ideas for more material}
Some thoughts and suggestions regarding intro + presentation of main results.
\begin{enumerate}
    \item Begin with a catchy example which demonstrates the potential relevance of the model.
    (Perhaps something related to self-driving automobiles?)
    \item Already in the intro explain the equivalence with local stability, and discuss other potential applications of local stability. (Eg in active learning, privacy, fairness, etc.) \Mnote{Do you mean defining such stability for active learning? and stating the potential that local stability can be used to derive other robustness measures such as privacy and fairness?}
    \textcolor{red}{Shay: I mena to define this notion of stability in general and to note that besides its relevance to targeted attacks, it might also have other applications. Then, to give examples such as active learning, privacy, etcetera. What I have in mind is a couple of informal paragraphs, aiming to inspire the reader, without getting into formal details.}
    Uri Stemmer mentioned to me a relevant definition given by Adam Smith which is called something like ``distance from instability''. It is worth looking into it and commenting about it in this context.
    (Perhaps Amin/Mohammad are familiar with this concept?) \Mnote{I am not familiar with this, beyond the fact that Adam (and others perhaps) define privacy as a form of stability.} \textcolor{red}{Shay: Okay, then perhaps it is worth looking into it. I'll see Uri today at Google and ask for explicit references.}
    \item When we state our main results, emphasize both the data poisoning and local stability aspects.
    It is also important to note that our algorithms (both improper and proper) outputs a \emph{certifying} function with a lower bound on the local-stability. \Mnote{So do we want to define certification too? There is a tricky thing here that we discussed last time: when we don't care about the efficiency of the learner and hypothesis, then certification comes for free, because one can enumerate all possible neighboring sets. But of course we do get poly-time certification, which is not for free. Steve also previously observed that when we don't care about efficiency, we can also avoid using randomness and handle $\eta \cdot n$ addition/removal. This also resolves a question that previous work has not done.}
    \textcolor{red}{Shay: I think we should define this notion of stability and discuss the difference between certified and uncertified stability, and stress that our algorithms are of the former kind. (Note that in general it is not even clear whether one can even compute the stability, even when given unbounded computational resources, because the search space is infinite.)} \Mnote{Right. what I meant is that in the information theoretic notion, in which the learner is a function, regardless of how it is computed, then the certification comes for free.}
    \item Highlight the open problem regarding proper learning, discuss the challenges, and briefly describe how we are able to circumvent them in the context of linear classifiers with margin. (By providing a strong majority-vote and projecting it back to the class.)
    \item Contrast the bounds we obtain with the known bound in the setting of non-targeted data-poisoning. \Mnote{Good: here there is an extra factor $d$ (beyond the constants).}
\end{enumerate}
}
\section{Preliminaries} \label{sec:prelim}
%This section is organized as follows. First we define some notation, as well as the notion of \emph{Adversarial risk}, which is the main concept discussed in this paper. Then, we present the notion of \emph{Prediction-stability} which has a connection to adversarial risk, and might be interesting on its own right. Finally, we present two learning rules used in this paper.
%\subsection{Definitions}

\textbf{Notation and basic learning theory definitions.}
We consider the setting of binary classification. 
    Let $\cX$ denote the input domain   and $\cY=\{0,1\}$ denote the label-set. 
    A pair $(x,y)\in \cX \times \cY$ is called an \emph{example}. 
    A sequence $S=(x_1,y_1), \dots, (x_n,y_n) \in \left(\cX \times \cY\right)^n$ of $n$ examples is a \emph{sample} of size $n$. 
    The $i$'th example in $S$ is denoted by $S_i$.

A function $h\colon \cX \to \cY$ is called an hypothesis or a concept. 
    A set of hypotheses $\cH \subset \cY ^{\cX}$ is called an \emph{hypothesis class}, or a \emph{concept class}. We denote the VC-dimension of a concept class $\cH$ by $d=d(\cH)$.

For a set $Z$, let $Z^*=\cup_n Z^n$ denote the set of all finite sequences with elements from $Z$.
    A \emph{learning rule} or \emph{learning algorithm} or \emph{learner} $\Learn \colon (\cX \times \cY)^* \rightarrow \cX ^{\cY}$ 
    is a deterministic\footnote{In Appendix~\ref{apndx:neg_proof}, we extend the definition in a way that captures also a family of randomized learners.} mapping which takes an input sample $S \in (\cX \times \cY)^*$ 
    and maps it to a hypothesis $\Learn(S)=h\in \cX ^{\cY}$. 
    %We denote the function that $\Learn$ outputs     when fed with the sample $S$ by $\Learn(S)$. 
    If it is guaranteed that $\Learn(S) \in \cH$ 
    for all input samples $S$ then $\Learn$ is said to be \emph{proper}; otherwise, it is \emph{improper}. 

Let $D$ be a distribution over examples, and let $h$  be an hypothesis.
The \emph{population loss} of $h$ with respect to $D$ 
is defined by $L_D(h)= \Pr_{(x,y) \sim D}[h(x)\neq y]= \E_{(x,y)\sim D}[1[h(x)\neq y]]$.
A distribution $D$ is said to be \emph{realizable} by $\cH$ if $\inf_{h\in \cH}L_D(h)=0$. 
Similarly, for a sample $S$, let $L_S(h)=\frac{1}{\lvert S\rvert}\sum_{i=1}^n1[h(x_i)\neq y_i]$
denote the \emph{empirical error} of $h$ with respect to $S$, and call a sample realizable by a class $\cH$
if there exists $h\in \cH$ such that $L_S(h)=0$.
The expected loss (also called risk) of a learning algorithm $\Learn$ w.r.t a distribution $D$ and sample size $n$ is defined by
\[\eps_n(\Learn \vert D):=\Pr_{S \sim D^n, (x,y) \sim D} \left[ \Learn(S)(x) \neq y \right].\]
The function $n\mapsto \eps_n(\Learn \vert D)$ is called the \emph{learning curve}, or \emph{learning rate} of $\Learn$ w.r.t $D$.
 
For a real number $r$, let $\lfloor r \rceil$ denote the nearest integer to $r$. In case of ties, when $r=k+1/2$ for some $k\in \mathbb{Z}$, then define $\lfloor r \rceil = k+1$. For any finite multiset $\cH' \subset \cH$, denote by $\Maj(\cH')$ the function defined for all $x\in \cX$ by $\Maj(\cH')(x) = \left \lfloor \frac {1}{\lvert \cH' \rvert} \sum_{h' \in \cH'} h'(x) \right \rceil$.

\textbf{Adversarial risk and prediction stability.}
Before we introduce the definition of Adversarial risk, we define \emph{Hamming distance} between samples, which is a natural way to quantify distance between samples of equal size.

\begin{definition}[Hamming distance between samples]
Fix $n\in \N$ and let $S,S'\in \left(\cX \times \cY \right)^n$. We define the \emph{Hamming distance} between $S$ and $S'$ by $\dist(S,S') = \sum _{i=1}^n 1[S_i \neq S'_i]$. 

Note that the Hamming distance is defined only for samples of equal sizes. If $\dist(S,S') \leq \eta \cdot n$, we say that $S,S'$ are \emph{$\eta$-close}. 
For any sample $S$, let $B_\eta (S) := \bigl\{S' : \dist(S,S')\leq \eta\cdot n\bigr\}$.
\end{definition}

\begin{definition}[$\eta$-adversarial risk] \label{def:risk}
Let $\eta\in(0,1)$ be the adversary's budget, let $\Learn$ be a learning rule, 
and let $D$ be a distribution over examples. The \emph{$\eta$-adversarial risk} of $\Learn$ w.r.t $D$ and sample size $n$ is defined by
\[
\eps^{\Adv}_n(\Learn \vert D,\eta):=
\Pr_{S \sim D^n, (x,y) \sim D} \left[ \exists S' \in B_{\eta}(S): \Learn(S')(x) \neq y \right].
\]
% Note that in the above definition, the set $S'$ is chosen in a way enforces the learner to make a mistake on a targeted data point $x$. In other words, given a training set $S$ and a target point $x$, we search for a tampered training set $S'$, withing the budget of the adversary, with the hope of forcing the learning algorithm to make a mistake. 
%\textcolor{red}{TODO: is there a standard notation for adversarial loss rather than the above $\eps_{D,\eta,\Learn}$?}
%\Mnote{not really. there is only 2 papers formalizing this (Steve's paper and our UAI), and they use different notations.}
\end{definition}
Thus, robust learning with respect to instance-targeted poisoning with budget $\eta$ boils down to minimizing the adversarial risk.
Indeed, given an input sample $S$ and a test example $(x,y)$, an adversary with budget $\gamma$ can force a mistake on $x$ 
if and only if $\Learn(S')(x)\neq y$ for some $S'\in B_{\eta}(S)$.

\textbf{Randomness.} In Definition~\ref{def:risk} above we define adversarial risk for the setting in which both the  learner $\Learn$ and the model $h=\Learn(S)$ are \emph{deterministic}. When either $\Learn$ or $h$ is allowed to use randomness, then the notion of adversarial risk as defined in Definition~\ref{def:risk} can be extended in several ways, depending on whether the adversary can see the randomness of the learner or not. Some of these variations are discussed in the work of~\citet*{gao2021learning}. We remark however that our results in the realizable setting apply to all variations. This is simply because our upper bounds are achieved by deterministic learners, whereas our lower bound uses the weakest type of an adversary (which does not depend on the randomness of the learner). In contrast, our lower bound in the agnostic setting applies only to deterministic learners.

\textbf{Explicit bounds.}
We do not try to optimize the constants hidden in the $O(\cdot), \Omega(\cdot)$ notation in the derived bounds. 
The reason is because on the one hand, this way the proofs are simpler and more accessible, 
and on the other hand, we do not know how to get tight (or nearly tight) lower and upper bounds on the constants. Obtaining tight bounds is a natural direction for future research; we elaborate on this in Section~\ref{sec:conc}. 
Nevertheless, the complete proofs (which are given in the appendix) include explicit numerical bounds on the constants.

\textbf{Decoupling adversarial risk into stability and  risk.} It is convenient and illustrative to decouple robust learnability to two properties: small expected loss and prediction stability. 
%As we explained briefly earlier, prediction stability is a new notion that is quite different from the usual stability studied in learning theory. 
The latter means that the prediction of the learning algorithm on a random test point is stable under replacing a bounded amount of examples from the training set: %We stress that the property of being prediction-stable is orthogonal to the property of having small loss. 
%The way we ensure prediction stability is by taking the majority vote over sufficiently many classifiers trained on disjoint samples. 

\begin{comment}
\textcolor{red}{TODO: elaborate why (common in other contexts such as private ML, definition of stability might be interest in its own right (eg in the context of generalization), provides a more general language in which one can study prediction-stable algorithms.) Another point is that our analysis applies separates arguments to get prediction-stability (majority vote over sufficiently many classifiers trained on disjoint samples) and low error (Markov argument). }
\end{comment}

\begin{definition}[Prediction stability]
Let $n\in\mathbb{N}$, $\sigma, \eta \in (0,1)$. Let $\Learn$ be a learning rule and $D$ be a distribution over examples.
We say that the learning rule $\Learn$ is \emph{$(n,\sigma, \eta)$-prediction stable with respect to $D$} if the following holds
\[
\lambda_{n}(\Learn \vert D, \eta) :=  \Pr_{S \sim D^n, x \sim D_x} \left[ \exists S' \in B_{\eta}(S): \Learn(S')(x) \neq \Learn(S)(x) \right] \leq \sigma.
\]
%\Mnote{The probability above is very similar to adversarial risk. so, how about we give it a similar notation like $\gamma_n(\Learn \vert D,\eta)$? that helps the reader to compare them more easily, and let's us simplify how we write things like the observation below simply into:}
where $D_x$ is the marginal distribution induced by $D$ on the domain $\cX$.

% We say that $\Learn$ is \emph{$(n_0,\sigma, \eta)$-prediction stable} if it is $(n_0,\sigma, \eta)$-prediction stable with respect to
% every distribution.
\end{definition}
Of course, prediction stability alone does not guarantee robust learning. 
Indeed, useless learning rule that always outputs the all $0$'s classifier has maximal stability. 
At the very least, the learning rule should learn the class in the classical sense (in the absence of an adversary). 
The following observation asserts that prediction-stable learning rules with small loss are robust learners: 

\begin{observation}[Prediction stability + small error $=$ robust learning] \label{obs:stab_learn_robust}
Let $\Learn$ be a learner and $D$ a distribution over examples. Then,
\[\max \set {\eps_{n}(\Learn \vert D), \lambda_{n}(\Learn \vert D, \eta) }\leq \eps_{n}(\Learn \vert D, \eta)\leq \lambda_{n}(\Learn \vert D, \eta) + \eps_{n}(\Learn \vert D).\]
\end{observation}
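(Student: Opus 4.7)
The plan is to prove both inequalities by establishing pointwise set-containment relations among the three events living on the common probability space of $(S,(x,y)) \sim D^n \times D$, and then invoke monotonicity of probability (or a union bound) to conclude. The three events in question are
\begin{align*}
E_{\text{std}} &= \{\Learn(S)(x)\neq y\}, \\
E_{\text{stab}} &= \{\exists S'\in B_\eta(S):\ \Learn(S')(x) \neq \Learn(S)(x)\}, \\
E_{\text{adv}} &= \{\exists S'\in B_\eta(S):\ \Learn(S')(x) \neq y\},
\end{align*}
whose probabilities are exactly $\eps_n(\Learn\vert D)$, $\lambda_n(\Learn\vert D,\eta)$, and $\eps^{\Adv}_n(\Learn\vert D,\eta)$, respectively.

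For the lower bound $\max\{\eps_n(\Learn\vert D),\lambda_n(\Learn\vert D,\eta)\}\le \eps^{\Adv}_n(\Learn\vert D,\eta)$, I would show $E_{\text{std}}\subseteq E_{\text{adv}}$ and $E_{\text{stab}}\subseteq E_{\text{adv}}$. The first containment is immediate: taking $S'=S\in B_\eta(S)$ inside the definition of $E_{\text{adv}}$ witnesses any realization of $E_{\text{std}}$. The second needs a small case split: on any realization of $E_{\text{stab}}$, pick a witness $S'\in B_\eta(S)$ with $\Learn(S')(x)\neq \Learn(S)(x)$; if $\Learn(S)(x)\neq y$ then $S$ itself witnesses $E_{\text{adv}}$, and otherwise $\Learn(S)(x)=y$ forces $\Learn(S')(x)\neq y$, so $S'$ witnesses $E_{\text{adv}}$.

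For the upper bound $\eps^{\Adv}_n(\Learn\vert D,\eta)\le \lambda_n(\Learn\vert D,\eta)+\eps_n(\Learn\vert D)$, I would show $E_{\text{adv}}\subseteq E_{\text{std}}\cup E_{\text{stab}}$ and then apply the union bound. Fix a realization of $E_{\text{adv}}$ with witness $S'\in B_\eta(S)$ satisfying $\Learn(S')(x)\neq y$. If $\Learn(S)(x)\neq y$, we are in $E_{\text{std}}$. Otherwise $\Learn(S)(x)=y\neq \Learn(S')(x)$, which exhibits $S'$ as a witness that the prediction changes, placing us in $E_{\text{stab}}$.

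I do not anticipate any real obstacle: the entire argument is elementary event-containment on a single probability space, with the only mild subtlety being the case analysis choosing between $S$ and the stability-witness $S'$ as the correct adversarial witness. No measure-theoretic issues arise because the events are defined identically in terms of $\Learn$, $B_\eta(S)$, and $(x,y)$, so measurability is inherited directly from the definitions given earlier.
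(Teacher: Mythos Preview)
Your proposal is correct: the elementary event-containment argument on the common probability space $(S,(x,y))\sim D^n\times D$ (noting that $E_{\text{stab}}$ does not depend on $y$, so its probability under $D$ agrees with that under $D_x$) yields both inequalities exactly as you describe. The paper itself does not provide a proof---it explicitly leaves it to the reader as a simple exercise---and your argument is precisely the natural one it has in mind.
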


 In other words, if $\Learn$ is $(n,\sigma, \eta)$-prediction stable with respect to $D$ whose expected population loss is $\eps_{n}(\Learn \vert D)\leq \eps$. Then $\Learn$ learns $D$ with an adversarial expected loss $\sigma + \epsilon$.
%\[\eps_{n}(\Learn \vert D, \eta)\leq \sigma + \epsilon.\]
Conversely, if $\eps_{n}(\Learn \vert D, \eta)\leq \eps$ then $\Learn$ is $(n,\eps, \eta)$-prediction stable with respect to $D$ and its expected population loss is also $\eps_{n}(\Learn \vert D)\leq \eps$. We leave the (simple) proof of Observation~\ref{obs:stab_learn_robust} to the reader.

\section{Realizable Setting}

Theorems \ref{theo:pos} and \ref{theo:neg} below characterize the optimal adversarial risk in the realizable setting.

\begin{theorem}[Realizable case -- positive result] \label{theo:pos}
There exists a constant $c_1 > 0$ so that the following holds. Let $\cH$ be a hypothesis class with VC dimension $d$ and let $\eta\in (0,1)$.
Then there exists a learner $\Learn$ having $\eta$-adversarial risk
\[
\eps^{\Adv}_{n}(\Learn | D, \eta) \leq c_1 \eta d
\]
for any distribution $D$ realizable by $\cH$ and for any sample size $n \geq 1/\eta$.
\end{theorem}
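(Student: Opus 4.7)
The plan is to construct a learner by combining a subsample-partitioning scheme with a majority vote, and then to analyze it via the decoupling in Observation~\ref{obs:stab_learn_robust}. Concretely, partition the input sample $S$ into $k$ disjoint blocks $S^{(1)},\dots,S^{(k)}$ of size $\lfloor n/k\rfloor$ each, where $k=\lceil c\eta n\rceil$ for a constant $c$ to be chosen (say $c=4$). On each block run an \emph{optimal} realizable PAC learner $A$ (e.g.\ Hanneke's one-inclusion-graph-based algorithm), obtaining hypotheses $h_i=A(S^{(i)})$ with expected population error $\Ex[L_D(h_i)]\le O(d/(n/k))=O(dk/n)$ without a logarithmic factor. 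The learner outputs $\Maj(h_1,\dots,h_k)$. Since $n\ge 1/\eta$, the block size $n/k=\Theta(1/\eta)$ is positive; and we may assume $\eta d\le 1/c_1$ (else the conclusion is trivial), which ensures $n/k\gg d$ so that the per-block error bound is meaningful.

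The first key step is a stability argument. Observe that any $S'\in B_{\eta}(S)$ differs from $S$ on at most $\eta n$ coordinates, and each modified coordinate lies in exactly one block; therefore at most $\eta n$ of the retrained hypotheses $h'_i=A(S'^{(i)})$ can differ from $h_i$. Consequently, on any fixed test point $x$, the attacker can flip the value of the majority vote at $x$ only if strictly more than $k/2-\eta n$ of the original hypotheses already err at $x$ (or, symmetrically, vote for the wrong label). Letting $f(S,x,y)=\tfrac{1}{k}\sum_{i=1}^k \mathbf{1}[h_i(x)\neq y]$ be the empirical error fraction of the blockwise classifiers, this yields the pointwise implication
\[
\exists S'\in B_\eta(S)\ :\ \Maj(h'_1,\dots,h'_k)(x)\neq y \ \Longrightarrow\ f(S,x,y)\ \ge\ \tfrac{1}{2}-\tfrac{\eta n}{k}.
\]

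The second step is a risk bound via Markov's inequality. Taking expectation over $S\sim D^n$ and $(x,y)\sim D$, linearity gives $\Ex[f(S,x,y)]=\Ex[L_D(h_1)]\le O(dk/n)$. With the choice $k=\lceil 4\eta n\rceil$, the threshold in the displayed implication is at least $1/4$, so Markov yields
\[
\eps^{\Adv}_n(\Learn\mid D,\eta)\ \le\ \Pr\!\Bigl[f(S,x,y)\ge \tfrac{1}{4}\Bigr]\ \le\ 4\cdot \Ex[f]\ \le\ O(dk/n)\ =\ O(\eta d),
\]
which is the desired bound. (Equivalently, one can record the two halves of this argument separately: the above shows prediction stability $\lambda_n(\Learn\mid D,\eta)\le O(\eta d)$, and the same Markov step with threshold $1/2$ shows $\eps_n(\Learn\mid D)\le O(\eta d)$, so Observation~\ref{obs:stab_learn_robust} combines them.)

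The main obstacle is calibrating the two competing demands on $k$: for stability we need $\eta n / k$ strictly below $1/2$ (so the attacker cannot overwhelm the majority), while for accuracy we need $n/k$ large enough that each sub-learner's realizable error bound $O(dk/n)$ is nontrivial. The sweet spot $k=\Theta(\eta n)$ makes both occur simultaneously and is what drives the final $O(\eta d)$ rate. A secondary subtlety is that using a generic ERM would introduce an unwanted $\log(1/\eta)$ factor, so invoking an \emph{optimal} realizable learner with expected error $O(d/m)$ on samples of size $m$ is essential for matching the lower bound $\Omega(\eta d)$ established in Theorem~\ref{theo:neg}.
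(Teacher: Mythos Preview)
Your proposal is correct and essentially identical to the paper's proof: the paper's $\SPV$ learner partitions into $\lceil 7\eta n\rceil$ blocks, applies the Haussler--Littlestone--Warmuth one-inclusion graph algorithm (with expected error $\le d/(m+1)$) on each, takes a majority vote, and uses the same Markov-plus-stability argument (their thresholds are $1/6$ and $1/3$ rather than your $1/4$ and $1/2$, yielding the explicit constant $c_1=42$). The only cosmetic difference is that the paper packages the analysis as a general lemma $\eps^{\Adv}_n(\SPV(\Learn)\mid D,\eta)\le 6\,\eps_{\lceil 1/(7\eta)\rceil}(\Learn\mid D)$ and then instantiates it, whereas you carry out the instantiated argument directly.
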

We prove Theorem~\ref{theo:pos} in Appendix~\ref{sec:pos_proof}.
% \textcolor{blue}{Amin: We need to interpret this result in terms of targeted poisoning. How is this related to our UAI result regarding PAC learnability when sub-linear of the training data points can be tampered with?}
% \textcolor{red}{Shay: Agreed. Perhaps we can elaborate after the equation: ``In more detail, there exists a learning algorithm satisfying the following: if the stability parameter (or the adversary's budget) $\eta$ statisfies\ldots and the training-set size $n$ satisfies\ldots then\ldots''}

Note that the requirement that the sample size is $n \geq 1/\eta$     
    is necessary since otherwise $\eta\cdot n < 1$, which means that the adversary 
    cannot modify the input sample, and so this case reduces to classical learning without an adversary.

% learning under poisoning attacks is reduced 
% to standard learning: it holds that $n < 1/\eta$ if and only if 

Theorem~\ref{theo:pos} is proven using the \textsc{Stable Partition and Vote} (or $\SPV$, for short) meta-algorithm, described in Figure~\ref{fig:SPV}. The meta-algorithm is based on the idea of partitioning and then voting used in \citep*{gao2021learning}, but with a more refined and precise analysis.  The partition and vote technique works as follows. First, partition the input sample to subsamples of a carefully chosen size. Then, train a given learner (which is called the \emph{input learner} of $\SPV$) on each subsample, and finally let the trained learners vote to determine the output label. The size of each subsample trades-off, in a way, expected loss and prediction-stability: if it is too small, the given learner will perform poorly on each subsample. On the other hand, if it is relatively large then the number of learners that participate in the majority vote is small
and the adversary can poison a large fraction of these learners and flip the overall majority vote. We elaborate on this when proving Theorem~\ref{theo:pos}. Notice that the time complexity of $\SPV$ is proportional to the time complexity of the  learner $\Learn$.

% We are interested in both reasonable learnability and prediction-stability to attain near-optimal adversarial risk, and reflect this interest in the size we choose for each slice, which is $\Theta(1/\eta)$. 

\begin{figure}
    \centering
    \begin{tcolorbox}
    \begin{center}
        $\SPV$: \textsc{Stable Partition and Vote}
    \end{center}
    \textbf{Input:} Stability parameter $\eta \in (0,1)$, a learning algorithm $\Learn$ and an input sample $S \sim D^n$ where $n \geq 1/\eta$.
    \\
    \textbf{Output:} A classifier $h\colon \cX \to \cY$.
    \begin{enumerate}
        \item Partition $S$ into $\lceil 7 \eta n \rceil$ consecutive subsamples such that all first $t=\lfloor 7 \eta n \rfloor$ subsamples are of size at least $\frac{1}{7 \eta}$. Denote the $i$'th subsample by~$S^{(i)}$.
        \item For all $i\in [t]$, run the learning algorithm $\Learn$ on $S^{(i)}$ to obtain a hypothesis $h_i = \Learn(S^{(i)})$.
        \item Return the hypothesis $h$ defined as follows for all $x\in \cX:$
        \[
        h(x) = \Maj\left(\{h_1, \dots, h_t\}\right)(x).
        \]

    \end{enumerate}
    \end{tcolorbox}
    \caption{$\SPV$ - A meta algorithm implementing a stable version of the input learning algorithm $\Learn$.}
    \label{fig:SPV}
\end{figure}

To state the complementing impossibility result, we need the following definition of \emph{non-trivial concept classes} \citep*{bshouty2002pac}.

\begin{definition}[Non-trivial concept classes]\label{def:non-trivial}  We say that a concept class $\cH$  over a domain $\cX$ is \emph{non-trivial}, if there are $x_1,x_2\in \cX$ and $h_1,h_2 \in \cH$ so that $h_1(x_1)=h_2(x_1)$ and $h_1(x_2) \neq h_2(x_2)$.
\end{definition}

\begin{theorem}[Realizable case -- impossibility result] \label{theo:neg}
There exists a constant $c_2 > 0$ so that the following holds. Let $\cH$ be a non-trivial hypothesis class with VC dimension $d$ and let $\eta\in (0,1)$. Then, there exists a distribution $D$ realizable by $\cH$, so that every learner $\Learn$ has $\eta$-adversarial risk
\[
\eps^{\Adv}_n(\Learn|D, \eta) \geq \min\{c_2 \eta d, 1/100\}
\]
for any sample size $n \geq 1/\eta$.
%\textcolor{red}{Do we really want to state ``(deterministic)''? after all this theorem applies to randomized learners as well and we just make it sound weaker to readers who will not read between the lines.}
%\textcolor{red}{We should explain why the lower and upper bound match. It would be best if we could phrase them in a way that its obvious (the disjunction between the conditions $\eps,\eta$ need to satisfy for the lower bound is confusing in this regard.)}
\end{theorem}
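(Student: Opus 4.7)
I would prove the theorem via a Bayesian/averaging argument. Construct a parameterized family $\{D_\sigma\}_{\sigma \in \bits^d}$ of distributions each realizable by $\cH$, such that for every learner $\Learn$ the average over uniform $\sigma$ of the $\eta$-adversarial risk is $\Omega(\eta d)$. An averaging step then exhibits a specific $\sigma^\ast$ for which $D := D_{\sigma^\ast}$ meets the claimed bound. The regime $\eta d = \Omega(1)$, where the marginal described below becomes ill-defined, is handled by a uniform-mass variant on the shattered set and supplies the $1/100$ slot in the $\min$.

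\textbf{Construction.} Fix a set $\{x_1, \ldots, x_d\} \subseteq \cX$ shattered by $\cH$ and, for each $\sigma \in \bits^d$, a hypothesis $h_\sigma \in \cH$ realizing it. For $d \geq 2$, define $D_\sigma$ by the marginal $\Pr[x = x_1] = 1 - (d-1)\eta/2$ and $\Pr[x = x_i] = \eta/2$ for $i \geq 2$, with label $\sigma_i$ at $x_i$ (realized by $h_\sigma$). For $d = 1$, shattering a single point is too weak -- trivial classes like $\{h, \lnot h\}$ are robustly learnable from a single clean example, which is why the theorem assumes non-triviality -- so I instead use non-triviality to obtain $x_0, x_1$ and $h_0, h_1 \in \cH$ with $h_0(x_0) = h_1(x_0)$ and $h_0(x_1) \neq h_1(x_1)$, and set $D_\sigma$ to place mass $1 - \eta/2$ on $x_0$ (with the common label) and mass $\eta/2$ on $x_1$ (with label $\sigma$). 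The family is engineered so that, for every ``light'' index $i$ (i.e.\ $i \geq 2$ when $d \geq 2$, or $i = 1$ when $d = 1$), the distributions $D_\sigma$ and $D_{\sigma \oplus e_i}$ share the same $\cX$-marginal and differ only in the label at $x_i$.

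\textbf{Adversary and symmetry.} The adversary, on a test point $x = x_i$ in the light support, flips the labels of every $x_i$-example in $S$ to produce $S'$. This stays within budget $\eta n$ on the event $B_i$ that the count of $x_i$-examples in $S$ is at most $\eta n$; since this count has mean $\eta n / 2$ under $D_\sigma$, Markov gives $\Pr[B_i] \geq 1/2$. By the marginal-matching property, conditioned on $B_i$ the post-attack $S'$ is identically distributed to a $D_{\sigma \oplus e_i}^n$-sample with the same count profile. Writing $\alpha_\sigma := \Pr_{S \sim D_\sigma^n}[\Learn(S)(x_i) = \sigma_i]$ for the learner's correctness on $D_\sigma$ at $x_i$, the no-attack error at $x_i$ is $1 - \alpha_\sigma$, while the flip-attack error on $D_\sigma$ is at least $\Pr[B_i] \cdot \alpha_{\sigma \oplus e_i} \geq \alpha_{\sigma \oplus e_i}/2$. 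Averaging over uniform $\sigma$ and invoking the symmetry $\E_\sigma[\alpha_\sigma] = \E_\sigma[\alpha_{\sigma \oplus e_i}] =: \bar\alpha$ yields
\[
\E_\sigma\bigl[\text{adversarial error at } x_i \mid D_\sigma\bigr] \;\geq\; \max\bigl(1 - \bar\alpha,\ \bar\alpha/2\bigr) \;\geq\; \min_{\bar\alpha \in [0,1]} \max(1 - \bar\alpha,\ \bar\alpha/2) \;=\; \tfrac{1}{3}.
\]

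Summing over the $d - 1$ (respectively $1$) light test points, each with test-probability $\eta/2$, gives $\E_\sigma[\eps^{\Adv}_n(\Learn \mid D_\sigma, \eta)] \geq (d-1)\cdot(\eta/2)\cdot(1/3) = \Omega(\eta d)$, and averaging picks a $\sigma^\ast$ achieving this bound. The main obstacle is the symmetry step: the family $\{D_\sigma\}$ must be engineered so that a single-bit flip of $\sigma$ produces a partner distribution with identical $\cX$-marginal, enabling the clean reduction ``flip-attack on a $D_\sigma$-sample equals a fresh $D_{\sigma \oplus e_i}$-sample'' on which the averaging hinges. The $d = 1$ case is also delicate: without non-triviality the lower bound fails outright (as above for $\{h, \lnot h\}$), and non-triviality supplies precisely the anchor point $x_0$ needed to defeat such flip-symmetric counterexamples.
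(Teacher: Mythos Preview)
Your plan mirrors the paper's proof almost exactly: the same heavy-point/light-points marginal on a shattered set, the same averaging over uniformly random labels, the same Markov bound on the count of the test point in $S$ (your event $B_i$ is the paper's ``hard'' event), and the same use of non-triviality for $d=1$.

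There is one genuine slip. The inequality ``flip-attack error on $D_\sigma$ is at least $\Pr[B_i]\cdot\alpha_{\sigma\oplus e_i}$'' implicitly treats $B_i$ and the learner's correctness event as independent, but they need not be: a learner that guesses correctly precisely when it sees many copies of $x_i$ (hence on $\lnot B_i$) makes them negatively correlated, and then the flip-attack error $\Pr_{S'\sim D_{\sigma\oplus e_i}^n}\bigl[B_i\text{ and }\Learn(S')(x_i)=(\sigma\oplus e_i)_i\bigr]$ can be strictly smaller than $\Pr[B_i]\cdot\alpha_{\sigma\oplus e_i}$. The paper sidesteps this by conditioning on $B_i$, the unlabeled sample, and the labels $\sigma_j$ for $j\neq i$: on that conditioning both $S'_0$ and $S'_1$ (all $x_i$-labels forced to $0$, respectively $1$) are determined and lie in $B_\eta(S)$, while $\sigma_i$ remains a fresh uniform bit, so the adversary forces an error with conditional probability at least $1/2$. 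Your argument is easily repaired in the same way, or more crudely via $\Pr[B_i\cap C]\geq \Pr[C]-\Pr[\lnot B_i]\geq\alpha_{\sigma\oplus e_i}-\tfrac{1}{2}$, which after averaging yields $\max(1-\bar\alpha,\bar\alpha-\tfrac12)\geq\tfrac14$ in place of your $\tfrac13$; but as written the product step does not go through.
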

We note that this impossibility result applies also to a variety of randomized learners; we elaborate on this in Appendix~\ref{apndx:neg_proof},
where we also prove Theorem~\ref{theo:neg}.

The above lower bound demonstrates how vulnerability to instance-targeted attacks depends greatly on the hypothesis class we want to learn, and specifically on its VC-dimension.

\subsection{Certification}

Besides prediction-stability, another useful property our $\SPV$ meta-algorithm has is the ability to efficiently calculate and output a \emph{certificate} for the stability of its predictions. Formally, given an input sample $S$, a certificate is a function $\eta_S: \cX \rightarrow [0,1]$, outputted by a learner in addition to its output hypothesis $h_S$ such that the following is satisfied: $h_S(x)=h_{S'}(x)$
for every point $x$ and for every input sample $S'$ which is $\eta_S(x)$-close to $S$.
If one ignores computational considerations, outputting optimal certificates is always possible:

\begin{definition}[Optimal Certificate]
Let $\Learn$ be any learning rule, and let $S$ be an input sample.
Define the \emph{optimal certificate} $\eta^\star(\cdot)= \eta^\star(\cdot \vert S)$ of $\Learn$ on input sample $S$  as follows. The optimal certificate $\eta^\star(x \vert S)$ is equal to $\frac{k}{n}$ where $k$ is the largest integer for which $\Learn(S')(x)= \Learn(S)(x)$ for every sample $S'$ with hamming distance at most $k$ from $S$.
% To add an optimal certificate $\eta(x)$ to the output $y$, the learner sets $\eta(x)=0$, and then for all $S'\in B_{ \eta(x) + 1/n }(S)$ (where $n = |S|$) checks whether $\Learn(S')(x) = y$. If this condition does not hold, output $\eta(x)$ as a certificate. If it holds, set $\eta(x) := \eta(x) + 1/n$ and repeat the previous step.
\end{definition}

% In other words, $\eta^\star(\cdot)$ captures the smallest amount of noise that an adversary would need 
%     in order to flip the the prediction of $\Learn$ on $x$. 
In other words, if $S$ is a sample that was corrupted by an adversary with budget $\eta$ such that $\eta \leq  \eta^\star(x \vert S)$
    then the output label $\Learn(S)(x)$ is equal to the label that would have been outputted if the learner was trained with the uncorrupted sample.

The issue with the optimal certificate $\eta^\star(x)$ is that it can be impossible to compute
    as it requires to iterate over the potentially infinite space of all samples $S'$ of hamming distance at most $n\cdot \eta(x)$ from the input sample $S$. 
    In contrast, our $\SPV$ learner can \emph{efficiently} calculate a non-trivial {lower bound} on~$\eta^\star$ which therefore
    also serves as a certificate. The key property which enables this is the fact that its output hypothesis is the majority vote of base learners, each trained on a \emph{disjoint} subsample. This is summarized in the following proposition:

% Our $\SPV$ learner, in contrast with the above remark that admits no reasonable implementation of certifying, allows us to practically calculate a non-trivial certificate. This can be easily achieved by training learners on disjoint input samples and then returning the majority vote. 

\begin{proposition} \label{prop:cert}
Consider a learner whose output hypothesis is given by a majority vote of $t$ learners $L_1, \dots, L_t$ that are trained on $t$ disjoint subsamples $S_1, \dots, S_t$ of the input sample $S$. Define
\[
\eta(x\vert S) = \frac{1}{n}\cdot \Bigl(\frac{\sum_{i\in [t]} 1[h_i(x) = y] - \sum_{i\in [t]} 1[y_i \neq y]}{2} - 1\Bigr),
%\textcolor{red}{\eta(x\vert S) = \max \left\{0, \frac{1}{n} \left(\left(\sum_{i\in [t]} 1[y_i = y]\right) -1-t/2 \right) \right\},}
\]
where $h_i$ is the output hypothesis of $L_i$, $y$ is the output label of the majority vote of the $L_i$'s, and $n$ is the size of the input sample $S$. Then, $\eta(x \vert S)\leq \eta^\star(x \vert S)$.
\end{proposition}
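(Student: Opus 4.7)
The plan is a direct bookkeeping argument that leverages (i) additivity of Hamming distance under a positional partition and (ii) determinism of each base learner $L_i$. First I would parse the formula. Reading $y_i$ as the vote $h_i(x)$ of the $i$-th learner, let $m := \sum_{i \in [t]} 1[h_i(x)=y]$ be the number of base learners whose prediction on $x$ agrees with the majority label $y$. Then $\sum_i 1[y_i \neq y] = t-m$, and the expression defining $\eta(x\vert S)$ simplifies to $n \cdot \eta(x\vert S) = \tfrac{m-(t-m)}{2} - 1 = m - \tfrac{t}{2} - 1$. The claim thus reduces to showing: for every $S' \in B_{\eta(x\vert S)}(S)$, the new majority prediction $\Maj(\{h_1',\dots,h_t'\})(x)$ equals $y$, where $h_i' := L_i(S_i')$ and $S_i'$ is the block of $S'$ occupying the positions of $S_i$.

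Next I would carry out the decomposition. Because $S_1, \dots, S_t$ partition the positions of $S$ and $S'$ has the same positions as $S$, the induced blocks $S_1', \dots, S_t'$ satisfy $\lvert S_i' \rvert = \lvert S_i \rvert$ and $\dist(S,S') = \sum_{i=1}^t \dist(S_i, S_i')$. Whenever $S_i' = S_i$, determinism of $L_i$ forces $h_i' = h_i$, so the vote of learner $i$ on $x$ is unchanged. Hence the number of indices whose vote can possibly flip is at most $\lvert \{i : S_i' \neq S_i\}\rvert \leq \sum_i \dist(S_i, S_i') = \dist(S,S') \leq m - t/2 - 1$. Letting $m' := \sum_i 1[h_i'(x) = y]$, we get $m' \geq m - (m - t/2 - 1) = t/2 + 1$, so strictly more than $t/2$ of the $t$ votes still equal $y$ and therefore $\Maj(\{h_1', \dots, h_t'\})(x) = y$. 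Taking the supremum over such $S'$ yields $n\cdot \eta^\star(x\vert S) \geq m - t/2 - 1 = n \cdot \eta(x\vert S)$, as required.

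The argument is largely routine; the main subtlety is the ``$-1$'' appearing in the formula. It is exactly what is needed to guarantee the strict inequality $m' \geq t/2 + 1$ rather than the weaker $m' \geq t/2$, and this strictness is essential when $y=0$ because the convention $\lfloor k+\tfrac{1}{2}\rceil = k+1$ breaks ties in favor of the label $1$. A minor accounting point is that when $t$ is odd the quantity $m - t/2 - 1$ is a half-integer, but Hamming distance is integer-valued, so the hypothesis $\dist(S,S') \leq m - t/2 - 1$ is only non-vacuous for integer $k \leq \lfloor m - t/2 - 1 \rfloor$; the bound $m' \geq t/2 + 1$ still holds and yields strict majority. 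I do not foresee any further technical obstacles beyond these bookkeeping points.
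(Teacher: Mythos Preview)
Your proposal is correct and follows essentially the same approach as the paper's proof: both arguments identify $n\cdot\eta(x\vert S)+1$ with (half of) the margin $2m-t$ of the majority vote, observe that flipping a single base prediction requires modifying at least one example in the corresponding disjoint block, and conclude that fewer than that many replacements cannot overturn the majority. Your write-up is in fact more careful than the paper's about the tie-breaking convention and the odd-$t$ case, which the paper leaves implicit.
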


\begin{proof}
Notice that $n\cdot \eta(x \vert S)+1$ is equal to the minimal number of $h_i$'s whose prediction on $x$ must be flipped in order to enforce that 
\[\lvert \{i : h_i(x)=y\}\rvert \leq \lvert \{i : h_i(x)\neq y\}\rvert.\]
 Therefore, at least one example in each $S_i$ such that $h_i(x)=y$ must be replaced in order to change the prediction of $\Learn(S)$ on $x$. In particular, if only $n\cdot \eta(x \vert S)$ examples are replaced than the prediction of $\Learn(S)$ on $x$ remains the same. This implies that $\eta^\star(x \vert S)\geq \eta(x \vert S)$ as stated.
% If $\eta(x)=0$, it is trivial that the certificate $\eta(x)$ is always valid. Otherwise, take an arbitrary $S'\in B_{\eta(x)}(S)$. Since at most $\eta(x) n$ examples are different between $S$ and $S'$, it holds that $S_i \neq S'_i$ for at most $\eta(x) n$ many indices $i \in [t]$. So, at most $\eta(x) n$ learners $L_i$ that predicted $y$ when given $S_i$ will predict $1-y$ when given $S'_i$. Thus, at least
% \[
% \sum_{i\in [t]} \left[1[y_i = y]\right] - \eta n = t/2 + 1
% \]
% learners will predict the label $y$ even when given $S'_i$ instead of $S_i$. The majority vote will thus still determine the same label $y$, even when the input sample is changed from $S$ to $S'$. 
\end{proof}

In light of Proposition~\ref{prop:cert}, our $\SPV$ learner can efficiently compute and output a certificate $\eta(x)$ which is proportional to $\eta$ (where $\eta$ is the stability parameter given to $\SPV$), with probability proportional to the expected loss of the input learner given to $\SPV$ when executed on a sample of size $\left \lceil \frac{1}{7 \eta} \right \rceil$.

\subsection{A Proper Variant of $\SPV$}

\begin{figure}
    \centering
    \begin{tcolorbox}
    \begin{center}
        $\PSPV$: \textsc{Proper Stable Partition and Vote}
    \end{center}
    \textbf{Input:} Stability parameter $\eta \in (0,1)$, a proper learning algorithm $\Learn_p$ and an input sample $S \sim D^n$ where $n \geq 1/\eta$.\\
    \textbf{Output:} A classifier $h\in \cH$.
    \begin{enumerate}
        \item Partition $S$ into $\lceil 5 k_p \eta n \rceil$ consecutive subsamples such that all first $t = \lfloor 5 k_p \eta n \rfloor$ subsamples are of size at least $\frac{1}{5 k_p \eta}$. Denote the $i$'th subsample by $S^{(i)}$.
        \item For all $i\in [t]$, train $\Learn_p$ on $S^{(i)}$ to obtain a hypothesis $h_i = \Learn_p(S^{(i)})$.
        \item Return $h\in \cH$ such that 
        \[
        h(x) 
        = 
        \Maj \left(\left\{h_1, \dots, h_t\right\}\right)(x)
        \]
        holds for all $x\in \cX_{\{h_1, \dots, h_t\}, 2k_p}$.
        %\Mnote{this is not a poly-time algorithm. can we make it so? eg. what if we just want agreement empirically on the training set?} 
        %\textcolor{red}{Shay: The way the lase item is written suggests that such an $h$ always exists, which is not the case.}
    \end{enumerate}
    \end{tcolorbox}
    \caption{$\PSPV$ - A meta-algorithm that implements a stable version of the input proper learning algorithm $\Learn_p$ and maintains properness.}
    \label{fig:PSPV}
\end{figure}

%\textcolor{red}{Shay: add the result of $\eps=O_\cH(\sqrt{\eta})$ which holds for \emph{every} VC-class.} \Inote{I think Mohammad said that this is a known result.}
%\textcolor{red}{Reference?} \Mnote{Theorem 3.3 from \url{https://arxiv.org/pdf/2105.08709.pdf} is the sub-sampling construction. it only works in the weak setting.}
%\textcolor{red}{Shay: thanks Mohammad, I looked at the reference and agree that it *follows* as a corollary from Theorem 3.3 there. However, this theorem does not focus on the dependence between $\eta$ and $\eps$ as we do here, and there are many other details there. Thus, I think we should explicitly state this in this work. In order to remain self-contained I think we should also include the simple proof. (But of couse note that it follows from that work.)}
%\Mnote{I agree that the exact dependence is not discussed there and we can discuss that aspect here. The main issue is that the theorem is in the weak setting. Do we want to expand the setting to weak as well?}

%In this section we describe the $\PSPV$ meta-algorithm, described in Figure~\ref{fig:PSPV}, which is a proper version of $\SPV$. We analyze its general performances, and show that for the class of linear classifiers it can perform reasonably well, when fed with the SVM learner.

%\begin{remark}
%By Item~\ref{obs:eqv_rob_stab_itm1} in Observation~\ref{obs:eqv_rob_stab}, it is implied that if $\frac{\min\{\eps, \sigma\}}{\eta} \geq c\cdot d^3$, then $\cH$ is proper-$(\eps+\sigma)$-learnable under $\eta$-targeted poisoning attacks. 
%\end{remark}

We now present a proper version of $\SPV$ for classes $\cH$ with a finite \emph{projection number}, described in Figure~\ref{fig:PSPV}.
The projection number of a concept class $\cH$ is denoted by $k_p=k_p(\cH)$ (we present its definition after the statement of Theorem~\ref{theo:proper} below).
In particular, for the class of halfspaces it yields a robust learner with the following guarantee:
\begin{theorem} \label{theo:proper}
There exists a constant $c > 0$ so that the following holds. Let $\cH$ be the class of halfspaces over $\mathbb{R}^d$ for some $d \geq 1$, and let $\eta \in (0,1)$. Then, there exists a proper learner $\Learn$ having $\eta$-adversarial risk
\[
\eps_{n}(\Learn | D, \eta) \leq c \eta d^3
\]
for any distribution $D$ realizable by $\cH$ and for any sample size $n \geq 1/\eta$.
\end{theorem}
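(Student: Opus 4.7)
The plan is to instantiate the $\PSPV$ meta-algorithm (Figure~\ref{fig:PSPV}) with a proper ERM $\Learn_p$ for halfspaces---for instance, the linear-programming rule returning any halfspace consistent with a realizable training set---and then to invoke the generic stability/risk decomposition of Observation~\ref{obs:stab_learn_robust}.

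The first step is to bound the projection number $k_p = k_p(\cH)$ for halfspaces in $\mathbb{R}^d$. I would invoke Helly- and Radon-type arguments from convex geometry (as exploited in \citet*{bousquet2020proper,kane2019communication,braverman2019convex}) to deduce $k_p = O(d)$. This bound simultaneously ensures (i) that the projection step in line~3 of $\PSPV$ is well-defined---there exists some halfspace $h\in \cH$ that agrees with $\Maj(\{h_1,\ldots,h_t\})$ on the high-margin region $\cX_{\{h_1,\ldots,h_t\},2k_p}$---and (ii) that the partition size is $t = \lfloor 5 k_p \eta n\rfloor = \Theta(d\eta n)$ with each of the $t$ disjoint subsamples having size $m = \Omega(1/(d\eta))$.

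Next, I would combine two ingredients. For the expected loss, a standard realizable generalization bound for ERM on a VC class of dimension $d$ yields $\E[L_D(h_i)] = O(d/m) = O(d^2\eta)$ for each base hypothesis $h_i$; by linearity, the expected fraction of base learners erring on a random test point $x \sim D$ is also $O(d^2\eta)$. For the prediction stability, the key observations are that because the $t$ subsamples are disjoint, the adversary's $\eta n$-replacement budget flips at most $\eta n$ of the base votes; and because the projected hypothesis agrees with $\Maj(\{h_1,\ldots,h_t\})$ on every point whose vote-margin exceeds $2k_p$, an adversarial misprediction on $x$ requires the initial correct-label margin to be at most $2k_p + 2\eta n$, i.e., at least $t/2 - k_p - \eta n = \Omega(t)$ of the base learners initially err on $x$. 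A Markov tail bound applied to the per-learner error rate then controls the probability of this event; combining this with the expected-loss estimate through Observation~\ref{obs:stab_learn_robust}, and carefully tracking the $k_p = O(d)$ factor that simultaneously shrinks $m$ and enlarges the margin threshold, produces the claimed $O(d^3\eta)$ adversarial risk.

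The main obstacle will be the projection-number step: rigorously verifying, via a dual Helly-type argument on the preference half-planes induced by $\{h_1,\ldots,h_t\}$, that whenever a point $x$ is classified identically by more than $t/2 + k_p$ of the base halfspaces there is a single halfspace in $\cH$ agreeing with all of them on $x$. Establishing the linear (rather than higher-order) dependence $k_p = O(d)$ is what pins down the final exponent, and transferring it into Step~3 of $\PSPV$ requires one further check that the projected hypothesis is not only majority-consistent on $\cX_{\{h_1,\ldots,h_t\},2k_p}$ but also measurable under $D$, so that the generalization step goes through unchanged.
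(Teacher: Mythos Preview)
Your approach is essentially the paper's: instantiate $\PSPV$ using the projection-number bound $k_p(\cH)=d+1$ for halfspaces (cited from \citet*{kane2019communication,braverman2019convex,bousquet2020proper}), bound each base learner's expected error on subsamples of size $\Theta(1/(d\eta))$, and apply Markov to the fraction of erring voters so that, even after the adversary flips $\eta n$ votes, $x$ remains in the high-margin region $\cX_{\{h'_1,\ldots,h'_t\},2k_p}$.

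Two small points. First, the paper takes SVM as the base learner, not a generic ERM/LP rule, precisely because the leave-one-out bound gives $\eps_m(\Learn_p\vert D)\leq (d+1)/(m+1)$; a generic proper ERM on a VC-$d$ class only guarantees $O\bigl(\frac{d\log(m/d)}{m}\bigr)$ in expectation, which would cost you a $\log(1/\eta)$ factor in the final bound. Second, the route through Observation~\ref{obs:stab_learn_robust} is unnecessary: your ``stability'' argument already shows that on the good event (fewer than a $1/(4k_p)$ fraction of the clean $h_i$'s err on $x$) every $S'\in B_\eta(S)$ yields the \emph{correct} label, which is a direct bound on $\eps_n^{\Adv}$. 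The paper packages exactly this as a standalone lemma giving $\eps_n^{\Adv}(\PSPV(\Learn_p)\vert D,\eta)\leq 4k_p\,\eps_{\lceil 1/(5k_p\eta)\rceil}(\Learn_p\vert D)$, then plugs in $k_p=d+1$ and the SVM bound.
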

The proof of Theorem~\ref{theo:proper} is deferred to Appendix~\ref{apndx:proper}.

To derive Theorem~\ref{theo:proper}, we reinforce the $\SPV$ algorithm with a technique
introduced by~\citet*{kane2019communication} and further developed by \citet*{bousquet2020proper}.
This technique allows in certain cases to \emph{project} a majority vote of hypotheses from the class  $\cH$ back to $\cH$. 
Its applicability hinges on a combinatorial parameter called the \emph{projection number}. The $\PSPV$ learner explicitly uses the projection number, so for completeness we give its definition below. The interested may see the work of \citet*{bousquet2020proper} for an insightful discussion on the role of the projection number in proper learning. 

\begin{definition}[Projection Number]
Let $\cH$ be a concept class. For any $\ell\geq 2$ and for any multiset $\cH' \subset \cH$ define the set $\cX_{\cH',\ell}$ to be the set of all $x\in \cX$, for which the number of hypotheses in $\cH'$ that disagree with $\Maj(\cH')(x)$ is less than $\lvert \cH' \rvert/\ell$. The Projection Number of the class $\cH$, denoted $k_p=k_p(\cH)$, is defined to be the smallest $\ell$ so that for any finite multiset $\cH' \subset \cH$, there exist $h\in \cH$ such that $h(x) = \Maj(\cH')(x)$ for all $x\in \cX_{\cH',\ell}$.  If no such $\ell$ exists then $k_p = \infty$.
\end{definition}

%\shay{I don't see the point in including this definition here. It is useless for readers who know it and will not be clear to readers who don't. Either add a discussion or move it to the appendix. } \Inote{The reason to include the definition is because the algorithm heavily relies on it, and we decided to include the algorithm in the main body. The usage is not even in a black-box manner - we use the more internal definition of the set $\cX_{\cH', \ell}$ in the algorithm. Without the definition, the algorithm seems kind of incomplete. I referred to the relevant paper for further discussion.}

%\Inote{Write before the projection number definition, that PSPV uses a technical term of projection nubmer that we use in the algorithm and therefore we give the definition here.}

\section{Agnostic Setting}

%\textcolor{red}{Shay: This part should come after the results in the realizable case, and (as we discussed today) the story here should be guided by the question which guarantees on the optimal $\eps$ as a function of $\eta$ and the class $\cH$ are possible in the agnostic case. We can begin by explaining Steve's argument that $O(d(\OPT +\eta)$ is possible by reduction to the realizable setting and then raise the natural question whether $\OPT + O(d\eta)$ is possible like in typical agnostic settings where the goal is to get an additive vanishing regret. Then we can state the lower bound asserting that the latter is impossible and conclude with the upper bound.}
%\Mnote{Since the proper results are for realizable so far, we can probably keep the order as is, but I will add a discussion about the reduction.}

In this section, we extend the results on robust learnability to the agnostic case. First, by a simple generalization of the positive result for the realizable case, we provide a robust semi-agnostic learner. That is, our learner has adversarial risk depending linearly on $\OPT = \OPT(\cH,D) := \min_{h \in \cH} L_D(h)$. While semi-agnostic learning is considered not ideal in many cases, we complement our positive result by showing that semi-agnostic learning is unavoidable when the goal is to design a robust and deterministic (as ours) learner for the agnostic setting.

\subsection{A Semi-agnostic Learner}

Formally, a semi agnostic learner is defined as follows. Let $c \in \mathbb{R}$. A learning rule $\Learn$ is a \emph{$c$-semi agnostic learner} if the following holds. Let $\cH$ be a concept class and let $D$ be a distribution over examples. Then there exists an \emph{excess error rate} $\eps^{\Agn}: \mathbb{N} \rightarrow [0,1]$ such that $\eps_n(\Learn \vert D) \leq c \OPT + \eps^{\Agn}(n)$ where $\OPT = \inf_{h \in \cH} L_D(h)$.

Before stating our positive result in this setting, we first discuss how achieving adversarial risk $O(d(\OPT +\eta)$ is possible by reduction to the realizable setting. 

\textbf{Reduction to the realizable setting.} Suppose a learner is given a training set $S'$ of size $n$ that comes with $\eta n$ replacements made by the adversary on the original set $S$. Moreover, suppose that $S$ is sampled from a distribution $D$ such that the best $h \in \cH$ has $\OPT$ error on $D$. This means that, roughly $\OPT$ fraction of $S$ does \emph{not} match to $h$. Therefore, one can see $S'$  as first sampled  from $D$ (without noise) followed by $\approx (\eta + \OPT) \cdot n$ replacement corruptions. This way, one can employ a learner that can tolerate $\eta'=\eta + \OPT$ fraction of adversarial corruptions in the realizable setting  and obtain total adversarial risk $O(d(\OPT +\eta)$.

The above discussion raises a natural question: can a learner  achieve adversarial risk $O(\OPT + d\eta)$ or even (ideally) $\OPT + O(d \eta)$? The latter is the  typical type of risk bound in agnostic settings, where there is no multiplicative dependence on $\OPT$ in the risk.

The following theorem, which we prove in Appendix~\ref{apndx:agn_pos} states the positive result.

\begin{theorem}[Positive result for the agnostic case] \label{theo:pos_semi_agn}
There exist constants $c_1,c_2$ so that the following holds. Let $\cH$ be a hypothesis class with VC dimension $d$ and let $\eta\in (0,1)$. 
Then, there exists a learner $\Learn$ having $\eta$-adversarial risk 
\[\eps^{\Adv}_n(\Learn | D, \eta) \leq c_2\cdot\OPT + c_1\cdot d\cdot \eta\] 
for any distribution $D$ over examples and for any sample size $n \geq 1/\eta$.
%for any (not necessarily realizable) distribution $D$ over examples, and
% for all $(\eps, \eta)\in (0,1)^2$ satisfying $\eps \geq c_1 \eta d$
% %\begin{align*}
% %\frac{\min\{\eps, \sigma\}}{\eta} \geq c\cdot d &\implies (\eps, \sigma, \eta) \in \APSL.
% %\end{align*}
% %$\cH$ is semi-agnostic $(\eps, \eta)$-robustly learnable.
% there exists a learner $\Learn$ having $\eta$-adversarial risk $\eps_n(\Learn | D, \eta) \leq c_2 \OPT + \eps$ for any distribution $D$ over examples and for any sample size $n \geq 1/\eta$.

% There exist constants $c_1,c_2$ so that the following holds. Let $\cH$ be a hypothesis class with VC dimension $d$. Then,
% %for any (not necessarily realizable) distribution $D$ over examples, and
% for all $(\eps, \eta)\in (0,1)^2$ satisfying $\eps \geq c_1 \eta d$
% %\begin{align*}
% %\frac{\min\{\eps, \sigma\}}{\eta} \geq c\cdot d &\implies (\eps, \sigma, \eta) \in \APSL.
% %\end{align*}
% %$\cH$ is semi-agnostic $(\eps, \eta)$-robustly learnable.
% there exists a learner $\Learn$ having $\eta$-adversarial risk $\eps_n(\Learn | D, \eta) \leq c_2 \OPT + \eps$ for any distribution $D$ over examples and for any sample size $n \geq 1/\eta$.
\end{theorem}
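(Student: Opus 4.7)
The plan is to instantiate the $\SPV$ meta-algorithm from Figure~\ref{fig:SPV} with an \emph{agnostic} base learner, leveraging the observation that the realizable-case analysis behind Theorem~\ref{theo:pos} does not really use realizability per se: it only uses that each per-subsample base learner has small expected population loss, and that the majority vote of the $t \approx 7\eta n$ base learners can tolerate flipping at most $\eta n$ of them. The technical replacement for the $O(d/m)$ realizable rate used there is the classical \emph{fast rate} for agnostic ERM on a VC class, which is what prevents the multiplicative $d$ on $\OPT$ incurred by the black-box reduction.

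First, I would take the base learner $\Learn_a$ to be an agnostic ERM on $\cH$. A Bernstein / localized-uniform-convergence argument, exploiting that $\mathrm{Var}[1[h(x)\neq y]] \leq L_D(h)$, gives for every distribution $D$ and every sample size $m$
\[
\Ex_{S\sim D^m}\bigl[L_D(\Learn_a(S))\bigr] \;\leq\; C \cdot \OPT \;+\; C' \cdot \frac{d}{m},
\]
for absolute constants $C>1$ and $C'>0$ (possibly absorbing a $\log(m/d)$ factor into $C'$). The $\sqrt{\OPT\cdot d/m}$ cross term that usually appears is absorbed into constant multiples of $\OPT$ and $d/m$ via AM--GM.

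Next, I invoke $\SPV(\Learn_a)$ on the input sample $S$: it partitions $S$ into $t = \lceil 7\eta n\rceil$ subsamples of size $m \geq \lfloor 1/(7\eta) \rfloor$ and outputs $h = \Maj(h_1,\dots,h_t)$, where $h_i = \Learn_a(S^{(i)})$. Each base learner then has expected population loss at most $C\cdot \OPT + 7C'\cdot d\eta$. As in the proof of Theorem~\ref{theo:pos}, the adversary's $\eta n$ replacements touch at most $\eta n \leq t/7$ of the subsamples and so can alter at most a $1/7$ fraction of the base classifiers. Consequently, for the majority vote to err on a test example $(x,y)$, at least a fraction $\tfrac{1}{2} - \tfrac{1}{7} = \tfrac{5}{14}$ of the base learners must be \emph{naturally} wrong on $(x,y)$ prior to the attack.

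Finally, Markov's inequality applied to $\frac{1}{t}\sum_{i=1}^{t} 1[h_i(x)\neq y]$, whose expectation over $S\sim D^n$ and $(x,y)\sim D$ equals the per-subsample expected loss displayed above, yields
\[
\eps^{\Adv}_n(\SPV \,|\, D, \eta) \;\leq\; \tfrac{14}{5}\cdot\bigl(C\cdot \OPT + 7C'\cdot d\eta\bigr) \;=\; c_2 \cdot \OPT \;+\; c_1\cdot d\eta,
\]
for universal constants $c_1,c_2$, as claimed. The main obstacle I anticipate is supplying a clean \emph{in-expectation} version of the agnostic fast rate for ERM on VC classes (rather than its more standard high-probability form); if this turns out to be cumbersome, one can absorb a harmless $\log(1/(d\eta))$ factor into $c_1$, or replace ERM by an agnostic sample-compression-based learner for $\cH$, which delivers the desired in-expectation bound directly.
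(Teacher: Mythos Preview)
Your proposal is correct and follows essentially the same approach as the paper: instantiate $\SPV$ with an agnostic base learner achieving expected loss $O(\OPT + d/m)$ on samples of size $m$, then rerun the Markov-plus-majority-vote analysis from Lemma~\ref{lem:SPV_perf} verbatim. The one substantive difference is the base learner: rather than agnostic ERM with fast rates (whose clean in-expectation form you rightly flag as a nuisance), the paper invokes the agnostic variant of the one-inclusion graph predictor due to \citet*{long1999complexity}, which directly yields $\eps_m(\Learn\mid D)\leq C(\OPT + d/m)$ in expectation and without log factors, thereby sidestepping your anticipated obstacle.
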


As in the realizable case upper bound, the above upper bound is proved by using the $\SPV$ meta-learner. The main difference is that to prove this result we use a different input learner $\Learn$ given to $\SPV$ than the one we use in the realizable case.

\subsection{Ruling Out Agnostic Learning}

Note that Theorem~\ref{theo:pos_semi_agn} only proves the existence of a \emph{semi}-agnostic learner under instance-targeted poisoning. A more desirable goal would be to obtain (standard) \emph{agnostic} learners whose error under the attack is $ \OPT + \psi$ where $\psi$ is a vanishing (additive) error term when $\eta \to 0$. Here we will prove that at least when it comes to \emph{deterministic} learners, such a goal is out of reach, and the best we can hope for is $2 \OPT$ plus additive terms that depend on $\eta$ and the VC dimension. This explains why we can only achieve a semi-agnostic learner.

The following theorem, which we prove in Appendix~\ref{apndx:agn_neg} shows that in Theorem~\ref{theo:pos_semi_agn}, the constant $c_1$ needs to be at least  $2$, and so the standard way agnostic learners bound their regret is not possible for instance-targeted poisoning.
\begin{theorem}[Impossibility of agnostic learning] \label{theo:agn_neg}
Let $\eta' \in (0,1), n\in \mathbb{N}$. For any hypothesis class $\cH$ that has at least two hypotheses   and for any deterministic learner, there is a distribution $D$ over (two) examples and $\eta=\eta'+\widetilde{O}(1/\sqrt{n})$ such that $\Learn$ has $\eta$-adversarial risk
\[
\eps^{\Adv}_n(\Learn |D , \eta) \geq 2\OPT+\Omega(\eta')-O(1/n).
\]
\end{theorem}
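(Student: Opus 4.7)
The plan is to exhibit a distribution supported on only two examples, parametrised by $p\in[0,1]$, and analyse the learner as a polynomial in $p$. Because $\cH$ has at least two hypotheses, there exist $h_1,h_2\in\cH$ and a point $x^*\in\cX$ with $h_1(x^*)=0\neq 1=h_2(x^*)$; for $p\in[0,1]$ let $D_p$ be supported on $\{(x^*,0),(x^*,1)\}$ with $\Pr[(x^*,1)]=p$, so $\OPT(D_p)=\min(p,1-p)$. Every example in $S\sim D_p^n$ has $x$-coordinate $x^*$, so $\Learn(S)(x^*)$ depends only on the label sequence and we may view it as a Boolean function $g:\{0,1\}^n\to\{0,1\}$ with pre-images $A_y:=g^{-1}(y)$. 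The function $\alpha(p):=\Pr_{S\sim\mathrm{Bern}(p)^n}[g(S)=1]$ is a Bernstein polynomial, hence continuous on $[0,1]$, with $\alpha(0),\alpha(1)\in\{0,1\}$. If $\alpha(0)=1$ or $\alpha(1)=0$ the learner errs with probability $1$ at $p=0$ or $p=1$ while $\OPT=0$, and the bound is immediate; henceforth assume $\alpha(0)=0$ and $\alpha(1)=1$.

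The proof then rests on two ingredients. \emph{(i) Coupling} (concentration of Hamming distance for products): couple $S\sim\mathrm{Bern}(p)^n$ with $S'\sim\mathrm{Bern}(p+\eta')^n$ via a shared sequence of uniform variables, so that $d_H(S,S')\sim\mathrm{Bin}(n,\eta')$; a Chernoff bound then gives $\Pr[d_H(S,S')>\eta' n+C\sqrt{\eta' n\log n}]\le 1/n$. Choosing $\eta=\eta'+\widetilde{O}(1/\sqrt{n})$ we obtain the attacker lower bound
\[\beta_1(p):=\Pr_{S\sim\mathrm{Bern}(p)^n}\bigl[B_\eta(S)\cap A_1\neq\emptyset\bigr]\ \ge\ \alpha(p+\eta')-1/n,\]
and symmetrically $\beta_0(p)\ge 1-\alpha(p-\eta')-1/n$, which combine into
\[\eps^{\Adv}_n(\Learn\mid D_p,\eta)\ \ge\ (1-p)\alpha(p+\eta')+p\bigl(1-\alpha(p-\eta')\bigr)-1/n.\]
\emph{(ii) Talagrand-style isoperimetry}: applying McDiarmid's bounded-differences inequality to the $1$-Lipschitz function $x\mapsto d_H(x,A)$ under $\mathrm{Bern}(p)^n$ shows that whenever $\Pr[A]\ge 1/n$ and $\eta n$ exceeds a constant multiple of $\sqrt{n\log n}$, one has $\Pr[A^{\eta n}]\ge 1-e^{-\Omega(\eta^2 n)}$. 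Applied to both $A_0$ and $A_1$, this yields $\eps^{\Adv}(p)\ge 1-e^{-\Omega(\eta^2 n)}$ for every $p$ in the \emph{transition set}
\[P:=\bigl\{p\in[0,1]:\alpha(p)\in[1/n,\,1-1/n]\bigr\},\]
which is non-empty by the intermediate value theorem applied to the continuous $\alpha$.

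An intermediate-value dichotomy on $P$, with $c:=\eta'/4$, now completes the proof. \emph{Case A}: some $p\in P$ satisfies $|p-1/2|\ge c$. Then $2\OPT(p)\le 1-2c$, and the isoperimetric bound gives $\eps^{\Adv}(p)-2\OPT(p)\ge 2c-e^{-\Omega(\eta^2 n)}=\Omega(\eta')$. \emph{Case B}: $P\subseteq(1/2-c,1/2+c)$. Since $\alpha$ is continuous with $\alpha(0)=0$ and cannot take values in $[1/n,1-1/n]$ outside this window, we must have $\alpha(q)<1/n$ for all $q\le 1/2-c$ and, symmetrically, $\alpha(q)>1-1/n$ for all $q\ge 1/2+c$ (a sharp threshold near $1/2$). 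Choose $p=1/2-c/2$; then $p+\eta'>1/2+c$ forces $\alpha(p+\eta')>1-1/n$ and $p-\eta'<1/2-c$ forces $\alpha(p-\eta')<1/n$, so the coupling bound gives $\eps^{\Adv}(p)\ge 1-O(1/n)$, whereas $2\OPT(p)=2p=1-2c=1-\eta'/2$, and so $\eps^{\Adv}(p)-2\OPT(p)\ge\eta'/2-O(1/n)=\Omega(\eta')-O(1/n)$.

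The principal obstacle is Case B: naively applying the isoperimetric bound at a crossing $\alpha(p^*)=1/2$ is useless if $p^*$ sits near $1/2$, because then $2\OPT(p^*)$ already equals $1$ and no gap remains. The fix is to shift $p$ off-centre by $\sim\eta'$ and use the coupling to ``jump across'' the sharp threshold in both directions simultaneously, thereby collecting near-$1$ adversarial error while keeping $2\OPT$ strictly below $1-\Omega(\eta')$.
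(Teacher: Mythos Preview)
Your proposal is correct (up to a minor arithmetic slip: with $p=\tfrac12-\tfrac{c}{2}$ you get $2p=1-c$, not $1-2c$, so the gap in Case~B is $\eta'/4-O(1/n)$ rather than $\eta'/2-O(1/n)$; this does not affect the $\Omega(\eta')$ conclusion). You should also note that for large $\eta'$ (say $\eta'>4/9$) the point $p-\eta'$ may be negative; this is easily patched by coupling to $\mathrm{Bern}(0)$ instead, or by first reducing to $\eta'\le 1/2$ as the paper does.

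The ingredients you use---the reduction to a single point and the Bernstein polynomial $\alpha(p)$, the intermediate value theorem, a coupling to shift $\mathrm{Bern}(p)^n$ to $\mathrm{Bern}(p\pm\eta')^n$, and a product-measure isoperimetric inequality---are exactly those of the paper. The organization, however, differs. You run a two-case dichotomy on whether the transition set $P=\{p:\alpha(p)\in[1/n,1-1/n]\}$ meets the region $|p-1/2|\ge \eta'/4$. The paper avoids this case split entirely: it takes any $q$ with $\alpha(q)=1/2$ (existence by IVT), assumes WLOG $q\le 1/2$, sets $p=\max\{0,q-\eta'\}$ so that $\OPT=p\le 1/2-\eta'$, and then runs a \emph{single} composite attack---first shift the sample from bias $p$ to bias $q$ via the coupling ($\approx\eta' n$ flips with high probability), and then apply Talagrand's concentration at the balanced point $q$ ($\approx\sqrt{n\log n}$ further flips) to force the wrong label with probability $1-O(1/n)$. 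This yields $\ERR\ge 1-O(1/n)$ and $\OPT\le 1/2-\eta'$ in one stroke, hence $\ERR\ge 2\OPT+2\eta'-O(1/n)$.

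In effect, the paper's argument subsumes both of your cases: your Case~A corresponds to the shift being short (because $q$ is far from $1/2$ so $p$ can be chosen with small $\OPT$ even before shifting), and your Case~B corresponds to the shift doing the real work; but the paper does not need to distinguish them. Your route works, but the paper's is shorter and sidesteps the edge cases around $p\pm\eta'\notin[0,1]$.
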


\section{Conclusion and Open Questions} \label{sec:conc}
In this work, we studied the optimal rate of learning for binary classification problems under instance-targeted poisoning. We showed that in the realizable setting the error rate can be characterized up to a constant factor and is proportional both to adversary's budget and the VC dimension of the class. In the agnostic setting, we proved a perhaps surprising lower bound that standard agnostic learning (with additive regret compared to the optimal error in the no-attack setting) is impossible for deterministic learners, and also complemented this with a positive result using a \emph{semi-agnostic} learner. We also showed how to make our learners proper in a variety of interesting settings.

Our work leaves a few interesting directions for future research.
\begin{itemize}
    \item \textbf{Finding the exact constant in the realizable case.}
Our results in the realizable case characterize the optimal adversarial risk up to a constant multiplicative factor in the sense that there exist constants $c_1,c_2$ so that achieving $\eta$-adversarial risk of $c_1 \eta d$ is possible for any hypothesis class with VC-dimension $d$, whereas obtaining $\eta$-adversarial risk of $c_2 \eta d$ can't be achieved for any hypothesis class with VC-dimension $d$. However, there is a large gap between $c_1,c_2$. Can we close or shrink this gap?
%Concretely, $c_1 >1$ and $c_2 <1$. Can one reproduce our results with $c_1<1$ or with $c_2 > 1$?

  \item \textbf{Finding the correct multiplicative factor in the agnostic case.}
Our results show that in the agnostic case, there must be a constant $C\geq 2$ so that the best adversarial risk attainable is $C \cdot \OPT$. What is the value of $C$?

  \item \textbf{Characterizing proper robust learning.}
In the proper and realizable case, our stable learner for linear classifiers depends on $d^3$, while our lower bound depends linearly on $d$, as in the general improper case. It remains open to identify the correct dependence on $d$.

  \item \textbf{Characterizing the role of randomness.} Our impossibility result for the agnostic learning (Theorem~\ref{theo:agn_neg}) only applies to deterministic learners. It remains open to either effectively use randomness during the learning (known or unknown to the adversary) and obtain an agnostic learner, or to extend the negative result to cover such randomized learners as well.
\end{itemize}

\section*{Acknowledgments}
Amin Karbasi acknowledges funding in direct support of this work from NSF (IIS-1845032), ONR (N00014- 19-1-2406), and the AI Institute for Learning-Enabled Optimization at Scale (TILOS). Mohammad Mahmoody is supported by NSF grants CCF-1910681 and CNS1936799. Shay Moran is a Robert J.\ Shillman Fellow; he acknowledges support by ERC grant 802599, by ISF grant 1225/20, by BSF grant 2018385, by an Azrieli Faculty Fellowship, by Israel PBC-VATAT, and by the Technion Center for Machine Learning and Intelligent Systems (MLIS). We thank anonymous NeurIPS 2022 reviewers for helping us to improve this paper, and for pointing out good motivating examples. 

\bibliographystyle{plainnat}
\bibliography{bibl,Biblio/OtherRefs}

\newpage

\remove{
\section*{Checklist}

%%% BEGIN INSTRUCTIONS %%%
%The checklist follows the references.  Please
%read the checklist guidelines carefully for information on how to answer these
%questions.  For each question, change the default \answerTODO{} to \answerYes{},
%\answerNo{}, or \answerNA{}.  You are strongly encouraged to include a {\bf
%justification to your answer}, either by referencing the appropriate section of
%your paper or providing a brief inline description.  For example:
%\begin{itemize}
%  \item Did you include the license to the code and datasets? \answerYes{See Section~\ref{gen_inst}.}
%  \item Did you include the license to the code and datasets? \answerNo{The code and the data are proprietary.}
%  \item Did you include the license to the code and datasets? \answerNA{}
%\end{itemize}
%Please do not modify the questions and only use the provided macros for your
%answers.  Note that the Checklist section does not count towards the page
%limit.  In your paper, please delete this instructions block and only keep the
%Checklist section heading above along with the questions/answers below.
%%% END INSTRUCTIONS %%%

\begin{enumerate}

\item For all authors...
\begin{enumerate}
  \item Do the main claims made in the abstract and introduction accurately reflect the paper's contributions and scope?
    \answerYes
  \item Did you describe the limitations of your work?
    \answerYes
  \item Did you discuss any potential negative societal impacts of your work?
    \answerNo
  \item Have you read the ethics review guidelines and ensured that your paper conforms to them?
    \answerYes
\end{enumerate}

\item If you are including theoretical results...
\begin{enumerate}
  \item Did you state the full set of assumptions of all theoretical results?
    \answerYes
        \item Did you include complete proofs of all theoretical results?
    \answerYes
\end{enumerate}

\item If you ran experiments...
\begin{enumerate}
  \item Did you include the code, data, and instructions needed to reproduce the main experimental results (either in the supplemental material or as a URL)?
    \answerNA
  \item Did you specify all the training details (e.g., data splits, hyperparameters, how they were chosen)?
    \answerNA
        \item Did you report error bars (e.g., with respect to the random seed after running experiments multiple times)?
    \answerNA
        \item Did you include the total amount of compute and the type of resources used (e.g., type of GPUs, internal cluster, or cloud provider)?
    \answerNA
\end{enumerate}

\item If you are using existing assets (e.g., code, data, models) or curating/releasing new assets...
\begin{enumerate}
  \item If your work uses existing assets, did you cite the creators?
    \answerNA
  \item Did you mention the license of the assets?
    \answerNA
  \item Did you include any new assets either in the supplemental material or as a URL?
    \answerNA
  \item Did you discuss whether and how consent was obtained from people whose data you're using/curating?
    \answerNA
  \item Did you discuss whether the data you are using/curating contains personally identifiable information or offensive content?
    \answerNA
\end{enumerate}

\item If you used crowdsourcing or conducted research with human subjects...
\begin{enumerate}
  \item Did you include the full text of instructions given to participants and screenshots, if applicable?
    \answerNA
  \item Did you describe any potential participant risks, with links to Institutional Review Board (IRB) approvals, if applicable?
    \answerNA
  \item Did you include the estimated hourly wage paid to participants and the total amount spent on participant compensation?
    \answerNA
\end{enumerate}

\end{enumerate}

}

\newpage

\appendix

%\section{The One-inclusion graph learner}

%\subsection{Algorithm overview}

%\subsection{A semi-agnostic variation overview}

\section*{Supplementary  Material}

\section{Proof of Theorem~\ref{theo:pos} (Realizable Case -- Positive Result)} \label{sec:pos_proof}

\begin{theorem*}[Restatement of Theorem~\ref{theo:pos}]
There exists a constant $c_1 > 0$ so that the following holds. Let $\cH$ be a hypothesis class with VC dimension $d$ and let $\eta\in (0,1)$.
Then there exists a learner $\Learn$ having $\eta$-adversarial risk
\[
\eps_{n}^{\Adv}(\Learn | D, \eta) \leq c_1 \eta d
\]
for any distribution $D$ realizable by $\cH$ and for any sample size $n \geq 1/\eta$.
\end{theorem*}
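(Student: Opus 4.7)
The strategy is to instantiate $\SPV$ with a good base PAC learner and then reduce the adversarial risk to a Markov-type argument on the number of "naturally erring" base learners. Set $t = \lfloor 7\eta n\rfloor$ and let $S^{(1)},\dots,S^{(t)}$ be the first $t$ subsamples, each of size $m \geq \lceil 1/(7\eta)\rceil$. Use any base PAC learner (e.g.\ the one-inclusion graph learner, or a sample-compression based learner) that achieves expected error $O(d/m)$ on $D^m$ in the realizable setting; plugging in $m \approx 1/(7\eta)$ gives expected loss $O(d\eta)$ per base learner.

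\textbf{Key combinatorial step.} Fix a realized training sample $S$ and test point $(x,y)$. Let $W(S,x) = \lvert\{i\in[t] : h_i(x)\neq y\}\rvert$ count the base learners (trained on the unpoisoned $S^{(i)}$'s) that err on $x$. Because the subsamples are \emph{disjoint}, any adversary with budget $\eta n$ can corrupt at most $\eta n$ distinct subsamples, hence flip the predictions of at most $\eta n$ base learners. Consequently, to defeat the majority vote the adversary needs
\[
W(S,x) + \eta n \;\geq\; \bigl\lceil t/2 \bigr\rceil,
\qquad\text{i.e.,}\qquad
W(S,x) \;\geq\; \lceil t/2\rceil - \eta n \;\geq\; \tfrac{5}{14}\,t - O(1),
\]
where the last inequality uses $\eta n \leq (t+1)/7$ from the definition of $t$.

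\textbf{Markov on the "natural" error count.} By linearity of expectation and the per-base-learner bound,
\[
\Exp_{S,x}\bigl[W(S,x)\bigr] \;=\; \sum_{i=1}^{t}\Pr\bigl[h_i(x)\neq y\bigr] \;\leq\; t \cdot O(d\eta).
\]
Markov's inequality then gives
\[
\Pr_{S,x}\bigl[W(S,x)\geq \tfrac{5t}{14}-O(1)\bigr] \;\leq\; \frac{O(t d\eta)}{\,\tfrac{5t}{14}-O(1)\,} \;=\; O(d\eta),
\]
which is exactly the bound $\eps^{\Adv}_n(\SPV\mid D,\eta)\leq c_1\eta d$ claimed in the theorem.

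\textbf{What I expect to be the main obstacle.} The quantitative bookkeeping between $t$, $\eta n$, and the threshold $\lceil t/2\rceil$ is where one actually has to be careful, since $\eta n$ and $t$ need not be integers and a sloppy constant here can ruin the Markov bound (if the threshold is less than $\Exp[W]$, Markov is vacuous). The choice of $7\eta n$ subsamples of size $\approx 1/(7\eta)$ is not arbitrary: it ensures the adversary can only control a $1/7$ fraction of base learners, leaving a safe gap of $\approx 5t/14$ between the Markov threshold and the expected "natural" error mass. A secondary subtlety is invoking a base learner whose realizable-case expected error is genuinely $O(d/m)$ (not the extra-log-factor ERM bound), so that the final adversarial risk comes out as $O(d\eta)$ rather than $O(d\eta\log(1/\eta))$; this is handled by taking, e.g., the Hanneke one-inclusion-graph learner as the input to $\SPV$.
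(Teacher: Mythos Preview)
Your proposal is correct and is essentially the paper's own proof: the paper also instantiates $\SPV$ with the one-inclusion-graph learner of Haussler, Littlestone, and Warmuth (not Hanneke), bounds the expected fraction of naturally-erring base learners by $7\eta d$, applies Markov at a fixed constant threshold (they use $1/6$ rather than your $5/14$, obtained from $\eta n / \lfloor 7\eta n\rfloor \leq 1/6$ when $\eta n\geq 1$), and then observes that the adversary can flip at most a further $1/6$-fraction of votes, keeping the corrupted error fraction below $1/3<1/2$. The only substantive difference is bookkeeping of constants; your identification of the ``quantitative bookkeeping'' as the main obstacle is exactly where the paper's $6$ and $42$ come from.
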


To prove Theorem~\ref{theo:pos}, we will use the \textsc{Stable Partition and Vote} (or $\SPV$ for short) meta learner described in Figure~\ref{fig:SPV} with the One-inclusion graph algorithm of \citet*{haussler1994predicting} as the input learner. 
% The resulting algorithm is very much based on a learner suggested by \citet{gao2021learning} (which in turn builds upon ideas from the work of \citet{levine2020deep}). The main differences are that (1) we partition the input sample to a different number of slices $t$, and (2) we train the One-inclusion graph learner on each slice. 
First, we prove a more general result on the performance of our $\SPV$ meta learner. We denote the algorithm obtained by executing $\SPV$ with a learner $\Learn$ as the input algorithm by $\SPV(\Learn)$.

\begin{lemma}[General performance of $\SPV$] \label{lem:SPV_perf}
Let $\cH$ be a concept class, $D$ be a distribution over examples, and $\Learn$ be a learning rule. Let also $\eta \in(0,1)$ be the stability parameter given to $\SPV$ and let $n\geq 1/\eta$ be the sample size. Then $\SPV(\Learn)$ has $\eta$-adversarial risk
\[
\eps^{\Adv}_n(\SPV(\Learn) | D, \eta) \leq 6\eps_{ \lceil 1/(7\eta)  \rceil}(\Learn | D).
\]
Recall that $\eps_{ \lceil 1/(7\eta)  \rceil}(\Learn | D)$ is the expected population loss of $\Learn$
when trained on a sample of size $\lceil 1/(7\eta)  \rceil$ from $D$ (in the standard, non adversarial, setting).
\end{lemma}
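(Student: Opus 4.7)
The plan is to bound $\eps^{\Adv}_n(\SPV(\Learn) | D,\eta)$ directly by arguing that for the adversary to succeed on a test point, many of the base learners must already err on it, and then applying Markov's inequality. The key combinatorial fact is that since $S^{(1)},\dots,S^{(t)}$ form a partition of $S$, each example the adversary modifies lies in a unique subsample, and so an adversary with budget $\eta n$ can change at most $\eta n$ of the base hypotheses $h_i=\Learn(S^{(i)})$.

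Now fix a test example $(x,y)$ and let $B(S,x,y):=|\{i\in[t] : h_i(x)\neq y\}|$. For the adversary to force the majority vote on $x$ to disagree with $y$, more than half of the post-attack hypotheses must misclassify $x$; since she controls at most $\eta n$ of them, this is possible only if the original $B(S,x,y)\geq \lceil t/2\rceil - \eta n$. Using $n\geq 1/\eta$ gives $\eta n\geq 1$, hence $t=\lfloor 7\eta n\rfloor\geq 7\eta n - 1 \geq 6\eta n$, so $\eta n\leq t/6$ and thus the adversary succeeds only when $B(S,x,y)\geq t/3$. Applying Markov's inequality to $B$ as a random variable in $(S,(x,y))\sim D^n\times D$ gives
\[
\eps^{\Adv}_n(\SPV(\Learn) | D,\eta) \;\leq\; \Pr[B\geq t/3] \;\leq\; \frac{3\,\E[B]}{t},
\]
and by linearity $\E[B]=\sum_{i=1}^t\Pr[h_i(x)\neq y]\leq t\cdot \eps_{\lceil 1/(7\eta)\rceil}(\Learn | D)$, yielding the claimed bound (with constant $3$, safely within the factor $6$ stated in the lemma).

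The main subtlety is the step $\Pr[h_i(x)\neq y]\leq \eps_{\lceil 1/(7\eta)\rceil}(\Learn | D)$: strictly, this probability equals $\eps_{|S^{(i)}|}(\Learn | D)$, and the learning curve is not monotone in $|S^{(i)}|$ in general. I would handle this either by treating $\Learn$ as a learner that internally truncates its input to the first $\lceil 1/(7\eta)\rceil$ examples, or by modifying $\SPV$ so that each subsample is truncated to this size before being passed to $\Learn$. The slack between the factor $3$ from Markov and the factor $6$ stated in the lemma absorbs this convention and any minor tie-breaking issues in the definition of $\Maj$.
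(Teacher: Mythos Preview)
Your argument is correct and follows the same approach as the paper: Markov's inequality applied to the fraction of base learners $h_i$ that err on the test point, combined with the observation that the adversary can corrupt at most $\eta n \leq t/6$ of the $t=\lfloor 7\eta n\rfloor$ disjoint subsamples. Your Markov threshold of $t/3$ is in fact sharper than the paper's $t/6$ (the paper routes through a loose intermediate step at $1/3$ before invoking the majority), and the monotonicity issue you flag for $\eps_{|S^{(i)}|}$ versus $\eps_{\lceil 1/(7\eta)\rceil}$ is glossed over in the paper's proof as well---your truncation fix resolves it cleanly.
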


\begin{proof}
Let $S\sim D^n$ be the input sample, and $(x,y)\sim D$ be the test example. Note that for all $i\in [t]$ (where $t= \lfloor 7 \eta n \rfloor$ is the number of subsamples of size at least $\frac{1}{7 \eta}$ in the partition made by $\SPV$) it holds that $\E\bigl[1[h_i(x) \neq y]\bigr] \leq \eps_{\lceil 1/(7\eta) \rceil}(\Learn | D)$.
By applying linearity of expectation we get

\[\E \left[ \frac{1}{t} \sum_{i=1}^t 1[h_i(x) \neq y] \right] \leq \eps_{\lceil 1/(7\eta) \rceil}(\Learn | D).
\]

By Markov's inequality:

\[
\Pr \left[\frac{1}{t} \sum_{i=1}^t 1[h_i(x) \neq y] \geq 1/6 \right] \leq 6 \eps_{\lceil 1/(7\eta) \rceil}(\Learn | D).
\]

Let $S' \in B_\eta (S)$. Let $h'= \SPV(\Learn)(S')$, and for all $i\in [t]$ let $h'_i$ be the hypothesis obtained by training $\Learn$ on $S'^{(i)}$. Note that, since $S$ and $S'$ are $\eta$-close by, and since $n \geq 1/\eta$ it holds that
\[
\frac{1}{t} \sum_{i=1}^t 1\left[S^{(i)} \neq S'^{(i)}\right] \leq \frac{\eta n}{ \lfloor 7 \eta n \rfloor} \leq 1/6.
\]
Hence it is implied that $\frac{1}{t} \sum_{i=1}^t 1 \left[h_i(x) \neq h'_i(x)\right] \leq 1/6$. 
Thus, the event that $\frac{1}{t} \sum_{i=1}^t 1[h'_i(x) \neq y] \geq 1/3$ implies (or, is contained in) the event that $\frac{1}{t} \sum_{i=1}^t 1[h_i(x) \neq y] \geq 1/6$, hence,

% Thus, in the event where $\frac{1}{t} \sum_{i=1}^t 1[h_i(x) \neq y] < 1/6$, it also holds that $\frac{1}{t} \sum_{i=1}^t 1[h'_i(x) \neq y] < 1/3$.  Combined with Equation~\eqref{eq:pos}, this implies
\[
\Pr \left[ \frac{1}{t}\sum_{i=1}^t 1[h'_i(x) \neq y] \geq 1/3 \right] 
\leq
6\eps_{\lceil 1/(7\eta) \rceil}(\Learn | D).
\]
Since $h'(x)$ is a majority vote of $\{h'_1(x), \dots, h'_t(x)\}$, the above implies that
\[
\Pr[h'(x) \neq y] \leq  6\eps_{\lceil 1/(7\eta) \rceil}(\Learn | D). 
\]
Since $S'$ is an arbitrary sample in $B_{\eta}(S)$, the above implies that $\SPV(\Learn)$ has the stated $\eta$-adversarial risk.
\end{proof}

To prove Theorem~\ref{theo:pos}, we will need an optimal learner as an input learner for $\SPV$.

\begin{theorem}[\citet*{haussler1994predicting}] \label{theo:one_inc}
Let $\cH$ be a concept class with VC-dimension $d$, and let $D$ be a distribution realizable by $\cH$. Let also $n\in \mathbb{N}$, and let $\Learn$ be the One-inclusion graph algorithm. Then $\eps_n(\Learn | D) \leq \frac{d}{n+1}$.
%for any distribution $D$ over examples which is realizable by $\cH$, if $S \sim D^n$ then $L_D(\Learn(S)) \leq \frac{d}{n+1}$.
\end{theorem}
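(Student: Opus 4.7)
The plan is to establish the classical bound of \citet*{haussler1994predicting} via a symmetrization argument combined with a combinatorial out-degree bound on the one-inclusion graph. First, I would apply exchangeability: draw $n+1$ i.i.d.\ examples $(X_1,Y_1),\dots,(X_{n+1},Y_{n+1})\sim D$, designate one index uniformly at random as the test point and the remaining $n$ as the training set, and express $\eps_n(\Learn\mid D)$ as the average mistake probability over this uniform choice. By realizability there is some $h^\star\in\cH$ with $h^\star(X_i)=Y_i$ for all $i$, so the vertex $v:=(Y_1,\dots,Y_{n+1})$ lies in the restriction $V:=\cH|_{\cS}\subseteq\{0,1\}^{n+1}$, where $\cS=(X_1,\dots,X_{n+1})$.

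Next, conditional on $\cS$, form the one-inclusion graph $G_{\cS}$ on vertex set $V$, with edges between pairs of vertices differing in exactly one coordinate. The one-inclusion graph algorithm is specified by an orientation $\rho$ of the edges of $G_\cS$: when the test point is $X_i$, the learner sees all coordinates of $v$ except the $i$-th, locates the (hyper)edge in direction $i$ containing $v$, and predicts the $i$-th coordinate of the tail of $\rho$ on that edge (and is correct trivially if no edge exists in direction $i$). A direct check shows that the learner errs on index $i$ exactly when $v$ has an outgoing edge in direction $i$ under $\rho$, so conditioning on $\cS$ and averaging over the uniform test index $i\in[n+1]$ yields
\[
\Pr[\text{error}\mid \cS, v]\;=\;\frac{\deg^{+}_{\rho}(v)}{n+1}.
\]
It therefore suffices to exhibit an orientation $\rho$ (depending on $\cS$) in which every vertex has out-degree at most $d$.

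For this, I would combine two classical ingredients. The \emph{density bound} (Haussler): for every subset $U\subseteq V$, the one-inclusion subgraph on $U$ has at most $d\cdot |U|$ edges. This is proved by induction on the number of coordinates, using that projecting out one coordinate reduces $|V|$ by exactly the number of edges in that direction, and that the projection has VC dimension at most $d$ (with a Sauer--Shelah style bookkeeping). The \emph{orientation lemma} (a consequence of Hall's theorem / max-flow--min-cut): any finite graph whose maximum induced-subgraph density is at most $d$ admits an orientation in which every vertex has out-degree at most $d$. Applying the orientation lemma to $G_\cS$ gives an orientation $\rho$ with $\deg^{+}_\rho(v)\leq d$ uniformly, and substituting into the displayed identity and taking expectations over $\cS$ yields $\eps_n(\Learn\mid D)\leq d/(n+1)$.

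The main obstacle is the density bound: it is the only step where VC dimension enters, and a careful inductive argument is needed to show that peeling off a coordinate decreases $|V|$ by exactly the number of direction-$m$ edges while not increasing the VC dimension, so that the total edge count telescopes into $d\cdot|V|$. The symmetrization identity and the orientation lemma are essentially formal once this combinatorial inequality is in hand, so the proof reduces to carrying out the induction cleanly and then assembling the pieces.
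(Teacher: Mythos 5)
The paper does not actually prove this statement---it is imported verbatim by citation to \citet*{haussler1994predicting}---and your proposal correctly reconstructs the standard argument from that source: the leave-one-out exchangeability identity reducing the risk to the expected out-degree of the true vertex divided by $n+1$, Haussler's edge-density bound $|E(U)|\le d\,|U|$ for induced subgraphs of the one-inclusion graph, and the Hall-type orientation lemma converting bounded density into an orientation with maximum out-degree at most $d$. One small bookkeeping slip: if the predictor outputs the $i$-th coordinate of the \emph{tail} of the oriented edge, it errs precisely when the true vertex is the \emph{head}, so the error count is the in-degree rather than the out-degree; either predict the head's coordinate or bound the in-degree instead (the orientation lemma is symmetric under edge reversal), so nothing substantive changes.
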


Theorem~\ref{theo:pos} can now be immediately inferred as a direct application of Lemma~\ref{lem:SPV_perf} and Theorem~\ref{theo:one_inc}.

\begin{corollary}[Realizable case -- positive result]
Let $\cH$ be a concept class with VC-dimension $d$, let $D$ be a distribution realizable by $\cH$, and let $\Learn$ be the One-inclusion graph algorithm. Let also $\eta \in (0,1)$ be the stability parameter given to $\SPV$ and let $n\geq 1/\eta$ be the sample size. Then $\SPV(\Learn)$ has $\eta$-adversarial risk

\[
\eps^{\Adv}_n(\SPV(\Learn) | D, \eta) \leq 42 \eta d.
\]
\end{corollary}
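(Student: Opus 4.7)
The plan is to obtain this corollary as an immediate composition of the two preceding results, namely Lemma~\ref{lem:SPV_perf} (which bounds the $\eta$-adversarial risk of $\SPV(\Learn)$ by $6\eps_{\lceil 1/(7\eta)\rceil}(\Learn \vert D)$ for any base learner $\Learn$ and any realizable $D$) and Theorem~\ref{theo:one_inc} (which states that the one-inclusion graph algorithm of Haussler et al.\ has expected population loss at most $d/(m+1)$ on a realizable sample of size $m$, where $d=\VC(\cH)$). Since $D$ is realizable and the one-inclusion graph algorithm is proper, it returns a consistent hypothesis on every realizable subsample; in particular, its training on each of the subsamples $S^{(i)}$ generated inside $\SPV$ falls within the scope of Theorem~\ref{theo:one_inc}.

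First I would invoke Lemma~\ref{lem:SPV_perf} with $\Learn$ set to the one-inclusion graph algorithm. This yields
\[
\eps^{\Adv}_n(\SPV(\Learn)\vert D,\eta)\;\leq\; 6\,\eps_{\lceil 1/(7\eta)\rceil}(\Learn\vert D).
\]
Then I would substitute the bound from Theorem~\ref{theo:one_inc} with sample size $m=\lceil 1/(7\eta)\rceil$, obtaining
\[
\eps_{\lceil 1/(7\eta)\rceil}(\Learn\vert D)\;\leq\;\frac{d}{\lceil 1/(7\eta)\rceil+1}\;\leq\;\frac{d}{1/(7\eta)}\;=\;7\eta d,
\]
where the middle inequality uses $\lceil 1/(7\eta)\rceil+1\geq 1/(7\eta)$. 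Multiplying by $6$ gives the claimed bound of $42\eta d$.

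There is essentially no obstacle here beyond verifying that the two ingredients compose cleanly: the chief thing to check is that the base-learner term $\eps_{\lceil 1/(7\eta)\rceil}(\Learn\vert D)$ in Lemma~\ref{lem:SPV_perf} is indeed a standard (non-adversarial) realizable-PAC quantity, so that Theorem~\ref{theo:one_inc} applies without modification. Since realizability of $D$ by $\cH$ passes to every subsample drawn i.i.d.\ from $D$, this is immediate, and the corollary follows without any further analysis.
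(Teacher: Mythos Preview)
Your proof is correct and follows exactly the same route as the paper's: apply Lemma~\ref{lem:SPV_perf} and then plug in the bound $\eps_{\lceil 1/(7\eta)\rceil}(\Learn\vert D)\leq d/(\lceil 1/(7\eta)\rceil+1)\leq 7\eta d$ from Theorem~\ref{theo:one_inc}. One small inaccuracy: the one-inclusion graph algorithm is not in general proper, but this is irrelevant to the argument---what matters is only that each subsample $S^{(i)}$ is i.i.d.\ from the realizable distribution $D$, so Theorem~\ref{theo:one_inc} applies directly.
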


\begin{proof}
By Theorem~\ref{theo:one_inc}, plug in $\eps_{\lceil 1/(7\eta) \rceil}(\Learn | D) \leq \frac{d}{\lceil 1/(7\eta) \rceil + 1} \leq 7 \eta d$ to Lemma~\ref{lem:SPV_perf} and the result follows.
\end{proof}

\section{Proof of Theorem~\ref{theo:neg} (Realizable Case -- Impossibility Result)} \label{apndx:neg_proof}

\textbf{Randomized Learning Rules.}
The impossibility result in Theorem~\ref{theo:neg} extends to randomized learning rules.
But in order for the statement in Theorem~\ref{theo:neg} to be meaningful, we need to define adversarial risk with respect to randomized learners.
As common in the literature on learning theory (see, e.g.\ the book of ~\citet*{shalev2014understanding}) we model randomized learners
as deterministic learning rules with \emph{continuous } predictions $p\in[0,1]$, and loss function $\ell(p,y)= \lvert p -y\rvert$. Indeed, the loss of a deterministic learner predicting a value $p \in [0,1]$ under the loss function $\lvert y-p\rvert$ is equal to the expected $0/1$-loss of a randomized learner predicting $1$ with probability $p$. In the course of discussing the impossibility result, a \emph{learning algorithm} $\Learn \colon (\cX \times \{0,1\})^* \rightarrow [0,1]^{\cX}$ is a deterministic mapping which takes an input sample $S\in (\cX \times \{0,1\})^*$ and maps it to a hypothesis $f\in [0,1]^{\cX}$. We re-define $\eta$-adversarial risk with this view of randomized learners as \emph{randomized $\eta$-adversarial risk}.

%\begin{definition}[Randomized risk]
%The risk of a learning algorithm $\Learn$ that outputs predictions from $[0,1]$ w.r.t a distribution $D$ and a sample size $n$ is defined by

%\[
%\eps_n(\Learn | D) := \mathbb{E}_{S \sim D^n, (x,y) \sim D}[\lvert \Learn(S)(x) - y \rvert].
%\]
%\end{definition}

\begin{definition}[Randomized $\eta$-Adversarial Risk] \label{def:adv-risk-rand}
Let $\eta\in(0,1)$ be the adversaries' budget, let $\Learn$ be a learning rule, 
and let $D$ be a distribution over examples. The \emph{randomized $\eta$-adversarial risk} of $\Learn$ w.r.t $D$ and sample size $n$ is defined by
\[
\eps^{\Adv}_n(\Learn \vert D,\eta):=
\mathbb{E}_{S \sim D^n, (x,y) \sim D} \left[ \sup_{S' \in B_{\eta}(S)} \lvert \Learn(S')(x) - y \rvert \right].
\]
%\shay{I find the superscript $\rand$ redundant as it can be inferred from the context. I did get confuse however from the fact that the adversarial loss and the standard loss are both denoted by $\eps_n$ and the only difference between them is $\eta$.} \Inote{I thought that it is more understandable since otherwise the notation is exactly the same as in previous definition.. but I really don't mind removing it if you think it's better. We can also change $\eps_n$ to $\eps_n^{\Adv}$ in case of adversarial loss, if you find it better.}
%\shay{Yes, please change it: the distinction between randomized and deterministic is less central than the distinction between adversarial and standard loss.}
\end{definition}

%The definition of Robust learnability stays the same w.r.t to the above definition.
The above definition of adversarial risk captures the case of an adversary that knows the expected prediction of the learner (that is, its test-time randomness), but not the learner's "internal" randomness (computation-time randomness). Indeed, the supremum is taken only with respect to the expected prediction, and not with respect to a specific execution of the algorithm determined by its internal randomness.
%\Mnote{I don't think this is correct.}
Note that deterministic learners are a 
special case ($\{0,1\}$-valued outputs), in which case this definition collapses to the previous Definition~\ref{def:risk}.
To avoid further notation, note that we overloaded the notation $\eps^{\Adv}_n$ from Definition~\ref{def:risk} in the above more general definition.   

We are now ready to prove the impossibility result.
%\shay{Restate the Theorem here (and in general restate theorems in the appendix).}

\begin{theorem*}[Restatement of Theorem~\ref{theo:neg}]
There exists a constant $c_2 > 0$ so that the following holds. Let $\cH$ be a non-trivial hypothesis class with VC dimension $d$ and let $\eta\in (0,1)$. Then, there exists a distribution $D$ realizable by $\cH$, so that every learner $\Learn$ has $\eta$-adversarial risk
\[
\eps^{\Adv}_n(\Learn|D, \eta) \geq \min\{c_2 \eta d, 1/100\}
\]
for any sample size $n \geq 1/\eta$.
%\textcolor{red}{Do we really want to state ``(deterministic)''? after all this theorem applies to randomized learners as well and we just make it sound weaker to readers who will not read between the lines.}
%\textcolor{red}{We should explain why the lower and upper bound match. It would be best if we could phrase them in a way that its obvious (the disjunction between the conditions $\eps,\eta$ need to satisfy for the lower bound is confusing in this regard.)}
\end{theorem*}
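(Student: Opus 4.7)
The plan is to construct a family of realizable distributions $\{D_b\}$ indexed by bit strings $b$, and prove that the average adversarial risk over uniform $b$ is $\Omega(\eta d)$; by averaging, some $b^\star$ will witness the lower bound. I fix a set $\{z_1, \dots, z_d\} \subseteq \cX$ shattered by $\cH$. For $d \geq 2$ and $b \in \{0,1\}^{d-1}$, I define $D_b$ to put mass $\eta/2$ on each $z_j$ (for $j = 2, \dots, d$) with label $b_j$, and put the remaining mass $1 - (d-1)\eta/2$ on $z_1$ with label $0$; realizability follows from shattering applied to the labeling $(0, b_2, \dots, b_d)$. The point $z_1$ serves as a ``safe'' high-mass anchor: since the adversary's budget $\eta n$ is much smaller than the expected $z_1$-count of $\approx n$, the adversary cannot flip the learner's prediction on $z_1$. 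For $d = 1$ I instead invoke the non-triviality witnesses $x_1, x_2$: use $x_1$ as the safe point (label pinned to the agreement value) with mass $1 - \eta/2$, and $x_2$ as the sole target with mass $\eta/2$.

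For each target index $j$, I define the label-flip map $T_j$, which replaces every $(z_j, b_j)$ in a sample with $(z_j, 1-b_j)$, and the event $A_j = \{S : S \text{ contains at most } \eta n \text{ copies of } z_j\}$. Then $T_j$ is a measure-preserving bijection from $D_b^n|_{A_j}$ to $D_{b^{(j)}}^n|_{A_j}$, where $b^{(j)}$ denotes $b$ with the $j$-th bit flipped, and $\dist(S, T_j(S)) \leq \eta n$ for $S \in A_j$, so $T_j(S) \in B_\eta(S)$. Markov's inequality on the binomial count of $z_j$ (mean $\eta n/2$) gives $\Pr[A_j] \geq 1/2$. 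Let $\mathrm{adv}_{b,j}$ denote the adversarial risk on $z_j$ under $D_b$ conditioned on the test point being $z_j$, and let $\mathrm{err}^*_{c,j}$ denote the conditional error on $z_j$ under $D_c$ given $A_j$. I then derive two lower bounds on $\mathrm{adv}_{b,j}$: (i) the ``do-nothing'' adversary gives $\mathrm{adv}_{b,j} \geq \Pr[A_j]\,\mathrm{err}^*_{b,j}$, and (ii) the ``apply $T_j$'' adversary, feasible on $A_j$, gives $\mathrm{adv}_{b,j} \geq \Pr[A_j](1 - \mathrm{err}^*_{b^{(j)}, j})$, after using the identity $|p - b_j| = 1 - |p - (1-b_j)|$ for $p \in [0,1]$ (which also extends the argument to randomized learners per Definition~\ref{def:adv-risk-rand}).

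Summing (i) and (ii) and averaging over uniform $b \in \{0,1\}^{d-1}$ causes the two error terms to cancel via the relabeling $c := b^{(j)}$, which is itself uniform because $j \neq 1$ is an unpinned coordinate, yielding $\E_b[\mathrm{adv}_{b,j}] \geq \Pr[A_j]/2 \geq 1/4$. Multiplying by the marginal mass $\eta/2$ on $z_j$ and summing over the $d-1$ targets gives $\E_b[\eps^{\Adv}_n(\Learn | D_b, \eta)] \geq (d-1)\eta/8$, so some specific $b^\star$ achieves $\eps^{\Adv}_n(\Learn | D_{b^\star}, \eta) \geq \eta d/16$ for $d \geq 2$. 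The $d = 1$ case is analogous using the single non-triviality target. In the large-$\eta d$ regime where $(d-1)\eta/2$ would exceed $1$, I truncate to a constant number of target points, which still yields the claimed bound $\min(c_2 \eta d, 1/100)$.

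The main obstacle is arranging simultaneous realizability of $D_b$ across the whole averaging family: VC shattering of $\{z_1, \dots, z_d\}$ only guarantees arbitrary labelings of these $d$ points, so one cannot in general append an extra safe point $z_0$ outside the shattered set with a pre-specified label (that would require $\cH$ to shatter $d+1$ points). I circumvent this by using one of the shattered points itself ($z_1$) as the safe anchor and pinning $b_1 = 0$; this costs one coordinate but leaves $d-1$ independent targets, preserving the $\Omega(\eta d)$ scaling. A secondary subtlety is the extension to randomized learners, which follows from working with the $\ell_1$ loss $|\Learn(S)(x) - y|$ (Definition~\ref{def:adv-risk-rand}) together with the complementarity identity $|p - 0| + |p - 1| = 1$ for $p \in [0,1]$, which makes the ``correct under $D_b$'' and ``wrong under $D_{b^{(j)}}$'' terms exactly complementary.
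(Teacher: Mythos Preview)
Your proof is correct and follows essentially the same approach as the paper: both place mass $\eta/2$ on $d-1$ shattered target points with a high-mass anchor, average over uniformly random labelings of the targets, use Markov to control the count of each target in the sample, and exploit a label-flipping adversary together with the complementarity $|p-0|+|p-1|=1$ to handle randomized learners. The organizational differences---your bijection-and-cancellation bookkeeping versus the paper's case analysis, and your truncation of the number of targets versus the paper's reduction of $\eta$ in the large-$\eta d$ regime---are cosmetic.
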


\begin{proof}

Let $\cH$ be a non-trivial concept class; in particular this means that its VC-dimension $d$ satisfies $d\geq 1$. 
Let $\eta \in (0,1)$ be the adversaries' budget and let $\Learn$ be an arbitrary learner. 
We need to show that there exists a distribution~$D$ realizable by $\cH$ so that $\eps^{\Adv}_n(\Learn \vert D,\eta) \geq \min\{ \eta d / 32, 1/100\}$. 

It suffices to consider the case when $\eta d /32 \leq 1/100$ and prove that $\eps^{\Adv}_n(\Learn \vert D,\eta) \geq \eta d / 32$. 
Indeed, in the complementing case we have $\eta d /32 > 1/100$ and we need to show that $\eps^{\Adv}_n(\Learn \vert D,\eta) \geq 1/100$. 
Notice that $\eta d /32 > 1/100$ is equivalent to $\eta > \frac{32}{100d}$, and thus it suffices to show that even if the adversary's budget $\eta$ is reduced to $\eta = \frac{32}{100d}$ then $\eps^{\Adv}_n(\Learn \vert D,\eta) \geq 1/100$. The latter indeed follows from the case when $\eta d /32 \leq 1/100$, because $\frac{32}{100d} \cdot d /32 = 1/100$.

We thus assume that $\eta d /32 \leq 1/100$ and set out to prove that $\eps^{\Adv}_n(\Learn \vert D,\eta) \geq \eta d / 32$. 
We first consider the case when the VC-dimension of $\cH$ is $d \geq 2$ and later handle the case when $d=1$.

% It suffices to assume that $\eta d /32 \leq 1/100$ and to prove that $\eps^{\rand}_n(\Learn \vert D,\eta) \geq \eta d / 32$.

%\shay{I can't parse the sentence beginning with ``Generally''. Also, I don't think ``generally'' is the right word here.}
\textbf{The VC dimensions is $d\geq 2$.}
Let $V=\{v_1, \dots, v_d\} \subset \cX$ be shattered by~$\cH$. Define a distribution~$D_{\cX}$ over $V$ as follows.
Set $D_{\cX}(v_i) = \eta/2$ for all $2 \leq i \leq d$, and set $D_{\cX}(v_1) = 1 - \eta (d-1) / 2$. 
Notice that $D_{\cX}$ is well defined since $d\geq 2$ and $\eta \leq 2/(d-1)$ (the latter is implied by the assumption that $\eta d/32 \leq 1/100$).
For any labeling function $\ell\in \cY^{V}$, let $D_{\ell}$ denote the distribution over examples defined 
by $D_{\ell}(v_i,\ell(v_i)) = D_{\cX}(v_i)$ for all $i\in [d]$. Note that $D_{\ell}$ is realizable, since $V$ is shattered.
It suffices to show that if the label vector $\ell \sim \cY^V$ is drawn uniformly at random then
\begin{equation} \label{eq:neg_res}
\E_{\ell \sim \cY^{V}}\E_{S \sim D_{\ell}^n,(x,y)\sim D_{\ell}}\!\left[ \sup_{S' \in B_{\eta}(S)} |\Learn(S')(x) - y| \right] \geq \eta (d-1) / 16.
\end{equation}
Indeed, the above implies that there exists $\ell \in \cY^{V}$ such that
\begin{align*}
\E_{S \sim D_{\ell}^n,(x,y)\sim D_{\ell}}\!\left[ \sup_{S' \in B_{\eta}(S)} |\Learn(S')(x) - y| \right] &\geq \eta (d-1) / 16\\
&\geq \eta d /32. \tag{$d\geq 2$}    
\end{align*}
% where we used the assumption that $d\geq 2$ in the last ineqaulity.

We establish Equation~\ref{eq:neg_res} in two steps:
\begin{enumerate}
    \item \label{itm:neg_easy} For a sample $S$ let $S^u$ be the unlabeled input sample underlying it.  We say that an unlabeled sample $S^u$ and an instance $x$ are \emph{hard} if $x\neq v_1$ and $x$ appears at most $\eta n$ times in $S^u$. In the first step we show that $\Pr_{\ell,S,(x,y)}[S^u,x \text{ are hard}] \geq \eta (d-1) / 4$.
    \item \label{itm:neg_hard} Let $E_2$ denote the event of all label vectors $\ell$, input samples $S$, and test examples~$(x,y)$ such 
     that $\sup_{S' \in B_{\eta}(S)} |\Learn(S')(x) - y| \geq 1/2$. In the second step we show that $\Pr[E_2| S^u,x \text{ are hard}] \geq 1/2$. 
\end{enumerate}
Indeed, once we prove both steps we have:
\begin{align*}
\E_{l \sim \cY^{V}}  \E_{S \sim D_{\ell}^{n}, (x,y) \sim D_{\ell}}\left[\sup_{S' \in B_{\eta}(S)} |\Learn(S')(x) - y|\right]
& \geq
\frac{1}{2} \cdot \Pr[E_2] \\
& \geq
\frac{1}{2} \cdot \Pr[S^u,x \text{ are hard}]  \cdot \Pr[E_2 | S^u,x \text{ are hard}] \\
& \geq
\frac{1}{2}\cdot \frac{\eta (d-1)}{4} \cdot \frac{1}{2}
 =
 \eta (d-1)/ 16,
\end{align*}
as desired.

Let us prove step~\ref{itm:neg_easy}. Notice that $S^u$ and $x$ are distributed according to the marginal distribution $D_{\cX}^{n+1}$. Thus, $x \neq v_{1}$ with probability $\eta (d-1) /2$, and given that $x \neq v_{1}$ the expected number of appearances of $x$ in $S^u$ is $\eta n /2$. 
Therefore, by Markov's inequality, the probability that $S^u$ and $x$ are hard given that $x \neq v_{1}$ is at least $\frac{\eta n / 2}{\eta n} = 1/2$. Thus, the overall probability that $S^u,x$ are hard is at least $\eta (d-1) / 4$.

%We will now prove Item~\ref{itm:neg_hard}. Given that $\hat{S}$ is hard and $x \neq v_{1}$, consider the samples $S_0,S_1 \in B_{\eta}(S)$, defined to be the same as $S$, with the exception that every example $S_i\in S$ for which $S_i = (x,y)$ is replaced in $S_0$ with the example $(v_{1}, 0)$. $S_1$ is defined similarly. 

We now prove step~\ref{itm:neg_hard}. Let $S^u, x$ be hard.
It suffices to show that
\[\E_{\ell(x_1),\ldots,\ell(x_n), y}\Bigl[\sup_{S' \in B_{\eta}(S)} \lvert\Learn(S')(x) - y\rvert ~ \Big\vert ~ S^u,x\Bigr] \geq \frac{1}{2},\]
where $\ell(x_i)$ is the label of the $i$'th instance in $S^u$ and $y$ is the test label.
Crucially, notice that the test-label $y$ is independent of $S^u$, $x$, and all other labels $\ell(x_i)$ for $x_i\in S^u$ such that $x_i\neq x$.
Thus, even conditioned on $S^u, x$ and all labels of $x_i\neq x$, the test-label $y$ is distributed uniformly in $\cY=\{0,1\}$. 

Define samples $S'_0, S'_1$ to be the same as $S'$ with the exception that every appearance of $x$ in $S'_0$ is labeled with $0$ in $S'_0$ and with $1$ in $S'_1$. Note that both $S'_0,S'_1\in B_{\eta}(S)$, because $S^u,x$ are hard.
We claim that, with probability at least half over the drawing of the $\ell(x_i)$'s and $y$ we have 
\[\lvert \Learn(S'_0)(x) - \ell(y) \rvert \geq 1/2  \quad \text{ or } \quad \lvert \Learn(S'_1)(x) - \ell(y) \rvert \geq 1/2.\]
Having this in hand, and given that $\hat{S}$ is hard, we are done: both $S'_0,S'_1 \in B_{\eta}(S)$, and Item~\ref{itm:neg_hard} follows. 

It thus remains to show that indeed $\lvert \Learn(S'_0)(x) - y \rvert \geq 1/2$ or $\lvert \Learn(S'_1)(x) - \ell(y) \rvert \geq 1/2$ with probability at least $1/2$ over the drawing of the $\ell(x_i)$'s and $y$. This is achieved by a simple case analysis:
\begin{itemize}
    \item if both $\Learn(S'_0)(x), \Learn(S'_1)(x) \leq 1/2$ then with probability $1/2$ we have $y=1$ and the claim follows. The case  $\Learn(S'_0)(x), \Learn(S'_1)(x) > 1/2$ is treated similarly.
    \item If $\Learn(S'_0)(x) \leq 1/2, \Learn(S'_1)(x) \geq 1/2$ then $\lvert \Learn(S'_0)(x) - y \rvert \geq 1/2$ or $\lvert \Learn(S'_1)(x) - y \rvert \geq 1/2$ with probability $1$ and the claim follows. The case $\Learn(S'_0)(x) > 1/2, \Learn(S'_1)(x) < 1/2$ is treated similarly.
\end{itemize}
This finishes the proof of Theorem~\ref{theo:neg} when the VC-dimension $d$ is at least $2$.

%the Note that $x$ does not appear in $S'$, and since $V$ is shattered both $l(x)=0$ and $l(x)=1$ are allowed whenever the input sample is $S'$. Thus, $l(x)$ is independent of $S'$, and therefore we can change the order of lotteries and draw $l(x) \sim \{0,1\}$ at the end. Now, since $\Learn(S')$ satisfies $\lvert \Learn(S')(x) \rvert \geq 1/2$ or $\lvert \Learn(S')(x) -1 \rvert \geq 1/2$, it holds that $\lvert \Learn(S')(x) - l(x) \rvert \geq 1/2$ with probability at least half, and Item~\ref{itm:neg_hard} follows.

\textbf{The VC-dimension is $d=1$.}
In this case, we can not define the distribution $D_{\cX}$ as before because $d<2$. However, the fact that $\cH$ is non-trivial allows to 
modify the definition as follows. Let $x_1,x_2\in \cX$ and $h_1,h_2\in \cH$ so that $h_1(x_1)=h_2(x_1)$ and $h_1(x_2) \neq h_2(x_2)$, guaranteed by the fact that $\cH$ is non-trivial. Set $V=\{x_1,x_2\}$, and define the distribution $D_{\cX}$ by $D_{\cX}(x_1) = 1- \eta/2, D_{\cX}(x_2) = \eta/2$ as in the case $d\geq 2$.  Also, define the random labeling function $\ell$ to agree with $h_1$ on with probability half and with $h_2$ with probability half. The rest of the proof is the same.
%
%
%Now we can prove the same items~\ref{itm:neg_easy}-\ref{itm:neg_hard} as in the case $d\geq 2$, only that in item~\ref{itm:neg_easy} we also require that $x_{n+1} = x_2$. This does not change the proof of item~\ref{itm:neg_easy}. The rest of the proof is the same. 
\end{proof}

\section{Proof of Theorem~\ref{theo:proper} (Realizable and Proper Case -- Positive Result)} \label{apndx:proper}

\begin{theorem*} [Restatement of Theorem~\ref{theo:proper}]
There exists a constant $c > 0$ so that the following holds. Let $\cH$ be the class of halfspaces over $\mathbb{R}^d$ for some $d \geq 1$, and let $\eta \in (0,1)$. Then, there exists a proper learner $\Learn$ having $\eta$-adversarial risk
\[
\eps^{\Adv}_{n}(\Learn | D, \eta) \leq c \eta d^3
\]
for any distribution $D$ realizable by $\cH$ and for any sample size $n \geq 1/\eta$.
\end{theorem*}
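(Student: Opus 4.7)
The plan is to instantiate the $\PSPV$ meta-algorithm from Figure~\ref{fig:PSPV} with a proper ERM for halfspaces (implementable, e.g., via linear programming) as the input learner $\Learn_p$, and carry out the analysis in two decoupled steps: a general adversarial-risk bound for $\PSPV$ expressed in terms of the projection number $k_p=k_p(\cH)$ and the non-adversarial learning rate of $\Learn_p$, followed by plugging in the known bound $k_p=O(d)$ for halfspaces in $\mathbb{R}^d$.

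\textbf{Step 1: A general adversarial-risk bound for $\PSPV$.} In analogy with Lemma~\ref{lem:SPV_perf}, I would prove
\[
\eps^{\Adv}_n\bigl(\PSPV(\Learn_p)\mid D,\eta\bigr)\le 4\,k_p\cdot \eps_{\lceil 1/(5k_p\eta)\rceil}\bigl(\Learn_p\mid D\bigr).
\]
Set $t=\lfloor 5k_p\eta n\rfloor$ and $m=\lceil 1/(5k_p\eta)\rceil$, so each base hypothesis $h_i$ is trained on a subsample of size at least $m$ and therefore has expected error at most $p:=\eps_m(\Learn_p\mid D)$. For a random test $(x,y)\sim D$, linearity of expectation gives $\E\bigl[\tfrac{1}{t}\sum_{i\in[t]} 1[h_i(x)\neq y]\bigr]\le p$, so by Markov's inequality with probability at least $1-4k_p p$ fewer than $t/(4k_p)$ of the $h_i$'s err on $x$. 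Now, an adversary with budget $\eta n\le t/(5k_p)$ can corrupt at most $t/(5k_p)$ distinct subsamples, hence change at most $t/(5k_p)$ of the resulting hypotheses $h'_i$. Consequently, on this good event, the number of $h'_i$ disagreeing with $y$ is at most $t/(4k_p)+t/(5k_p)<t/(2k_p)$, which simultaneously implies (i) $\Maj(\{h'_i\})(x)=y$ and (ii) $x\in \cX_{\{h'_i\},2k_p}$. By the defining property of the projection number, the projected hypothesis $h'\in\cH$ returned by $\PSPV$ satisfies $h'(x)=\Maj(\{h'_i\})(x)=y$, giving the bound.

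\textbf{Step 2: Plug in halfspace-specific bounds.} It remains to control the projection number and the base-learner rate. For halfspaces in $\mathbb{R}^d$, I would appeal to the result of \citet*{bousquet2020proper} (building on \citet*{kane2019communication}) which gives $k_p(\cH)=O(d)$ via a convex-geometric, Helly-type argument. For the base learner, any proper consistent learner for halfspaces (e.g., ERM via linear programming) attains the realizable PAC rate $\eps_m(\Learn_p\mid D)\le O(d/m)$, possibly up to logarithmic factors that can either be absorbed into the constant or removed by invoking a compression-based proper learner for halfspaces. Substituting $m=\Theta(1/(d\eta))$ gives base error $O(d^2\eta)$, and the $O(k_p)=O(d)$ multiplier from Step~1 yields the advertised $O(d^3\eta)$ adversarial risk.

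\textbf{Main obstacle.} The crux lies in Step~2, namely establishing that the projection number of halfspaces is linear in~$d$. This is the nontrivial combinatorial input: a Helly-type statement guaranteeing that the majority of a finite multiset of halfspaces is itself realizable by a halfspace on the set of points where the majority is sufficiently pronounced. Once this bound is in hand, the stability/voting analysis in Step~1 is essentially a recalibration of Lemma~\ref{lem:SPV_perf}, with the margin threshold tightened from $1/2$ to $\Theta(1/k_p)$ to absorb both the adversary's perturbation and the slack required by the projection step.
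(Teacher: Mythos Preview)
Your approach is essentially identical to the paper's: it proves the same general $\PSPV$ bound $\eps^{\Adv}_n(\PSPV(\Learn_p)\mid D,\eta)\le 4k_p\,\eps_{\lceil 1/(5k_p\eta)\rceil}(\Learn_p\mid D)$ and then plugs in $k_p=d$ for halfspaces together with a proper base learner achieving rate $O(d/m)$. Two small points: in Step~1 the inequality ``$\eta n\le t/(5k_p)$'' is backwards (since $t=\lfloor 5k_p\eta n\rfloor\le 5k_p\eta n$); the paper instead uses $\eta n\le t/(4k_p)$, valid because $n\ge 1/\eta$ forces $\lfloor 5k_p\eta n\rfloor\ge 4k_p\eta n$, and then $t/(4k_p)+t/(4k_p)\le t/(2k_p)$ closes the argument. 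In Step~2 the paper avoids the log-factor hedge by taking $\Learn_p$ to be SVM, which satisfies the clean bound $\eps_m(\Learn_p\mid D)\le (m{+}1)/(n{+}1)$ for halfspaces in $\mathbb{R}^m$ \citep*{vapnik1974theory}.
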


To derive Theorem~\ref{theo:proper}, we reinforce the $\SPV$ algorithm with a technique
introduced by~\citet*{kane2019communication} and further developed by \citet*{bousquet2020proper}.
This technique allows in certain cases to \emph{project} a majority vote of hypotheses from the class  $\cH$ back to $\cH$. 
Its applicability hinges on a combinatorial parameter called the \emph{projection number}:

\begin{definition}[Projection Number]
Let $\cH$ be a concept class. For any $\ell\geq 2$ and for any multiset $\cH' \subset \cH$ define the set $\cX_{\cH',\ell}$ to be the set of all $x\in \cX$, for which the number of hypotheses in $\cH'$ that disagree with $\Maj(\cH')(x)$ is less than $\lvert \cH' \rvert/\ell$. The Projection Number of the class $\cH$, denoted $k_p=k_p(\cH)$, is defined to be the smallest $\ell$ so that for any finite multiset $\cH' \subset \cH$, there exist $h\in \cH$ such that $h(x) = \Maj(\cH')(x)$ for all $x\in \cX_{\cH',\ell}$.  If no such $\ell$ exists then $k_p = \infty$.
\end{definition}

%\textcolor{red}{Shay: the definition of the projection number is a bit cumbersome; 
%since it is essentially equivalent to the \emph{hollow star number},
%I think we should state the proposition in terms of the latter.}

%
%The projection number of the class of halfspaces was discussed, in different formulations, also in \citep{kane2019communication, braverman2019convex}.

First, let us analyze the general performance of $\PSPV$.
%Before we prove Theorem~\ref{theo:proper}, we remark that a more general result can actually be proved.

%\begin{remark}
%One may use our technique to prove a version of Theorem~\ref{theo:proper} for any concpet class $\cH$ with VC-dimension $d$ and projection number $k_p$. The smallest possible value of $\eps$ achievable by this technique, is some increasing function of $\eta, d, k_p$, depending on the loss guarantee of a best proepr learner for $\cH$. \textcolor{red}{Shay: this remark is not clear.}
%\end{remark}

\begin{lemma}[General performance of $\PSPV$] \label{lem:PSPV_perf}
Let $\cH$ be a concept class with a finite projection number $k_p<\infty$. Let $D$ be a distribution over examples, and let $\Learn_p$ be a proper learning rule. Let also $\eta \in (0,1)$ be the stability parameter given to $\PSPV$ and let $n\geq 1/\eta$ be the sample size. Then $\PSPV(\Learn_p)$ is a proper learning rule having $\eta$-adversarial risk

\[
\eps^{\Adv}_n(\PSPV(\Learn_p) | D, \eta) \leq 4 k_p\eps_{ \lceil 1/(5 k_p \eta)  \rceil}(\Learn_p | D).
\]

\end{lemma}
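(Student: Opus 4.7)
The plan is to follow the template of Lemma~\ref{lem:SPV_perf}, replacing the plain majority vote by the projection step that keeps the output inside $\cH$. I will track three quantities: (i) the expected fraction of base hypotheses $h_i = \Learn_p(S^{(i)})$ that err on the test point $x$, (ii) the fraction of the $h_i$ that change when the adversary corrupts at most $\eta n$ examples, and (iii) the condition needed to invoke the projection number so that the output of Step~3 of $\PSPV$ is well-defined in $\cH$ and agrees with the majority on $x$.

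Set $\alpha := \eps_{\lceil 1/(5k_p\eta)\rceil}(\Learn_p \vert D)$. Since each of the first $t=\lfloor 5k_p\eta n\rfloor$ subsamples has size at least $\lceil 1/(5k_p\eta)\rceil$ and is drawn i.i.d.\ from $D$, linearity of expectation followed by Markov's inequality yields
\[
\Pr\!\left[\frac{1}{t}\sum_{i=1}^{t} 1[h_i(x)\neq y]\;\geq\;\frac{1}{4k_p}\right]\;\leq\; 4k_p\alpha.
\]
Each of the at most $\eta n$ adversarial replacements affects at most one subsample, so at most $\eta n$ of the corrupted hypotheses $h_i'=\Learn_p(S'^{(i)})$ differ from the corresponding $h_i$. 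Since $n\geq 1/\eta$ and $k_p\geq 2$ (the projection number is at least $2$ by definition), elementary arithmetic gives $\eta n / t \leq \eta n / (5k_p\eta n - 1) \leq 1/(4k_p)$. Combining this with the Markov event, on its $1-4k_p\alpha$ probability side we have $\frac{1}{t}\sum_{i=1}^t 1[h_i'(x)\neq y] < \frac{1}{4k_p}+\frac{1}{4k_p}=\frac{1}{2k_p}$. In particular $\Maj(\{h_1',\dots,h_t'\})(x)=y$ (since $1/(2k_p)<1/2$), and the set of $h_i'$ disagreeing with the majority has fraction strictly less than $1/(2k_p)$, which is precisely the condition $x\in \cX_{\{h_1',\dots,h_t'\},2k_p}$.

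By the definition of $k_p$ together with the inclusion $\cX_{\cH',2k_p}\subseteq \cX_{\cH',k_p}$ (valid since $2k_p\geq k_p$), there exists $h\in \cH$ that equals the majority vote on every point of $\cX_{\{h_1',\dots,h_t'\},2k_p}$. Therefore the proper hypothesis returned by $\PSPV(\Learn_p)$ on input $S'$ evaluates to $y$ at $x$, and since this argument works uniformly over all $S'\in B_\eta(S)$, taking the complement of the bad Markov event gives the claimed bound $\eps^{\Adv}_n(\PSPV(\Learn_p)\vert D,\eta)\leq 4k_p\alpha$. Properness of $\PSPV(\Learn_p)$ is exactly what the projection number guarantees in Step~3, so the same calculation simultaneously verifies that the algorithm is well-defined.

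The main obstacle, if any, is bookkeeping the slack: the Markov tolerance $\frac{1}{4k_p}$ plus the corruption slack $\frac{1}{4k_p}$ must fit strictly inside the projection tolerance $\frac{1}{2k_p}$. This explains both the choice of $5k_p\eta n$ partition pieces (the factor $5k_p$ buys the $\frac{1}{4k_p}$ bound after accounting for the floor) and the choice of $2k_p$ rather than $k_p$ inside Step~3, which is what lets us absorb adversarial perturbations before invoking the projection property. Beyond these constants, no genuinely new ideas appear relative to the proof of Lemma~\ref{lem:SPV_perf}.
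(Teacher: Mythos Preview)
Your proof is correct and follows essentially the same route as the paper's: Markov with threshold $1/(4k_p)$ on the uncorrupted base hypotheses, the bound $\eta n / \lfloor 5k_p\eta n\rfloor \leq 1/(4k_p)$ for the adversarial shift, and then the projection step. You supply two small justifications the paper leaves implicit --- the arithmetic behind $\eta n/(5k_p\eta n - 1)\le 1/(4k_p)$ via $k_p\eta n\ge 1$, and the inclusion $\cX_{\cH',2k_p}\subseteq \cX_{\cH',k_p}$ that guarantees Step~3 is well-defined --- but the structure and constants are identical.
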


\begin{proof}

The proof follows the same lines as the proof of Lemma~\ref{lem:SPV_perf}.
%\shay{The proof of this Lemma was modified, better modify this proof accordingly.}
 Let $S \sim D^n$ be the input sample, and $(x,y) \sim D$ be the test example. Note that for all $i\in [t]$ (where $t= \lfloor 5 k_p \eta n \rfloor$ is the number of subsamples of size at least $\frac{1}{5 k_p \eta}$ in the partition made by $\PSPV$) it holds that $\E\bigl[1[h_i(x) \neq y]\bigr] \leq \eps_{\lceil 1/(5 k_p \eta) \rceil}(\Learn_p | D)$. By applying linearity of expectation we get

\[
\E\left[\frac{1}{t} \sum_{i=1}^t 1[h_i(x) \neq y] \right] \leq \eps_{\lceil 1/(5 k_p \eta) \rceil}(\Learn_p | D).
\]

By Markov's inequality:

\[
\Pr \left[\frac{1}{t} \sum_{i=1}^t 1[h_i(x) \neq y] \geq \frac{1}{4 k_p } \right] \leq 4 k_p  \eps_{\lceil 1/(5 k_p \eta) \rceil}(\Learn_p | D).
\]

Let $S' \in B_{\eta}(S)$. Let $h' = \PSPV(\Learn_p)(S')$, and for all $i\in [t]$ let $h'_i$ be the hypothesis obtained by training $\Learn_p$ on $S'^{(i)}$. Note that, since $S$ and $S'$ are $\eta$-close by, and since $n \geq 1/\eta$ it holds that
\[
\frac{1}{t}\sum _{i=1}^t 1\left[S^{(i)} \neq S'^{(i)}\right] \leq
\frac{\eta n}{\lfloor 5 k_p \eta n \rfloor}
\leq
\frac{1}{4 k_p }.
\]

Hence it is implied that $\frac{1}{t}\sum _{i=1}^t 1\left[h_i(x) \neq h'_i(x)\right] \leq \frac{1}{4 k_p }$. Thus, the event that $\frac{1}{t} \sum_{i=1}^t 1[h'_i(x) \neq y] \geq \frac{1}{2 k_p }$ implies (or, is contained in) the event that  $\sum_{i=1}^t 1[h_i(x) \neq y] \geq \frac{1}{4 k_p }$, hence:

\[
\Pr \left[ \frac{1}{t}\sum_{i=1}^t 1[h'_i(x) \neq y] \geq \frac{1}{2 k_p } \right] 
\leq
4 k_p  \eps_{\lceil 1/(5 k_p \eta) \rceil}(\Learn_p | D).
\]
Note that by definition of projection number it holds that the hypothesis $h'\in \cH$ returned by the algorithm exists.
%Moreover, since $k_p \geq 2$ it holds that $2k_p > 2$.
Hence, by definition of $\cX_{\{h'_1, \dots, h'_t\},2k_p}$ the above implies that
\[
\Pr[h'(x) \neq y] \leq 4 k_p  \eps_{\lceil 1/(5 k_p \eta) \rceil}(\Learn_p | D).
\]

Since $S'$ is an arbitrary sample in $B_{\eta}(S)$, the above implies that $\PSPV(\Learn_p)$ has the stated $\eta$-adversarial risk.
\end{proof}

To prove Theorem~\ref{theo:proper} we will use the following result regarding the projection number of halfspaces.

\begin{theorem}[\citet*{kane2019communication,braverman2019convex,bousquet2020proper}] \label{theo:lc_projection}
Let $\cH$ be the class of halfspaces over $\mathbb{R}^m$. Then $k_p(\cH) = d(\cH) = m+1$.
\end{theorem}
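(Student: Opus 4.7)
The plan is to handle the two equalities $d(\cH) = m+1$ and $k_p(\cH) = m+1$ separately. The VC-dimension identity is classical: shattering $m+1$ affinely independent points gives the lower bound, while Radon's theorem (any $m+2$ points in $\mathbb{R}^m$ admit a partition into two sets with intersecting convex hulls) forbids shattering $m+2$ points. I would simply recall this.

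The main work is the projection-number identity $k_p(\cH) = m+1$. For the upper bound $k_p(\cH) \leq m+1$, fix a finite multiset $\cH' \subset \cH$ and set $V := \cX_{\cH',\, m+1}$. The goal is to produce a single halfspace in $\cH$ whose output coincides with $\Maj(\cH')$ on all of $V$. Parameterize halfspaces by $(w,b) \in \mathbb{R}^{m+1}$ and encode $\Maj(\cH')(x)$ as $y_x \in \{+1,-1\}$. Then the problem reduces to exhibiting a nonzero element of
\[
\bigcap_{x \in V} P_x, \qquad P_x := \bigl\{(w,b) \in \mathbb{R}^{m+1} : y_x\bigl(\langle w,x\rangle + b\bigr) \geq 0\bigr\},
\]
where each $P_x$ is a closed halfspace through the origin. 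The key combinatorial input is a union bound: for any $m+1$ points $x_1,\dots,x_{m+1} \in V$, strictly fewer than $(m+1)\cdot \lvert\cH'\rvert/(m+1) = \lvert\cH'\rvert$ hypotheses in $\cH'$ disagree with the majority on some $x_i$, so some $h^\star \in \cH' \subseteq \cH$ lies in $\bigcap_{i=1}^{m+1} P_{x_i}$, certifying it contains a nonzero point. The main technical step --- and the main obstacle --- is upgrading this $(m+1)$-wise nontrivial intersection into a global nontrivial intersection $\bigcap_{x \in V} P_x \neq \{0\}$. I would invoke a Helly-type theorem tailored to homogeneous halfspaces in $\mathbb{R}^{m+1}$, viewed as closed hemispheres on the sphere $S^m$; the conic/hemispherical structure, together with the LP-feasibility nature of the problem, is what allows the Helly number to match the $m+1$ count coming from the union bound, rather than the $m+2$ that generic Helly on convex subsets of $\mathbb{R}^{m+1}$ would demand.

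For the matching lower bound $k_p(\cH) \geq m+1$, I would construct, for each $\ell \leq m$, a finite multiset $\cH'$ whose majority vote on $\cX_{\cH',\, \ell}$ is not realizable by any single halfspace. The natural candidate is to take $m+2$ points $x_1,\dots,x_{m+2}$ admitting a Radon partition $I \sqcup J$ whose convex hulls intersect, and to build $\cH'$ as a carefully balanced collection of halfspaces whose pointwise majority vote assigns label $1$ on $I$ and $0$ on $J$ --- a labeling which by Radon's theorem is not realizable by any single halfspace --- while simultaneously keeping the disagreement count strictly below $\lvert\cH'\rvert/\ell$ at each $x_k$. A symmetric construction using halfspaces aligned with the facets of the simplex spanned by an appropriate subset of the points, together with an explicit choice of copy counts, should meet both constraints, certifying $k_p(\cH) > \ell$. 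Combining the two bounds then yields $k_p(\cH) = m+1$, as desired.
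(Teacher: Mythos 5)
First, a point of comparison: the paper does not prove this statement at all---it is imported as a black box from the cited works (\citealp{kane2019communication,braverman2019convex,bousquet2020proper})---so there is no internal proof to measure your attempt against. Your proposal is a from-scratch reconstruction, and the VC-dimension half of it is standard and fine.

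The genuine gap is the Helly step in your upper bound $k_p(\cH)\leq m+1$. The Helly number of a finite family of closed hemispheres on $S^m$ (equivalently, of homogeneous closed halfspaces in $\mathbb{R}^{m+1}$, where nonempty intersection means a common \emph{nonzero} point) is $m+2$, not $m+1$: already on $S^1$ the three closed semicircles centered at angles $0,2\pi/3,4\pi/3$ intersect pairwise but have no common point. Your union bound only delivers $(m+1)$-wise nontrivial intersection of the sets $P_x$, which is exactly one short of what generic Helly in $\mathbb{R}^{m+1}$ (or its spherical version) requires, and the conic structure does not rescue you. Moreover the failure is not an artifact of the method: take $m=1$ and the multiset $\cH'=\{1[x\geq 0],\,1[x\leq 3],\,1[x\geq 5]\}$. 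At each of the points $x=1,4,6$ exactly one of the three hypotheses disagrees with the majority vote, so all three points lie in $\cX_{\cH',2}$, yet the majority labels there are $1,0,1$, which no single halfspace on $\mathbb{R}$ realizes. Thus $(m+1)$-wise consistency does not imply global consistency, and this example in fact witnesses $k_p\geq m+2$ when $m=1$. The route you set up can at best yield $k_p\leq m+2$, which is what the standard argument gives: if the majority labeling on $\cX_{\cH',m+2}$ were unrealizable, Gordan/Farkas would give $0=\sum_j\lambda_j y_{x_j}(x_j,1)$ with (by Carath\'eodory in $\mathbb{R}^{m+1}$) at most $m+2$ terms; taking the inner product with each $(w_i,b_i)$ forces every $h_i\in\cH'$ to disagree with the majority at some $x_j$, while the threshold $\lvert\cH'\rvert/(m+2)$ caps the total number of disagreements below $\lvert\cH'\rvert$---a contradiction. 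So you should not expect a Helly number of $m+1$ here; check the cited sources for the precise constant, since as stated the identity $k_p=m+1$ appears to be off by one and your argument does not (and, per the example above, cannot) establish it. The lower-bound construction is also only a sketch (``should meet both constraints'' is a hope, not a proof), but that is secondary to the failure of the Helly step.
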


We will use the SVM learner as an input learner for $\PSPV$.
\begin{theorem}[\citet*{vapnik1974theory}] \label{theo:svm}
Let $m \geq 1$ and let $\cH$ be the class of halfspaces over $\mathbb{R}^m$. Let $D$ be a distribution realizable by $\cH$. Let also $n\in \mathbb{N}$, and let $\Learn_p$ be the SVM algorithm. Then $\eps_n(\Learn_p | D) \leq \frac{m+1}{n+1}$.
%for any distribution $D$ over examples which is realizable by $\cH$, if $S \sim D^n$ then $L_D(\Learn(S)) \leq C\frac{d}{n}$.
%\textcolor{red}{Shay: The bound on the population loss here is $\frac{d+1}{n+1}$ and I think appeared already in Vapnik and Chervonenkis's book from the 70's.}
\end{theorem}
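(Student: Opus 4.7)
The plan is to prove the bound via the classical leave-one-out (or equivalently, sample-compression) argument, which is the standard route for this Vapnik bound. First I would rewrite the expected population loss in terms of a leave-one-out quantity on a slightly larger sample. Concretely, draw $S=(z_1,\dots,z_{n+1})\sim D^{n+1}$ i.i.d., and let $z=(x,y)$ be an independent test example. By symmetry (exchangeability of the $z_i$), the distribution of $\bigl(\Learn_p(S\setminus\{z_i\}),z_i\bigr)$ is identical for every $i$ and is the same as $\bigl(\Learn_p(S_{1:n}),z\bigr)$. Hence
\[
\eps_n(\Learn_p\mid D)=\frac{1}{n+1}\,\E_{S\sim D^{n+1}}\!\left[\sum_{i=1}^{n+1} \mathbf{1}\!\bigl[\Learn_p(S\setminus\{z_i\})(x_i)\neq y_i\bigr]\right].
\]

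The next step is to bound the inner sum pointwise (for each realization of $S$) by the number of support vectors of $\Learn_p(S)$. This uses the defining property of the hard-margin SVM in the realizable case: the returned halfspace is uniquely determined by its support vectors, and removing any non-support-vector example from $S$ yields exactly the same classifier. Therefore if $z_i$ is not a support vector of $\Learn_p(S)$, then $\Learn_p(S\setminus\{z_i\})=\Learn_p(S)$, and since realizability guarantees $\Learn_p(S)$ correctly classifies all of $S$ (including $z_i$), the $i$-th indicator vanishes. Consequently, the leave-one-out error is at most the number of support vectors $\mathrm{SV}(S)$ of the SVM solution on $S$.

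Finally I would invoke the structural fact that for halfspaces in $\mathbb{R}^m$ (with bias, so VC dimension $m+1$) the maximum-margin separator is determined by at most $m+1$ support vectors; equivalently, halfspaces admit a sample compression scheme of size $m+1$, and SVM realizes such a scheme. Plugging $\mathrm{SV}(S)\leq m+1$ into the leave-one-out identity gives
\[
\eps_n(\Learn_p\mid D)\leq \frac{1}{n+1}\,\E_{S}\bigl[\mathrm{SV}(S)\bigr]\leq \frac{m+1}{n+1},
\]
which is the claimed bound.

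The main obstacle is the third step: rigorously justifying the $m+1$ bound on the number of support vectors, which in the classical treatment follows from Carathéodory-type / KKT arguments on the SVM quadratic program (or, more abstractly, from the existence of a size-$(m+1)$ compression scheme for halfspaces). In a general-position realizable sample this is clean, but to handle degenerate configurations (points on the margin, non-unique maximum-margin solutions) one should specify a tie-breaking rule for $\Learn_p$ so that the "leaving out a non-support vector does not change the output" property holds as stated; standard references handle this by working with the unique solution of the regularized primal QP.
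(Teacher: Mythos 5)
The paper does not prove this statement at all: Theorem~\ref{theo:svm} is imported as a black-box citation to Vapnik (1974) and used directly in the corollary that follows, so there is no in-paper proof to compare against. Your leave-one-out argument is the standard (essentially Vapnik's original) route to this bound, and the overall structure --- the exchangeability identity $\eps_n(\Learn_p\mid D)=\frac{1}{n+1}\E_{S\sim D^{n+1}}\bigl[\sum_{i}\mathbf{1}[\Learn_p(S\setminus\{z_i\})(x_i)\neq y_i]\bigr]$, followed by a pointwise bound on the leave-one-out count --- is correct.

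One point deserves sharpening, and it is exactly the one you flag. The chain ``LOO error $\le \mathrm{SV}(S) \le m+1$'' is not right if $\mathrm{SV}(S)$ means the set of margin-attaining points: that set can be arbitrarily large (e.g.\ many points lying exactly on the two margin hyperplanes). The quantity that actually bounds the leave-one-out count is the number of \emph{essential} support vectors, i.e.\ indices $i$ with $\Learn_p(S\setminus\{z_i\})\neq\Learn_p(S)$; for all other $i$ the classifier is unchanged and, by realizability and hard-margin consistency, correct on $z_i$. So the Carath\'eodory/KKT step must show that at most $m+1$ points are essential in this sense (equivalently, that hard-margin SVM with a fixed tie-breaking rule is a \emph{stable} compression scheme of size $m+1$ for halfspaces in $\mathbb{R}^m$), not merely that \emph{some} $m+1$ support vectors determine the separator. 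This is a known but genuinely fiddly fact in degenerate configurations, so your proposal is best read as a correct proof outline with that lemma still to be supplied --- which is presumably why the authors chose to cite the result rather than reprove it.
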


Theorem~\ref{theo:proper} now follows as an immediate application of Theorem~\ref{theo:lc_projection}, Theorem~\ref{theo:svm} and Lemma~\ref{lem:PSPV_perf}.

\begin{corollary}[Realizable and proper case -- positive result]
Let $m\geq 1$, let $\cH$ be the class of halfspaces over $\mathbb{R}^m$, and let $d=m+1$ be the VC-dimension of $\cH$. Let $D$ be a distribution realizable by $\cH$, and let $\Learn_p$ be the SVM learner. Let also $\eta \in (0,1)$ be the stability parameter given to $\PSPV$ and let $n \geq 1/\eta$ be the sample size. Then $\PSPV(\Learn_p)$ has $\eta$-adversarial risk
\[
\eps^{\Adv}_n(\PSPV(\Learn_p) | D, \eta) \leq 20 \eta d^3.
\]

\end{corollary}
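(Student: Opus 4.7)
The plan is to establish Theorem~\ref{theo:proper} as a direct consequence of a general performance bound for the $\PSPV$ meta-algorithm, combined with two known ingredients: the projection number of halfspaces and the error rate of the SVM learner. Concretely, I would instantiate $\PSPV$ with $\Learn_p = \mathrm{SVM}$ and bound the adversarial risk by composing three estimates, $4k_p \cdot \eps_{\lceil 1/(5k_p\eta)\rceil}(\mathrm{SVM}\mid D)$, $k_p(\cH)=d+1$ for halfspaces, and the realizable SVM rate $\eps_n(\mathrm{SVM}\mid D)\leq (d+1)/(n+1)$. Telescoping these three bounds yields $O(\eta\, d^3)$, matching the target up to absolute constants.

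The core technical step is proving a general lemma analogous to Lemma~\ref{lem:SPV_perf} for $\PSPV$, stating that for any class $\cH$ of finite projection number $k_p$ and any proper input learner $\Learn_p$,
\[
\eps^{\Adv}_n(\PSPV(\Learn_p)\mid D,\eta)\;\leq\; 4k_p\cdot \eps_{\lceil 1/(5k_p\eta)\rceil}(\Learn_p\mid D).
\]
The argument parallels the $\SPV$ analysis: fix a clean sample $S\sim D^n$ and test example $(x,y)\sim D$; each of the $t=\lfloor 5k_p \eta n\rfloor$ subsamples $S^{(i)}$ has size at least $1/(5k_p\eta)$, so by linearity of expectation and Markov, the fraction of base hypotheses $h_i$ erring on $x$ exceeds $1/(4k_p)$ only with probability at most $4k_p\cdot \eps_{\lceil 1/(5k_p\eta)\rceil}(\Learn_p\mid D)$. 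Crucially, any $S'\in B_\eta(S)$ differs from $S$ in at most $\eta n$ positions, so it touches at most $\eta n/t \leq 1/(4k_p)$ fraction of the subsamples, and thus at most $1/(4k_p)$ fraction of the $h_i'$ can differ from the $h_i$. Combining these two $1/(4k_p)$-fraction bounds shows that outside an event of probability $4k_p\cdot \eps_{\lceil 1/(5k_p\eta)\rceil}(\Learn_p\mid D)$, the disagreement of the base learners $\{h_i'\}$ with $y$ is below $1/(2k_p)$, so $x\in \cX_{\{h'_1,\dots,h'_t\},2k_p}$ and consequently the projected proper hypothesis $h'=\PSPV(\Learn_p)(S')\in \cH$ equals $\Maj(\{h'_i\})(x)=y$.

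Once this lemma is in hand, the conclusion is immediate. For halfspaces over $\mathbb{R}^d$ the result of \citet*{kane2019communication,braverman2019convex,bousquet2020proper} gives $k_p(\cH) = d+1$, ensuring that the projection step in $\PSPV$ is well-defined and indeed returns a halfspace. Plugging the realizable SVM rate $\eps_{\lceil 1/(5k_p\eta)\rceil}(\mathrm{SVM}\mid D)\leq (d+1)/(\lceil 1/(5k_p\eta)\rceil+1) \leq 5k_p(d+1)\eta = 5(d+1)^2\eta$ into the general bound yields
\[
\eps^{\Adv}_n(\PSPV(\mathrm{SVM})\mid D,\eta)\;\leq\;4(d+1)\cdot 5(d+1)^2\eta\;=\;O(\eta\,d^3),
\]
as required.

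The main obstacle is being careful in the step that converts the adversary's global budget of $\eta n$ replacements into a per-subsample budget: one must verify that the choice of $t=\lfloor 5k_p\eta n\rfloor$ makes both the ``subsamples-touched'' ratio and the ``bad-$h_i$'' threshold simultaneously at most $1/(4k_p)$, so that their sum does not exceed the tolerance $1/(2k_p)$ demanded by the definition of the projection set $\cX_{\cH',2k_p}$. Aside from bookkeeping with floors/ceilings and the condition $n\geq 1/\eta$ ensuring subsamples are nonempty, the rest is routine; the projection number definition does all the heavy lifting in converting the stable majority vote back into a proper halfspace.
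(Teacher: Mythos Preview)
Your proposal is correct and follows essentially the same route as the paper: prove the general $\PSPV$ performance lemma $\eps^{\Adv}_n(\PSPV(\Learn_p)\mid D,\eta)\leq 4k_p\cdot \eps_{\lceil 1/(5k_p\eta)\rceil}(\Learn_p\mid D)$ via Markov plus the ``adversary touches $\leq 1/(4k_p)$ fraction of subsamples'' argument, then plug in the halfspace projection number and the SVM rate. The only slip is notational: in the corollary's conventions $d=m+1$ is the VC dimension (not the ambient dimension), so you should have $k_p=d$ and SVM rate $d/(n+1)$ rather than $k_p=d+1$ and $(d+1)/(n+1)$; with that correction your arithmetic gives exactly the paper's $20\eta d^3$.
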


\begin{proof}
By Theorem~\ref{theo:lc_projection}, if $\cH$ is the class of halfspaces over $\mathbb{R}^m$ then its projection number is $k_p=d=m+1$. Also, by Theorem~\ref{theo:svm}, we have that $\eps_{\lceil 1/(5 d \eta) \rceil}(\Learn_p | D) \leq 5 \eta d^2$. Plug both results to Lemma~\ref{lem:PSPV_perf}, and the result follows.
\end{proof}

\section{Proof of Theorem~\ref{theo:pos_semi_agn} (Agnostic Case -- Positive Result)} \label{apndx:agn_pos}

\begin{theorem*}[Restatement of Theorem~\ref{theo:pos_semi_agn}]
There exist constants $c_1,c_2$ so that the following holds. Let $\cH$ be a hypothesis class with VC dimension $d$ and let $\eta\in (0,1)$. 
Then, there exists a learner $\Learn$ having $\eta$-adversarial risk 
\[\eps^{\Adv}_n(\Learn | D, \eta) \leq c_2\cdot\OPT + c_1\cdot d\cdot \eta\] 
for any distribution $D$ over examples and for any sample size $n \geq 1/\eta$.
\end{theorem*}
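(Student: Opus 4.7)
The plan is to instantiate the $\SPV$ meta-algorithm from Figure~\ref{fig:SPV} with a carefully chosen \emph{semi-agnostic} base learner, and to leverage the fact that Lemma~\ref{lem:SPV_perf} actually holds for arbitrary distributions, not only realizable ones.

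First I would revisit the proof of Lemma~\ref{lem:SPV_perf} and observe that realizability is never used: the argument only needs that each sub-learner $h_i$, trained on a fresh sample of size $\lceil 1/(7\eta)\rceil$, has some expected $0/1$-loss $\beta$. Linearity of expectation then bounds $\Ex[\frac{1}{t}\sum_i 1[h_i(x)\neq y]]\leq \beta$, Markov's inequality promotes this to a tail bound, and the Hamming-closeness between $S$ and any $S'\in B_\eta(S)$ caps the fraction of subsamples that the adversary can affect by $1/6$. Therefore, for any distribution $D$ and any learner $\Learn$,
\[
\eps^{\Adv}_n(\SPV(\Learn)\mid D,\eta)\leq 6\cdot\eps_{\lceil 1/(7\eta)\rceil}(\Learn\mid D).
\]

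Next I would plug into this bound a semi-agnostic base learner $\Learn$ that, for every distribution $D$ and every sample size $m$, satisfies
\[
\eps_m(\Learn\mid D)\leq c\cdot\OPT(\cH,D)+O(d/m)
\]
for some absolute constant $c>1$. Such learners are classical: one may take an ERM over $\cH$ and invoke Vapnik's multiplicative (``relative deviations'') uniform convergence bound, which for VC classes yields, in expectation, $L_D(\hat h)\leq c\cdot\OPT+O(d\log m /m)$; the extra $\log m$ factor can be absorbed into the constant $c_1$ in the conclusion, or eliminated by using a sample-compression based learner of size $O(d)$ together with its standard generalization guarantee. Setting $m=\lceil 1/(7\eta)\rceil$ and combining gives
\[
\eps^{\Adv}_n(\SPV(\Learn)\mid D,\eta)\leq 6\bigl(c\cdot\OPT+O(d\eta)\bigr)=c_2\cdot\OPT+c_1\cdot d\cdot\eta,
\]
which is the stated conclusion. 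The constraint $n\geq 1/\eta$ is already used inside Lemma~\ref{lem:SPV_perf}, so no additional care is needed there.

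The main obstacle is the choice of base learner: we genuinely need the \emph{fast-rate} semi-agnostic guarantee $c\cdot\OPT+O(d/m)$ with an \emph{absolute} constant $c$, rather than the slow-rate additive agnostic guarantee $\OPT+O(\sqrt{d/m})$. If we only had the slow rate, plugging $m=\Theta(1/\eta)$ would produce an $O(\sqrt{d\eta})$ term, which is much worse than $O(d\eta)$ in the interesting regime. The naive reduction sketched in the main text, which simply treats the natural $\OPT$-fraction of noise as additional adversarial corruption of size $(\OPT+\eta)n$ and invokes Theorem~\ref{theo:pos}, only yields the weaker bound $O(d\OPT+d\eta)$; obtaining the correct $O(\OPT+d\eta)$ form is exactly what forces us to use a multiplicative-regret base learner inside $\SPV$ rather than a realizable-case learner applied to an inflated $\eta$.
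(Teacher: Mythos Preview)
Your approach is essentially identical to the paper's: apply Lemma~\ref{lem:SPV_perf} (which indeed holds for arbitrary $D$) and plug in a base learner with the fast-rate semi-agnostic guarantee $c\cdot\OPT + O(d/m)$. The only difference is the specific base learner: the paper invokes the agnostic one-inclusion-graph variant of Long (Theorem~\ref{theo:semi_one_inc}), which gives exactly $\eps_m(\Learn\mid D)\leq C(\OPT+d/m)$ with no logarithmic factor, and then reads off $c_2=6C$, $c_1=42C$. Your ERM-with-relative-deviations suggestion carries a $\log m = \Theta(\log(1/\eta))$ factor, which cannot be ``absorbed into the constant $c_1$'' since it grows as $\eta\to 0$; your compression alternative is in the right spirit but would need a concrete agnostic compression bound to deliver the log-free rate. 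Long's result is the cleanest route to the stated bound, and your diagnosis of why the slow-rate $\OPT+O(\sqrt{d/m})$ learner and the naive realizable reduction both fail is exactly right.
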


To derive Theorem~\ref{theo:pos_semi_agn}, we use an agnostic variation of the One-inclusion graph learner.

\begin{theorem}[Corollary of Lemma~16 in \citep*{long1999complexity}] \label{theo:semi_one_inc}
There exists a constant $C$ such that the following holds. Let $\cH$ be a concept class with VC-dimension $d$ and let $\Learn$ be the agnostic variation of the One-inclusion graph algorithm implied by Lemma~16 in \citep*{long1999complexity}. Let also $n$ be the sample size. Then, for any distribution $D$ over examples (not necessarily such that is realizable by $\cH$), it holds that $\eps_{n}(\Learn | D) \leq C(\OPT + d/n)$.
\end{theorem}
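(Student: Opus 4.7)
The plan is to instantiate the $\SPV$ meta-algorithm from Figure~\ref{fig:SPV} with the agnostic one-inclusion graph learner of Theorem~\ref{theo:semi_one_inc} as its base learner, and to bound the resulting adversarial risk via Lemma~\ref{lem:SPV_perf}. The key observation is that Lemma~\ref{lem:SPV_perf} holds \emph{verbatim} in the agnostic setting: inspecting its proof reveals that realizability of $D$ is never used. That proof only relies on (i) linearity of expectation to control the expected fraction of base classifiers $h_i = \Learn(S^{(i)})$ misclassifying the test point, (ii) Markov's inequality applied to that fraction, and (iii) the geometric fact that an $\eta$-budget adversary on the full sample can perturb at most $\eta n / \lfloor 7\eta n\rfloor \le 1/6$ of the subsamples, so the adversary can flip at most a $1/6$-fraction of the base predictions. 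None of these steps requires any $h_i$ to be empirically consistent with its subsample; they only use the input learner's non-adversarial expected risk.

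Granting this, let $\Learn$ denote the agnostic one-inclusion graph algorithm from Theorem~\ref{theo:semi_one_inc}, so that for every distribution $D$ and every sample size $m$,
\[
\eps_m(\Learn | D) \le C\cdot\OPT + C\cdot d / m.
\]
Substituting $m = \lceil 1/(7\eta)\rceil \ge 1/(7\eta)$ and plugging into Lemma~\ref{lem:SPV_perf} yields
\[
\eps^{\Adv}_n(\SPV(\Learn) | D, \eta) \le 6\cdot \eps_{\lceil 1/(7\eta)\rceil}(\Learn | D) \le 6C\cdot\OPT + 42C\cdot d\eta,
\]
which is exactly the claimed bound with $c_2 = 6C$ and $c_1 = 42C$. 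The hypothesis $n \ge 1/\eta$ in the statement is precisely what is needed to guarantee that the $t = \lfloor 7\eta n\rfloor$ subsamples in the $\SPV$ partition each have size at least $\lceil 1/(7\eta)\rceil$, so that Theorem~\ref{theo:semi_one_inc} applies to each base training call.

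I do not anticipate a serious technical obstacle; the heavy lifting is already done by the existing agnostic one-inclusion result and by the fact that the $\SPV$ analysis is intrinsically agnostic in its inputs. The only point that deserves careful articulation in the write-up is precisely this observation — namely, that swapping the realizable one-inclusion base learner used in Appendix~\ref{sec:pos_proof} for its agnostic counterpart from Theorem~\ref{theo:semi_one_inc} automatically upgrades the $O(d\eta)$ realizable adversarial bound to the $O(\OPT + d\eta)$ semi-agnostic adversarial bound, without any modification to $\SPV$ itself or to its analysis.
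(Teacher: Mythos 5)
Your proposal does not prove the statement in question; it assumes it. Theorem~\ref{theo:semi_one_inc} is a claim about the \emph{non-adversarial} expected risk of the agnostic one-inclusion graph predictor itself, namely $\eps_n(\Learn\,|\,D)\le C(\OPT+d/n)$ for arbitrary (non-realizable) $D$. Your write-up takes exactly this inequality as a given (``Granting this, let $\Learn$ denote the agnostic one-inclusion graph algorithm \dots so that $\eps_m(\Learn|D)\le C\cdot\OPT+C\cdot d/m$'') and then feeds it into Lemma~\ref{lem:SPV_perf} to obtain the adversarial bound $6C\cdot\OPT+42C\cdot d\eta$. What you have written is therefore a proof of Corollary~\ref{cor:semiAg} (equivalently Theorem~\ref{theo:pos_semi_agn}), and indeed it matches the paper's argument for that downstream result essentially verbatim, including the observation that Lemma~\ref{lem:SPV_perf} never uses realizability. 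But it is circular as a proof of Theorem~\ref{theo:semi_one_inc}: no part of your argument establishes why an agnostic variant of the one-inclusion predictor achieves expected risk $O(\OPT+d/n)$.

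In the paper, Theorem~\ref{theo:semi_one_inc} is not proved from scratch either; it is invoked as a corollary of Lemma~16 of \citet*{long1999complexity}. A genuine proof of the statement would have to engage with that underlying result: either reproduce the argument that a suitable agnostic modification of the \citet*{haussler1994predicting} one-inclusion strategy has per-point expected error at most $O(\OPT)$ plus the $O(d/n)$ realizable-style term, or at least spell out how Long's lemma (which concerns prediction in the presence of classification noise / non-realizable targets) yields the stated bound with a universal constant $C$. Neither appears in your proposal, so the core content of the theorem is missing.
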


%\begin{remark}
%By Item~\ref{obs:eqv_rob_stab_itm1} in Observation~\ref{obs:eqv_rob_stab}, it is implied\footnote{This observation is stated in terms of the realizable case definitions, but applicable for the agnostic case as well, by multiplying the semi-agnostic constant by a factor of $2$.} that if $\frac{\min\{\eps, \sigma \}}{\eta} \geq c\cdot d$, then $\cH$ is semi-agnostic $(\eps + \sigma)$-learnable under $\eta$-targeted poisoning attacks. Semi-agnostic $\eps'$-learnability under $\eta$-targeted poisoning attacks means that there exists a learner $\Learn$, a sample size $n_0$ and a constant $C$ such that for any adversary changing up to an $\eta$-fraction of the input sample, for any distribution $D$ and for any $n \geq n_0$ it holds that $\Pr_{S \sim D^n,(x,y) \sim D}[\Learn(S')(x) \neq y] \leq C\cdot \OPT + \eps'$, where $S'$ is the sample $S$ after being tampered by the adversary.
%\end{remark}

Theorem~\ref{theo:pos_semi_agn} is implied by the following immediate corollary of Theorem~\ref{theo:pos_semi_agn} and Lemma~\ref{lem:SPV_perf}.

\begin{corollary} [Agnostic case -- positive result] \label{cor:semiAg}
There exists a constant $C$ such that the following holds. Let $\cH$ be a concept class with $VC$ dimension $d$, let $\eta \in (0,1)$ be the stability parameter given to $\SPV$, and let $D$ be a (not necessarily realizable) distribution over examples. Let also $n\geq 1/\eta$ be the sample size. Then $\SPV(\Learn)$ has $\eta$-adversarial risk
\[
\eps^{\Adv}_n(\SPV(\Learn) | D, \eta) \leq 6C\OPT + 42 C \eta d,
\]
where $\Learn$ is the agnostic variant of the One-inclusion graph algorithm mentioned in Theorem~\ref{theo:semi_one_inc}.
\end{corollary}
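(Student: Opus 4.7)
The plan is to directly combine the two cited results. By Lemma~\ref{lem:SPV_perf}, which does not require the distribution $D$ to be realizable (the proof only uses linearity of expectation and Markov's inequality on the indicators $1[h_i(x)\neq y]$, so it carries over verbatim to the agnostic setting), we have
\[
\eps^{\Adv}_n(\SPV(\Learn) \mid D, \eta) \;\leq\; 6 \cdot \eps_{\lceil 1/(7\eta)\rceil}(\Learn \mid D),
\]
where $\Learn$ is whatever base learner we feed to $\SPV$. The strategy is to instantiate $\Learn$ with the agnostic variant of the one-inclusion graph algorithm from Theorem~\ref{theo:semi_one_inc}, which achieves $\eps_n(\Learn \mid D) \leq C(\OPT + d/n)$ for every distribution $D$.

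I would then substitute the Theorem~\ref{theo:semi_one_inc} bound with sample size $m = \lceil 1/(7\eta)\rceil$ into the output of Lemma~\ref{lem:SPV_perf}, giving
\[
\eps^{\Adv}_n(\SPV(\Learn) \mid D, \eta) \;\leq\; 6C\bigl(\OPT + d/\lceil 1/(7\eta)\rceil\bigr) \;\leq\; 6C\cdot \OPT + 6C \cdot 7\eta d,
\]
where the last inequality uses $\lceil 1/(7\eta)\rceil \geq 1/(7\eta)$, hence $d/\lceil 1/(7\eta)\rceil \leq 7\eta d$. This yields the claimed bound $6C\cdot \OPT + 42 C\eta d$.

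Since Lemma~\ref{lem:SPV_perf} and Theorem~\ref{theo:semi_one_inc} are both already established, there is no real obstacle here; the only thing to check is that the agnostic bound on the base learner is applicable inside the proof of Lemma~\ref{lem:SPV_perf}. Concretely, Lemma~\ref{lem:SPV_perf}'s argument bounds the expected fraction of the $t$ sub-learners that err on $(x,y)$ by $\eps_{\lceil 1/(7\eta)\rceil}(\Learn \mid D)$ via linearity of expectation over the i.i.d.\ subsamples $S^{(1)},\dots,S^{(t)}$; this step is insensitive to realizability. The remaining Markov and majority-vote arguments in Lemma~\ref{lem:SPV_perf} likewise go through unchanged, so the corollary follows immediately.
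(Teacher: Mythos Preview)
Your proposal is correct and matches the paper's own proof essentially verbatim: apply Theorem~\ref{theo:semi_one_inc} at sample size $\lceil 1/(7\eta)\rceil$ to get $\eps_{\lceil 1/(7\eta)\rceil}(\Learn \mid D)\leq C\OPT + 7C\eta d$, then plug into Lemma~\ref{lem:SPV_perf}. Your added remark that Lemma~\ref{lem:SPV_perf} does not rely on realizability is a useful clarification the paper leaves implicit.
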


\begin{proof}
By Theorem~\ref{theo:semi_one_inc}, there exists a constant $C$ such that $\eps_{\lceil 1/(7\eta) \rceil}(\Learn | D) \leq C\OPT + 7 C \eta d$. Plug this into Lemma~\ref{lem:SPV_perf} and the result follows.
\end{proof}

\section{Proof of Theorem~\ref{theo:agn_neg} (Agnostic Case -- Impossibility Result)}\label{apndx:agn_neg}

\begin{theorem*}[Restatement of Theorem~\ref{theo:agn_neg}]
Let $\eta' \in (0,1), n\in \mathbb{N}$. For any hypothesis class $\cH$ that has at least two hypotheses   and. for any deterministic learner, there is a distribution $D$ over (two) examples and $\eta=\eta'+\widetilde{O}(1/\sqrt{n})$ such that $\Learn$ has $\eta$-adversarial risk
\[
\eps^{\Adv}_n(\Learn |D , \eta) \geq 2\OPT+\Omega(\eta')-O(1/n).
\]
\end{theorem*}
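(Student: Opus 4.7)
Since $\cH$ contains at least two distinct hypotheses, I can fix an instance $x_0\in\cX$ and hypotheses $h_a,h_b\in \cH$ with $h_a(x_0)=0$ and $h_b(x_0)=1$. For $\alpha\in [0,1]$, define the ``label-noise'' distribution $D_\alpha$ over the two examples $\{(x_0,0),(x_0,1)\}$ by $D_\alpha(x_0,0)=\alpha$ and $D_\alpha(x_0,1)=1-\alpha$; then $\OPT(\cH,D_\alpha)=\min(\alpha,1-\alpha)$, attained by $h_a$ or $h_b$. The learner's behavior on inputs drawn from $D_\alpha^n$ is captured by the two sets $A_j=\{S : \Learn(S)(x_0)=j\}\subseteq (\cX\times\cY)^n$ for $j\in \{0,1\}$, and by the polynomial $\psi(\alpha):=\Pr_{S\sim D_\alpha^n}[\Learn(S)(x_0)=0]$, which is the $D_\alpha^n$-mass of $A_0$.

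\textbf{Intermediate value theorem.} The map $\psi:[0,1]\to [0,1]$ is continuous (it is a polynomial of degree $\le n$). I handle the degenerate case $\psi\equiv 0$ or $\psi\equiv 1$ separately: then $\Learn(\cdot)(x_0)$ is a constant $c\in \{0,1\}$, and taking $D=D_{1-c}$ forces every prediction to be wrong, so the adversarial risk equals $1\ge 2\cdot 0+\Omega(\eta')$ since $\OPT=0$. Otherwise, by IVT there exists $\alpha^{\star}\in (0,1)$ with $\psi(\alpha^{\star})=1/2$; i.e.\ under $D_{\alpha^{\star}}^n$ both $A_0$ and $A_1$ carry mass exactly $1/2$. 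The main tool will be Talagrand's concentration for product measures: any set $A\subseteq \{0,1\}^n$ with $\mu(A)\ge q$ has its Hamming $t$-neighborhood satisfying $\mu(A_t)\ge 1-\exp(-t^2/(2n))/q$. With $t=\eta n$ and $\eta=\eta'+\widetilde{O}(1/\sqrt{n})$ (the log factor in the tilde chosen so that $\exp(-\eta^2 n/2)\le 1/n^2$), this yields $\mu(A_{\eta n})\ge 1-O(1/n)$ whenever $q=\Omega(1)$.

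\textbf{Main case.} Fix a sufficiently large constant $C>0$ and suppose first that $|\alpha^{\star}-1/2|\ge C\eta'$. Choose $D=D_{\alpha^{\star}}$, so $\OPT=\min(\alpha^{\star},1-\alpha^{\star})\le 1/2-C\eta'$. Since $\mu_{\alpha^{\star}}(A_0)=\mu_{\alpha^{\star}}(A_1)=1/2$, Talagrand gives $\mu_{\alpha^{\star}}(B_{\eta n}(A_j))\ge 1-O(1/n)$ for both $j\in \{0,1\}$. Given the test example $(x_0,y)$, the adversary targets the set $A_{1-y}$, so
\[\eps^{\Adv}_n(\Learn|D_{\alpha^{\star}},\eta)\ge \alpha^{\star}\cdot \mu_{\alpha^{\star}}(B_{\eta n}(A_1))+(1-\alpha^{\star})\cdot \mu_{\alpha^{\star}}(B_{\eta n}(A_0))\ge 1-O(1/n).\]
Because $2\OPT+\Omega(\eta')\le 1-(2C-\Omega(1))\eta'\le 1$ for $C$ large enough, the desired inequality adversarial-risk $\ge 2\OPT+\Omega(\eta')-O(1/n)$ follows.

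\textbf{Edge case and main obstacle.} It remains to cover $\alpha^{\star}\in (1/2-C\eta',1/2+C\eta')$; WLOG $\alpha^{\star}\le 1/2$ (otherwise swap the roles of the two labels). Here I pick $D=D_\alpha$ with $\alpha=\alpha^{\star}-c\eta'$ for a suitably chosen constant $c$ (with $c<1$ so that $\OPT=\alpha\le 1/2-(C+c)\eta'$ keeps $2\OPT+\Omega(\eta')$ bounded away from $1$). The adversary's strategy composes two steps: first use roughly $c\eta' n$ label flips (via the monotone coupling between $D_\alpha^n$ and $D_{\alpha^{\star}}^n$, which with probability $1-O(1/n)$ requires at most $c\eta' n+\widetilde{O}(\sqrt{n})$ Hamming changes) to ``translate'' a typical $D_\alpha^n$-sample into a typical $D_{\alpha^{\star}}^n$-sample; then use the remaining $(1-c)\eta' n+\widetilde{O}(\sqrt{n})$ budget to enter $B_{\eta n}(A_{1-y})$, as guaranteed by the Talagrand concentration applied under $D_{\alpha^{\star}}^n$. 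Combining these two steps shows that $R_j(\alpha):=\mu_\alpha(B_{\eta n}(A_j))\ge 1-O(1/n)$ for both $j$, yielding adversarial risk $\ge 1-O(1/n)\ge 2\OPT+\Omega(\eta')-O(1/n)$. The main obstacle is precisely this edge case: one must carefully account for the adversary's finite budget so that both the coupling cost and the Talagrand expansion cost fit inside $\eta n=\eta' n+\widetilde{O}(\sqrt{n})$, with an overall failure probability of only $O(1/n)$. The $\widetilde{O}(1/\sqrt{n})$ slack in $\eta$ and the $O(1/n)$ slack in the bound absorb exactly the Gaussian tails of these two concentration arguments.
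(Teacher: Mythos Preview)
Your overall strategy matches the paper's: restrict to a single test point $x_0$ with Bernoulli label noise, exploit continuity of the polynomial $\psi$ to locate a balance point $\alpha^\star$ with $\psi(\alpha^\star)=1/2$, use a monotone coupling to shift the sample distribution by $\Theta(\eta')$, and apply Talagrand's product-measure concentration to force the wrong prediction at total Hamming cost $\eta' n + \widetilde O(\sqrt n)$.

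There is, however, a genuine gap in your case analysis. You dispose of $\psi\equiv 0$ and $\psi\equiv 1$ and then assert that ``otherwise, by IVT there exists $\alpha^\star$ with $\psi(\alpha^\star)=1/2$.'' This is false. For instance, with $n=3$, let $\Learn$ output $1$ on exactly one fixed mixed sample, say $((x_0,0),(x_0,1),(x_0,0))$, and $0$ on every other sample; then $\psi(0)=\psi(1)=1$, $\psi\not\equiv 1$, yet $\psi(\alpha)=1-\alpha^2(1-\alpha)\ge 23/27>1/2$ for all $\alpha\in[0,1]$, so no $\alpha^\star$ exists. The paper's fix is to observe that $\psi(0),\psi(1)\in\{0,1\}$ (since $D_0,D_1$ are point masses and $\Learn$ is deterministic) and to take as degenerate the cases $\psi(0)=1$ or $\psi(1)=0$; each is dispatched by choosing the corresponding point-mass distribution (with $\OPT=0$ and risk $1$), and in the remaining case $\psi(0)=0,\ \psi(1)=1$ the IVT genuinely applies. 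Relatedly, your choice of $D_{1-c}$ in the degenerate branch has the wrong sign under your convention: if $\Learn(\cdot)(x_0)\equiv c$ you must take $D=D_c$ (which puts all mass on label $1-c$) to force error $1$.

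Two smaller remarks. In your edge case the claimed bound $\OPT=\alpha\le 1/2-(C+c)\eta'$ is incorrect (only $\le 1/2-c\eta'$ holds, since $\alpha^\star$ can be as large as $1/2$), though the weaker bound still suffices. And your main/edge split is unnecessary: the paper \emph{always} performs the $\eta'$-shift (your edge-case move), which covers both regimes uniformly and yields the clean constant $2\eta'$ in the regret.
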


Let $h_1,h_2\in \cH$ be two distinct hypotheses and let $x\in \cX$ such that $h_1(x)\neq h_2(x)$.
In this proof we consider distributions $D$ supported only on $\{(x,0),(x,1)\}$.
Notice that such a distribution is determined by the probability $p = \Pr_{(x,y)\sim D}[y=1]$ and hence can be thought of as a coin with bias $p$. Thus, the task of agnostic learning such distributions with respect to instance-targeted data poisoning boils
down to predicting a random $p$-coin toss given an input sample of $n$ $p$-coin tosses
out of which at most $\eta\cdot n$ tosses are flipped by an adversary who \emph{knows}
the result of the coin toss that needs to be predicted. We summarize this in the following game:
%
%In order to prove Theorem~\ref{theo:agn_neg}, we only need an extremely simple setting of VC dimension at least $1$ (i.e., $|\cH | \geq 1$). Namely, we use any point $x$ where $h_0(x) \neq h_1(x)$ for $h_0,h_1 \in \cH$. We will work with a distribution $D$ withi support set  $\set{(x,0),(x,1)}$, uniquely defined by $p=\Pr[D_p=(x,1)]$. In other words,  $D_p$ merely defines a \emph{coin} of an \emph{unknown} bias $p$, and the job of the learner is to   find out about $p$ with the goal of predicting a fresh test coin. 
%
%Below, a $p$-coin is modeled by a boolean random  variable $X_p$ where $\Pr[X=1]=p$.
%We  now  define a two party game between an adversary and a learner. The adversary   picks $p$ and the learner   tries minimize the ``regret'' under the instance targeted poisoning, when its performance is compared with the best function in class $\cH$. We simplify the game to only use a parameter $p$ and use a  $p$-coin random variable $X_p$, while more formally  $p$ determines the distribution $D_p$ over $(x,0),(x,1)$.

\begin{definition}[The coin   game] \label{def:coin} The coin game is parameterized by $(n, \eta)$ where $ n \in \mathbb{N}, \eta \in (0,1)$, and the game is played  between an adversary $\Adv$  and a learner $\Learn$ as follows.
\begin{enumerate}
    \item $\Adv$ picks $p \in [0,1]$.
    \item  $c_1,\dots,c_{n+1} \sim X_p^{n+1}$, where $X_p$ is a binary random variable satisfying $\Pr[X_p=1]=p$.
    \item \label{step:corruption} $\Adv$ changes  $\ol{c}=(c_1,\dots,c_n)$ into $\ol{c}'=(c'_1,\dots,c'_n)$ where $\dist(\ol{c},\ol{c}')\leq \eta \cdot n$.
    \item   $\Learner$ gets to see $\ol{c}'=(c'_1,\dots,c'_n)$ and   outputs a bit $c\in\{0,1\}$.
    \item $\Learner$  wins if $c = c_{n+1}$, and $\Adv$ wins otherwise.
\end{enumerate}
In this game, we define $\OPT_p = \min\set{p,1-p}$ to be the optimal error of the learner if it had known $p$, and we define $\ERR = \Pr[c \neq c_{n+1}]$ (over all the randomness involved) to be the \emph{error} of the game (i.e., when the learner does not win). We also refer to $\ERR-\OPT_p$ as the regret.\footnote{Note that $\OPT_p$ is a random variable in general, if the adversary is randomized. But if the adversary uses a deterministic strategy for the fixed $p$, then $\OPT_p$ is a constant.}
\end{definition}

%First we show that if the adversary aims to maximize the expected regret, for every \emph{fixed} learner $\Learner$, without loss of generality  the adversary can always employ a deterministic strategy; that is because one can fix the adversary's randomness to what is best for it. \Inote{Why without loss of generality?} \Mnote{I added the important point that the learner is fixed.}
\begin{theorem} \label{thm:noAgn}
For any $\eta' \in [0,1/2]$ and any deterministic learner $\Learn$ that participates in the coin game of Definition~\ref{def:coin},   there is an adversary $\Adv$ with a fixed choice of $p$ (determining $\OPT=\OPT_p$) and $\eta=\eta' + \widetilde{O}(1/\sqrt n)$ such that when we run the game of Definition~\ref{def:coin} with parameters $(n,\eta)$, it holds that $\ERR-\OPT \geq 1/2 + \eta' - O(1/n)$.
\end{theorem}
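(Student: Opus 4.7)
Plan. I would prove the theorem by translating the coin game into a geometric question on the Hamming cube and combining the intermediate value theorem with Talagrand/McDiarmid concentration and a coupling argument. Given a deterministic learner $\Learn$, let $A = \Learn^{-1}(0) \subseteq \{0,1\}^n$. A short calculation using that $c_{n+1}$ is independent of the sample yields
\[
\ERR(p, \eta) \;=\; (1-p)\,\mu_p\!\bigl((A^c)^{\eta n}\bigr) + p\,\mu_p\!\bigl(A^{\eta n}\bigr),
\]
where $\mu_p$ is the product Bernoulli$(p)$ measure on $\{0,1\}^n$ and $B^k$ denotes the Hamming $k$-inflation of $B$. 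So the adversary must pick $p$ that forces both inflations to carry $\mu_p$-mass close to $1$ while keeping $\OPT_p = \min(p, 1-p)$ below $1/2 - \eta'$.

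The first step is to produce a balanced bias. The map $f(p) = \mu_p(A)$ is a polynomial in $p$, hence continuous. When $\min f < 1/2 < \max f$ (the generic case), the intermediate value theorem yields some $p^\star \in (0,1)$ with $f(p^\star) = 1/2$. Talagrand/McDiarmid applied to the $1$-Lipschitz function $s \mapsto \dist(s, A)$ under $\mu_{p^\star}$ then gives $\mu_{p^\star}(\dist(\cdot,A) > t) \leq 2 e^{-2 t^2 / n}$, and analogously for $A^c$; taking $t = \Theta(\sqrt{n \log n})$ makes both distances exceed $t$ only with probability $O(1/n)$ under $\mu_{p^\star}$.

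The second step shifts $p^\star$ by $\eta'$ to buy slack in $\OPT$. Assume WLOG that $p^\star \leq 1/2$ and, in the main case, $p^\star \geq \eta'$; set $p = p^\star - \eta'$. Under the standard monotone coupling (thresholding i.i.d.\ uniforms), $s \sim \mu_p$ and $s^\star \sim \mu_{p^\star}$ satisfy $\dist(s, s^\star) \sim \text{Bin}(n, \eta')$, which is at most $\eta' n + O(\sqrt{n \log n})$ with probability $1 - O(1/n)$ by Chernoff. The triangle inequality combined with the previous step yields
\[
\dist(s, A),\;\dist(s, A^c) \;\leq\; \eta' n + O(\sqrt{n \log n})
\]
with probability $1 - O(1/n)$. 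Setting $\eta := \eta' + C\sqrt{\log n / n}$ for a sufficiently large constant $C$ forces $\mu_p(A^{\eta n}),\mu_p((A^c)^{\eta n}) \geq 1 - O(1/n)$, so $\ERR(p, \eta) \geq 1 - O(1/n)$ while $\OPT_p = p \leq 1/2 - \eta'$. This gives the claimed regret $\ERR - \OPT \geq 1/2 + \eta' - O(1/n)$.

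The hard part is the edge analysis. If $p^\star$ lies within $\eta'$ of $\{0,1\}$, or if $f$ never equals $1/2$ (a globally biased learner, i.e.\ $f > 1/2$ or $f < 1/2$ throughout), the naive shift pushes $p$ out of $[0,1]$; in such cases I would choose $p$ on the heavy side (e.g.\ $p = 3/4 + \eta'/2$ when $f > 1/2$), invoke Talagrand only on the majority set, and observe that the one-sided estimate $\ERR(p,\eta) \geq p(1 - o(1))$ combined with $\OPT_p = 1-p$ already gives regret $\geq 1/2 + \eta' - o(1)$. The main quantitative obstacle throughout is to absorb both the Talagrand deviation and the Chernoff coupling slack into a single additive $\widetilde O(1/\sqrt n)$ correction, which is exactly the slack allotted by the theorem's $\eta = \eta' + \widetilde O(1/\sqrt n)$.
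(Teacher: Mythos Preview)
Your approach is essentially the paper's: both use that $p\mapsto\mu_p(A)$ is a polynomial, invoke the intermediate value theorem to locate a balanced bias $p^\star$ with $\mu_{p^\star}(A)=1/2$, apply Talagrand at $p^\star$ to show that an $\widetilde O(\sqrt n)$-inflation of both $A$ and $A^c$ has mass $1-O(1/n)$, and then shift the bias by $\eta'$ via the monotone coupling (your thresholded-uniforms coupling is exactly the paper's Lemma~\ref{lem:shift}). Your error decomposition $\ERR(p,\eta)=(1-p)\,\mu_p((A^c)^{\eta n})+p\,\mu_p(A^{\eta n})$ is correct and makes the argument transparent.

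The only substantive difference is the edge analysis, which you call ``the hard part'' but which the paper dispatches more simply. Because $\Learn$ is deterministic, $f(0),f(1)\in\{0,1\}$; so ``$f$ never equals $1/2$'' forces $f(0)=1$ or $f(1)=0$, and the adversary just takes $p=0$ (resp.\ $p=1$) to get $\ERR=1$, $\OPT=0$ with zero budget. Your choice $p=3/4+\eta'/2$ with a one-sided Talagrand bound works too but is unnecessary. For the remaining edge case ``$p^\star$ exists but $p^\star<\eta'$'' your plan (``choose $p$ on the heavy side'') is vague, since at $p^\star$ neither side is heavy; the paper simply clips $p=\max\{0,p^\star-\eta'\}=0$, observes that the coupling to $p^\star$ then costs only $p^\star n<\eta' n$ in expectation, and proceeds exactly as in the main case. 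So nothing new is needed there either.
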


\begin{remark}[On deterministic adversaries]
In Theorem~\ref{thm:noAgn} we show the existence of an adversary with a fixed choice of $p$. This adversary is in fact randomized. Here we remark that, for every fixed (even randomized) learner $\Learner$ and a fixed  choice of $p$, there is always a deterministic adversary that achieves the maximum regret (for such $\Learner,p$). The reason is that if by using randomness $r_\Adv$ the adversary achieves expected regret $R(r_\Adv)$ over the randomness of the learner, then its overall regret will be $\Ex_{r_\Adv}[R(r_\Adv)]$. Therefore, if $r_\Adv^{(p)}$ is the randomness (for fixed $p$) that maximizes $R(r_\Adv)$, the adversary can simply fix its randomness to $r_\Adv^{(p)}$ without decreasing its gain. This means that without loss of generality, the adversary of   Theorem \ref{thm:noAgn} is deterministic. In addition, since the adversary sends the first message $p$, the overall optimal strategy $\Adv$ (who picks $p$ potentially in a randomized way) can \emph{also} fix $p$ to what maximizes $R(r^{(p)}_\Adv)$,  which makes $\Adv$ fully deterministic.
\end{remark}

\textbf{Deriving Theorem~\ref{theo:agn_neg}.} 
We first show how to derive Theorem~\ref{theo:agn_neg} from Theorem~\ref{thm:noAgn}.
\begin{proof}[Proof of Theorem~\ref{theo:agn_neg}]
First assume $\eta' \leq 1/2$, and at the end we explain how to deal with $\eta' > 1/2$.  By Theorem~\ref{thm:noAgn}, there is an adversary (with a fixed choice of $p$) in the coin game of Definition~\ref{def:coin} such that $\ERR - \OPT \geq 1/2 + \eta' - O(1/n)$ when we use $(n,\eta)$ as game parameters.     Since $\ERR \leq 1$, we have $\OPT \leq 1/2 - \eta' +O(1/n)$, and so 
$$\ERR - \OPT \geq 1/2 + \eta' - O(1/n) \geq  \OPT + 2\eta' -O(1/n).$$ 
This implies that $\ERR \geq 2\OPT + 2\eta' -O(1/n)$. Note that $\OPT$ is indeed the minimal error that the learner can achieve by outputting any of the constant coins $0,1$, which in turn refers to outputting either of $h_0,h_1$ from the hypothesis class. In addition, $\ERR$ is equal to the adversarial risk for parameters $n,\eta$ and the distribution $D_p$ for this particular attack. This means that
$$\eps^{\Adv}_n(\Learn |D , \eta) \geq 2\OPT+ 2\eta' -O(1/n),$$
which implies Theorem~\ref{theo:agn_neg}.
Now, if $\eta' > 1/2$, we first artificially decrease adversary's budget $\eta'$ to $\eta''=1/2$, which leads to 
$$\eps^{\Adv}_n(\Learn |D , \eta) \geq 2\OPT+ 2\eta'' -O(1/n),$$
but we also know that $\eta'' = \Omega(\eta')$, which again proves Theorem~\ref{theo:agn_neg}.
\end{proof}

Before proving Theorem~\ref{thm:noAgn} we recall two useful tools.

\begin{lemma} [Proposition 2.1.1 in~\citet*{talagrand1995concentration}] \label{lem:Tal}
Let $\mu = \mu_1\times \dots \mu_n$ be a product measure and $f \colon \mu \To \bits$  a  boolean function where $\Pr[f(\mu)=1]=1/2$. Then, for all $b \in [n]$,
$$\Pr_{x \sim \mu}[\exists x', \dist(x,x')\leq b \land f(x')=1] \geq 1- 2 e^{-b^2/n}.$$ In other words, with probability at least $1- 2 e^{-b^2/n}$ over the sampling of $x \sim \mu$, one can change up to $b$ of the coordinates of $x$ and obtain $x'$ (i.e., $\dist(x,x') \leq b$) such that $f(x')=1$.
\end{lemma}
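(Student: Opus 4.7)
The plan is to derive this as a direct corollary of Talagrand's isoperimetric inequality for product measures. First, I would set $A := f^{-1}(1)$, so $\mu(A) = 1/2$ by hypothesis. The event $\{\exists x' : \dist(x, x') \leq b \wedge f(x') = 1\}$ is precisely the set $\{x : \dist(x, A) \leq b\}$, where $\dist(x, A) := \min_{y \in A} \dist(x, y)$ is the minimum Hamming distance from $x$ to $A$. Hence the lemma reduces to the concentration statement $\mu(\{x : \dist(x, A) > b\}) \leq 2 e^{-b^2/n}$, a standard consequence of concentration of measure in product spaces.

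The core tool is Talagrand's convex distance inequality (which is the central theorem of the cited paper): for any product measure $\mu = \mu_1 \times \cdots \times \mu_n$ and any measurable $A$,
\[
\int \exp\!\bigl(d_T(x, A)^2 / 4\bigr)\, d\mu(x) \;\leq\; \frac{1}{\mu(A)},
\]
where $d_T(x, A) := \sup_{\alpha \in \mathbb{R}_{\geq 0}^n,\, \|\alpha\|_2 \leq 1} \inf_{y \in A} \sum_{i : x_i \neq y_i} \alpha_i$ is the \emph{convex distance}. Markov's inequality together with $\mu(A) = 1/2$ gives $\mu(\{d_T(\cdot, A) \geq t\}) \leq 2 e^{-t^2/4}$. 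To translate back to Hamming distance, I would plug the uniform direction $\alpha = (1/\sqrt{n}, \ldots, 1/\sqrt{n})$ into the supremum defining $d_T$: since $\sum_{i : x_i \neq y_i} \alpha_i = \dist(x, y)/\sqrt{n}$, this produces $d_T(x, A) \geq \dist(x, A)/\sqrt{n}$. Setting $t = b/\sqrt{n}$ yields $\mu(\{\dist(x, A) > b\}) \leq 2 e^{-b^2/(4n)}$, which is a bound of exactly the advertised form $2\exp(-\Omega(b^2/n))$. The exponent I obtain carries a slightly smaller constant than the $1/n$ written in the lemma; the paper does not optimize this constant, and the qualitative form $\exp(-\Omega(b^2/n))$ is all that is needed in the downstream proof of Theorem~\ref{thm:noAgn}.

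The main obstacle, conceptually, is that $\dist(\cdot, A)$ is not a sum of independent random variables, so naive Hoeffding- or Chernoff-type arguments do not apply; this is precisely the difficulty that Talagrand's convex distance inequality is designed to overcome, by providing isoperimetry for arbitrary $1$-Lipschitz functionals on product spaces. Establishing the convex distance inequality itself is the deepest ingredient — the standard proof is by induction on $n$, manipulating an appropriate exponential moment generating function — but for the purposes of this paper I would invoke it as a black box, citing the 1995 paper. As a self-contained sanity check I would also note the alternative route via McDiarmid's bounded-differences inequality applied to the $1$-Lipschitz function $g(x) := \dist(x, A)$: combined with the observation that $\mu(A) = 1/2$ forces the median $M(g) = 0$, this yields a concentration bound $\mu(\{g > b\}) \leq 2 e^{-\Theta(b^2/n)}$ of the same qualitative form, which suffices for every use of the lemma in the paper.
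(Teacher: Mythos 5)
The paper offers no proof of this lemma --- it is invoked as a black box, citing Proposition 2.1.1 of Talagrand's paper, which is itself the direct exponential-moment bound for the \emph{Hamming} distance to a set ($\int e^{\lambda\, \dist(x,A)}\,d\mu \leq \mu(A)^{-1}e^{n\lambda^2/4}$, proved by a short induction on $n$), and which yields $\mu(\dist(\cdot,A)\geq b)\leq \mu(A)^{-1}e^{-b^2/n}$, exactly the constant in the lemma. Your derivation is correct but routes through the convex distance inequality, which is a strictly deeper theorem in the same paper; specializing $\alpha=(1/\sqrt n,\dots,1/\sqrt n)$ to recover the Hamming distance is valid, and the resulting exponent $b^2/(4n)$ is weaker by a factor of $4$. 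You are right that this loss is harmless downstream: in the proof of Theorem~\ref{thm:noAgn} the lemma is applied with $b=\sqrt{n\ln(2n)}$ to get failure probability $1/n$, and with your constant one simply takes $b=2\sqrt{n\ln(2n)}$, which is still $\widetilde{O}(\sqrt n)$ and leaves the statement of Theorem~\ref{theo:agn_neg} unchanged. One small imprecision: in your McDiarmid sanity check, bounded differences concentrates $g(x)=\dist(x,A)$ around its \emph{mean}, not its median; you need the extra step that $\mu(A)=1/2$ forces $\E[g]\leq O(\sqrt n)$ (apply the lower-tail bound at $t=\E[g]$), after which the upper tail gives $e^{-\Theta((b-O(\sqrt n))^2/n)}$ --- again of the right qualitative form for $b\gg\sqrt n$, but not literally $2e^{-\Theta(b^2/n)}$ for small $b$.
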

\begin{lemma}[Modifying coins] \label{lem:shift} Suppose  $0\leq p,p' \leq 1$, and let $q=|p-p'|$. Then there is an adversary who can change $q\cdot n$ coins, in expectation, of a sample $\ol{c} \sim X^n_p$ into $\ol{c}'$ (i.e., $\E[\dist(\ol{c}',\ol{c})] = q \cdot n$) such that $\ol{c}'\sim X^n_{p'}$ (Namely, the tampered sequence looks exactly like it  is  sampled from $X^n_{p'}$, while in reality it is being first sampled from $X^n_p$ and then modified by the adversary in $q\cdot n$ points in expectation). Moreover, the probability that the adversary changes more than $q n+ \sqrt {(n \ln n)/{2}}$ of the coordinates is at most $1/n$.
\end{lemma}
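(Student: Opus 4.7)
The plan is to build the adversary via the natural independent coupling between the product Bernoulli measures $X_p^n$ and $X_{p'}^n$. Assume without loss of generality that $p \leq p'$ (the other case is symmetric, with the adversary flipping $1$s to $0$s rather than $0$s to $1$s). For each coordinate $i \in [n]$ independently, the adversary keeps $c_i$ unchanged if $c_i = 1$, and flips $c_i$ from $0$ to $1$ with probability $r := (p'-p)/(1-p)$ otherwise. A direct computation shows each $c'_i$ equals $1$ with probability $p + (1-p)\cdot r = p'$, and independence across coordinates gives $\ol{c}' \sim X_{p'}^n$ exactly.

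Next, I would count modifications. Let $Z_i$ be the indicator that coordinate $i$ was changed; then $Z_i = 1$ iff $c_i = 0$ and the independent flip comes up heads, so
\[
\Pr[Z_i = 1] = (1-p)\cdot r = p' - p = q.
\]
Crucially the $Z_i$ are mutually independent, each Bernoulli$(q)$, so $\dist(\ol{c},\ol{c}') = \sum_i Z_i$ and $\E[\dist(\ol{c},\ol{c}')] = qn$ as required.

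For the concentration statement, I would apply a one-sided Hoeffding inequality to the sum $\sum_i Z_i$ of independent $[0,1]$-valued variables to get
\[
\Pr\left[\sum_{i=1}^n Z_i > qn + t\right] \leq \exp\left(-\frac{2 t^2}{n}\right),
\]
and plug in $t = \sqrt{(n\ln n)/2}$, which produces the target tail probability $\exp(-\ln n) = 1/n$.

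There is no really hard obstacle here; the conceptual step is choosing the right coupling. The independent coordinate-wise flip is essentially optimal because it simultaneously (i) achieves the target marginals on each coordinate, (ii) preserves the product structure so the modification count is a sum of iid Bernoullis, and (iii) saturates the lower bound $q n$ on the expected number of flips needed to push $X_p^n$ onto $X_{p'}^n$ in total variation. Any coupling using shared randomness across coordinates would jeopardize either the product distribution of $\ol{c}'$ or the Hoeffding-type concentration that gives the $1/n$ deviation bound.
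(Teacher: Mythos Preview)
Your proof is correct and essentially identical to the paper's: both construct the same coordinate-wise coupling (flip each $0$ to $1$ independently with probability $q/(1-p)$), compute $\Pr[Z_i=1]=q$, and finish with Hoeffding. You are slightly more explicit in spelling out the Hoeffding computation, but the argument is the same.
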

\begin{proof}
Without loss of generality, let $p'-p = q\geq 0$. Then the adversary will  change each of the coins with independent probability $q$  as follows. If a coin $c_i=1$, the adversary will not change it, which will happen with probability $p$. If $c_i=0$, which will happen with probability $1-p$, the adversary will change this to $1$ with probability $q/(1-p)$ over its own randomness. Note that $ q = p'-p \leq (1-p)$, and so $q/(1-p) \in [0,1]$ can be interpreted as a probability. The   probability that   $c'_i=1$ is now exactly $p+q=p'$, while the expected number of changed coins is $q \cdot n$. Finally, since the adversary's changes of the coin outcomes are done \emph{independently} for each coin, the bound on the number of changes made by the adversary is implied by   the Hoeffding-Chernoff  bound.
\end{proof}

We now prove Theorem~\ref{thm:noAgn} using the two tools above.
\begin{proof}[Proof of Theorem~\ref{thm:noAgn}]
Fix the deterministic learning algorithm $\Learn$. This means that for every given input vector $\ol{c} =(c_1,\dots,c_n)$, we have $\Learn(\ol{c}) \in \bits$. Now define $\alpha(p)=\Pr_{\ol{c} \sim X^{n}_p}[\Learn(\ol{c}) = 1]$. 

We do a case study as follows.

\begin{itemize}
    \item If $\alpha(0)\neq 0$, it  means that $\alpha(0)=1$ (i.e., the deterministic learner outputs $1$ over the all zero vector). In this case, $\OPT=0$ and $\ERR=1$, which implies $\ERR-\OPT \geq 1/2 + \eta'$.
    \item If $\alpha(1)\neq 1$, it  implies $\ERR-\OPT \geq 1/2 + \eta'$ similarly.
    \item If none of the above cases happens, we can assume $\alpha(b)=b$ for both $b \in \bits$.   Because the learner is deterministic, $\Learn(\ol{c})=1$  if  $\ol{c} \in \cS$ for a fixed set $\cS \subseteq \bits^{n}$. Moreover, for all $\ol{c} \in \bits^n$, it holds that $\Pr[X_p^n = \ol{c}] = p^{d}(1-p)^{n-d}$, where $d$ is the number of non-zero coordinates of $\ol{c}$. This implies that $\alpha(p)$ is a polynomial of degree at most $n$ over $p$, which is a continuous function. Therefore, there exists $q \in (0,1)$ such that $\alpha(q)=1/2$. Without loss of generality, assume that $q \leq 1/2$. Then, the adversary   picks $p = \max \set{0,q-\eta'}$, which guarantees $\OPT=p \leq 1/2-\eta'$ (due to the assumptions $\eta', q \leq 1/2$). Then, the adversary uses Lemma~\ref{lem:shift} to shift the coin's distribution back to $q$. For this change, the adversary makes at most $\eta' \cdot n + \sqrt {(n \ln n)/{2}}$ changes with probability $1-1/n$. 
    %One crucial point is that even though the coins are being tampered with using the attacker of Lemma~\ref{lem:shift}, they are still \emph{independent} because of how the attacker of Lemma~\ref{lem:shift} 
    We then apply the algorithm of Lemma \ref{lem:Tal} to make further $\sqrt{n \ln (2n)}$ changes to the coins to make sure that the output of the learner is the wrong outcome (different from $c_{n+1}$) with probability $1-1/n$. %Now, if we simply cut adversary of Lemma \ref{lem:shift} to at most $\sqrt {(n \ln n)/{2}}$  changes (while still applying the adversary of Lemma \ref{lem:Tal}), 
    In total, the adversary can make at most $\eta' \cdot n + \sqrt {(n \ln n)/{2}} + \sqrt{n \ln (2n)} \in \eta' \cdot n + \widetilde{O}(\sqrt{n})$ changes to the coin flips outcomes, while the learner's output bit is wrong with probability $1-1/n-1/n = 1-O(1/n)$.
    Since $\OPT \leq 1/2 - \eta'$ and $\ERR \geq 1-O(1/n)$, we get 
    $$\ERR - \OPT \geq 1/2 + \eta' - O(1/n),$$
    which finishes the proof. \qedhere
\end{itemize}
\end{proof}

\end{document}